\definecolor{lightgreen}{rgb}{.9,1,.9}
\newcolumntype{L}[1]{>{\raggedright\arraybackslash}p{#1}}
\newcolumntype{C}[1]{>{\centering\arraybackslash}p{#1}}
\newcolumntype{R}[1]{>{\raggedleft\arraybackslash}p{#1}}
\theoremstyle{plain} 
\newtheorem{definition}{Definition}
\newtheorem{theorem}{Theorem}
\newtheorem{lemma}{Lemma}
\newtheorem{assumption}{Assumption}
\def\defn{\,\coloneqq\,}
\def\Fix{{\mathsf{Fix}}}
\def\Zer{{\mathsf{Zer}}}
\def\Im{{\mathsf{Im}}}
\def\prox{{\mathsf{prox}}}
\def\proj{{\mathsf{proj}}}
\def\min{\mathop{\mathsf{min}}}
\def\R{\mathbb{R}}
\def\zerobm{{\bm{0}}}
\def\ebm{{\bm{e}}}
\def\hbm{{\bm{h}}}
\def\xbm{{\bm{x}}}
\def\zbm{{\bm{z}}}
\def\ybm{{\bm{y}}}
\def\zbm{{\bm{z}}}
\def\sbm{{\bm{s}}}
\def\Abm{{\bm{A}}}
\def\Dbm{{\bm{D}}}
\def\Pbm{{\bm{P}}}
\def\Fbm{{\bm{F}}}
\def\Lcal{{\mathcal{L}}}
\def\Xcal{{\mathcal{X}}}
\def\Dsf{{\mathsf{D}}}
\def\Gsf{{\mathsf{G}}}
\def\Wsf{{\mathsf{W}}}
\def\Nsf{{\mathsf{N}}}
\def\Isf{{\mathsf{I}}}
\def\Rsf{{\mathsf{R}}}
\def\Ssf{{\mathsf{S}}}
\def\Tsf{{\mathsf{T}}}
\def\xbmast{{\bm{x}^\ast}}
\def\xbmbar{{\overline{\bm{x}}}}
\def\zbmbar{{\overline{\bm{z}}}}
\def\xbmhat{{\widehat{\bm{x}}}}
\title{Recovery Analysis for Plug-and-Play Priors using the Restricted Eigenvalue Condition}
\author{%
  Jiaming Liu\\
  Washington University in St. Louis\\
  \texttt{jiaming.liu@wustl.edu} \\
  \And
  M. Salman Asif\\
  University of California, Riverside\\
  \texttt{sasif@ece.ucr.edu} \\
   \And
  Brendt Wohlberg\\
  Los Alamos National Laboratory\\
  \texttt{brendt@ieee.org} \\
  \And
  Ulugbek S. Kamilov \\
  Washington University in St. Louis\\
  \texttt{kamilov@wustl.edu} \\
}
\begin{document}

\maketitle

\begin{abstract}
  The \emph{plug-and-play priors (PnP)} and \emph{regularization by denoising (RED)} methods have become widely used for solving inverse problems by leveraging pre-trained deep denoisers as image priors.  While the empirical imaging performance and the theoretical convergence properties of these algorithms have been widely investigated, their recovery properties have not previously been theoretically analyzed.  We address this gap by showing how to establish theoretical recovery guarantees for PnP/RED by assuming that the solution of these methods lies near the fixed-points of a deep neural network. We also present numerical results comparing the recovery performance of PnP/RED in compressive sensing against that of recent compressive sensing algorithms based on generative models. Our numerical results suggest that PnP with a pre-trained artifact removal network provides significantly better results compared to the existing state-of-the-art methods.
\end{abstract}

\section{Introduction}


Many imaging problems---such as denoising, inpainting, and super-resolution---can be formulated as an \emph{inverse problem} involving the recovery of an image $\xbmast \in \R^n$ from noisy measurements
\begin{equation}
\ybm = \Abm\xbmast + \ebm \;,
\label{Eq:ForwardProblem}
\end{equation}
where $\Abm \in \R^{m \times n}$ is the measurement operator and $\ebm \in \R^m$ is the noise. \emph{Compressed sensing (CS)}~\cite{Candes.etal2006, Donoho2006} is a related class of
inverse problems that seek to recover a sparse vector $\xbmast$ from $m < n$ measurements. The sparse recovery is possible under certain assumptions on the measurement matrix, such as the \textit{restricted isometry property (RIP)}~\cite{Candes.etal2006} or the \emph{restricted eigenvalue condition (REC)}~\cite{Bickel.etal2009, Wainwright2019}. While traditional CS recovery relies on sparsity-promoting priors, recent work on \emph{compressed sensing using generative models (CSGM)}~\cite{Bora.etal2017} has broadened this perspective to priors specified through pre-trained generative models. CSGM has prompted a large amount of follow-up work on the design and theoretical analysis of algorithms that can leverage generative models as priors for image recovery~\cite{Shah.Hegde2018, Hegde2018, Latorre.etal2019, Hyder.etal2019}.

\emph{Plug-and-play priors (PnP)}~\cite{Venkatakrishnan.etal2013, Sreehari.etal2016} and \emph{regularization by denoising (RED)}~\cite{Romano.etal2017} are two methods related to CSGM that can also leverage pre-trained deep models as priors for inverse problems. However, unlike CSGM, the regularization in PnP/RED is not based on restricting the solution to the range of a generative model, but rather on denoising the iterates with an existing \emph{additive white Gaussian noise (AWGN)} removal method. The effectiveness of PnP/RED has been shown in a number of inverse problems~\cite{Zhang.etal2017a, Metzler.etal2018, Dong.etal2019, Zhang.etal2019, Ahmad.etal2020, Wei.etal2020}, which has prompted researchers to investigate the theoretical properties and interpretations of PnP/RED algorithms~\cite{Chan.etal2016, Meinhardt.etal2017, Buzzard.etal2017, Reehorst.Schniter2019, Ryu.etal2019, Sun.etal2018a, Tirer.Giryes2019, Teodoro.etal2019, Xu.etal2020, Sun.etal2021, Sun.etal2019b, Cohen.etal2020}.


Despite the rich literature on both PnP/RED and CSGM, the conceptual relationship between these two classes of methods has never been formally investigated. In particular, while PnP/RED algorithms enjoy computational advantages over CSGM by not requiring nonconvex projections onto the range of a generative model, they lack theoretical recovery guarantees available for CSGM. In this paper, we address this gap by presenting the first recovery analysis of PnP/RED under the assumptions of CSGM. We show that if a measurement matrix satisfies a variant of REC from~\cite{Bora.etal2017} over the range of a denoiser, then the distance of the PnP solutions to the true $\xbmast$ can be explicitly characterized. We also present conditions under which the solutions of both PnP and RED coincide, providing sufficient conditions for the exact recovery of $\xbmast$ using both methodologies. Our results highlight that the regularization in PnP/RED is achieved by giving preference to images near the \emph{fixed points} of pre-trained deep neural networks. Besides new theory, this paper also presents numerical results directly comparing the recovery performance of PnP/RED against the recent algorithms in compressed sensing from random projections and subsampled Fourier measurements. These numerical results lead to new insights highlighting the excellent recovery performance of both PnP and RED, as well as the benefit of using priors specified as pre-trained \emph{artifact removal (AR)} operators rather than AWGN denoisers.


All proofs and some technical details that have been omitted for space appear in the Supplement, which also provides more background and simulations. The code for our numerical evaluation is available at: \url{https://github.com/wustl-cig/pnp-recovery}.

\section{Background}

\textbf{Inverse problems.} A common approach to estimating $\xbmast$ in~\eqref{Eq:ForwardProblem} is to solve an optimization problem:
\begin{equation}
\label{Eq:OptimizationForInverseProblem}
\min_{\xbm \in \R^n} g(\xbm) + h(\xbm) \quad\text{with}\quad g(\xbm) = \frac{1}{2}\|\ybm-\Abm\xbm\|_2^2 \;,
\end{equation}
where $g$ is a data-fidelity term that quantifies consistency with the observed data $\ybm$ and $h$ is a regularizer that encodes prior knowledge on $\xbm$. For example, a widely-used regularizer in inverse problems is the nonsmooth \emph{total variation (TV)} function $h(\xbm) = \tau \|\Dbm\xbm\|_1$, where $\Dbm$ is the gradient operator and $\tau > 0$ is the regularization parameter~\cite{Rudin.etal1992, Bioucas-Dias.Figueiredo2007, Beck.Teboulle2009a}.

\textbf{Compressed sensing using generative models.} Generative priors have recently become popular for solving inverse problems~\cite{Bora.etal2017}, which typically require solving the optimization problem:
\begin{equation}
\min_{\zbm \in \R^k} \frac{1}{2}\|\ybm - \Abm \Wsf(\zbm)\|_2^2 \;,
\label{Eq:GANMinimization}
\end{equation}
where $\Wsf: \R^k \rightarrow \Im(\Wsf) \subseteq \R^n$ is a pre-trained generative model, such as StyleGAN-2~\cite{Karras.etal2019, Karras.etal2020}. The set $\Im(\Wsf)$ is the image set (or the range set) of the generator $\Wsf$. In the past few years, several algorithms have been proposed for solving this optimization problem~\cite{Shah.Hegde2018, Hegde2018, Latorre.etal2019, Hyder.etal2019}, including the recent algorithms \emph{PULSE}~\cite{Menon.etal2020} and \emph{intermediate layer optimization (ILO)}~\cite{Daras.etal2021} that can recover highly-realistic images. The recovery analysis of CSGM was performed under the assumption that $\Abm$ satisfies the \emph{set-restricted eigenvalue condition (S-REC)}~\cite{Bora.etal2017} over the range of the generative model:
\begin{equation}
\label{Eq:CSGMREC}
\|\Abm\xbm-\Abm\zbm\|_2^2 \geq \mu \|\xbm-\zbm\|_2^2 - \eta \quad \forall \xbm, \zbm \in \Im(\Wsf) \;,
\end{equation}
where $\mu > 0$ and $\eta \geq 0$. S-REC implies that the pairwise distances between vectors in the range of the generative model must be well preserved in the measurement space. It thus broadens the traditional notions of REC and the \emph{restricted isometry property (RIP)} in CS beyond sparse vectors~\cite{Candes.Wakin2008}.

\textbf{PnP and RED.}  PnP~\cite{Venkatakrishnan.etal2013, Sreehari.etal2016} refers to a family of iterative algorithms that are based on replacing the proximal operator $\prox_{\gamma h}$ of the regularizer $h$ within a proximal algorithm~\cite{Parikh.Boyd2014}
by a more general denoiser $\Dsf: \R^n \rightarrow \Im(\Dsf) \subseteq \R^n$, such as BM3D~\cite{Dabov.etal2007} or DnCNN~\cite{Zhang.etal2017}. For example, the widely used \emph{proximal gradient method (PGM)}~\cite{Figueiredo.Nowak2003, Daubechies.etal2004, Bect.etal2004, Beck.Teboulle2009} can be implemented as a PnP algorithm as~\cite{Kamilov.etal2017}
\begin{equation}
\label{Eq:PnPPGM}
\xbm^k = \Tsf(\xbm^{k-1}) \quad\text{with}\quad \Tsf \defn \Dsf(\Isf-\gamma \nabla g) \;,
\end{equation}
where $g$ is the data-fidelity term in~\eqref{Eq:OptimizationForInverseProblem}, $\Isf$ denotes the identity mapping, and  $\gamma > 0$ is the step size. 
Remarkably, this heuristic of using denoisers not associated with any $h$ within a proximal algorithm exhibited great empirical success~\cite{Zhang.etal2017a, Metzler.etal2018, Dong.etal2019, Zhang.etal2019, Ahmad.etal2020, Wei.etal2020} and spurred a great deal of theoretical work on PnP~\cite{Chan.etal2016, Meinhardt.etal2017, Buzzard.etal2017, Reehorst.Schniter2019, Ryu.etal2019, Sun.etal2018a, Tirer.Giryes2019, Teodoro.etal2019, Xu.etal2020, Sun.etal2021}. In particular, it has been recently shown in~\cite{Ryu.etal2019} that, when the residual of $\Dsf$ is Lipschitz continuous, PnP-PGM converges to a point in the fixed-point set of the operator $\Tsf$ that we denote $\Fix(\Tsf)$.

RED~\cite{Romano.etal2017} is a related method, inspired by PnP, for integrating denoisers as priors for inverse problems. For example, the \emph{steepest descent} variant of RED (SD-RED)~\cite{Romano.etal2017} can be summarized as
\begin{equation}
\label{Eq:SDRED}
\xbm^k = \xbm^{k-1}-\gamma \Gsf(\xbm^{k-1})\quad\text{with}\quad \Gsf \defn \nabla g + \tau(\Isf-\Dsf) \;,
\end{equation}
where $\gamma > 0$ is the step size and $\tau > 0$ is the regularization parameter. 
For a locally homogeneous $\Dsf$ that has a strongly passive and symmetric Jacobian, the solution of RED solves~\eqref{Eq:OptimizationForInverseProblem} with $h(\xbm) = (\tau/2)\xbm^\Tsf(\xbm-\Dsf(\xbm))$~\cite{Romano.etal2017, Reehorst.Schniter2019}. Subsequent work has resulted in a number of extensions of RED~\cite{Mataev.etal2019, Sun.etal2019b, Liu.etal2020, Cohen.etal2020, Xie.etal2021}. For example, it has been shown in~\cite{Sun.etal2019b} that, when $\Dsf$ is a nonexpansive operator, SD-RED converges to a point in the zero set of operator $\Gsf$ that we denote as $\Zer(\Gsf)$.

\textbf{Other related work.} While not directly related to our main theoretical contributions, it is worth briefly mentioning other important related families of algorithms that also use deep neural nets for regularizing ill-posed imaging inverse problems (see recent reviews of the area~\cite{McCann.etal2017, Lucas.etal2018, Knoll.etal2020, Ongie.etal2020}). This work is most related to methods that rely on pre-trained priors that are integrated within iterative algorithms, such as a class of algorithms in compressive sensing known as \emph{approximate message passing (AMP)}~\cite{Tan.etal2015, Metzler.etal2016, Metzler.etal2016a, Fletcher.etal2018}. Another related family of algorithms are those based on the idea of \emph{deep unrolling} (for an overview see Section~IV-A in~\cite{Ongie.etal2020}). Inspired by LISTA~\cite{Gregor.LeCun2010}, the unrolling algorithms interpret iterations of a regularized inversion as layers of a CNN and train it end-to-end in a supervised fashion~\cite{Schmidt.Roth2014, Chen.etal2015, Yang.etal2016, zhang2018ista, Aggarwal.etal2019, Liu.etal2021}. Deep image prior \cite{Ulyanov.etal2018} and deep decoder \cite{heckel2018deep} also use neural networks as prior for images; instead of using a pre-trained generative network, they learn the parameters of the network while solving the inverse problem using the available measurements.

\section{Recovery Analysis for PnP and RED}
\label{sec:recoveryanalsysis}

We present two sets of theoretical results for PnP-PGM~\eqref{Eq:PnPPGM} using the measurement model~\eqref{Eq:ForwardProblem} and the least-squares data-fidelity term~\eqref{Eq:OptimizationForInverseProblem}.
We first establish recovery bounds for PnP under a set of sufficient conditions, and then address the relationship between the solutions of PnP and RED. The proofs of all the theorems will be provided in the Supplement. We start by discussing two assumptions that serve as sufficient conditions for our analysis of PnP.

\begin{assumption}
The residual $\Rsf \defn \Isf - \Dsf$ of the operator $\Dsf$ is bounded by $\delta$ and Lipschitz continuous with constant  $\alpha > 0$, which can be written as 
\label{As:PnPLip}
\begin{equation}
\label{Eq:ConstantsAs1}
\|\Rsf(\xbm)\|_2 \leq \delta  \quad\text{and}\quad \|\Rsf(\xbm)-\Rsf(\zbm)\|_2 \leq \alpha \|\xbm-\zbm\|_2, \quad\forall \xbm, \zbm \in \R^n \;.
\end{equation}
\end{assumption}

The rationale for stating Assumption~\ref{As:PnPLip} in terms of the residual $\Rsf$ is based on our interest in \emph{residual} deep neural nets that take a noisy or an artifact-corrupted image at the input and produce the corresponding noise or artifacts at the output. The success of residual learning in the context of image restoration is well known~\cite{Zhang.etal2017}. Prior work has also shown that Lipschitz constrained residual networks yield excellent performance without sacrificing stable convergence~\cite{Ryu.etal2019, Sun.etal2019b}.

Related assumptions have been used in earlier convergence results for PnP \cite{Chan.etal2016,Ryu.etal2019}. For example, one of the most-widely known PnP convergence results relies on the boundedness of $\Dsf$~\cite{Chan.etal2016}. The Lipschitz continuity of the residual $\Rsf$ has been used in the recent analysis of several PnP algorithms in~\cite{Ryu.etal2019}. Both of these assumptions are relatively easy to implement for deep priors. For example, the boundedness of $\Rsf$ can be enforced by simply bounding each output pixel to be within $[0, \nu]$ for images in $[0, \nu]^n \subset \R^n$ for some $\nu > 0$. The $\alpha$-Lipschitz continuity of $\Rsf$ can be enforced by using any of the recent techniques for training Lipschitz constrained deep neural nets~\cite{Ryu.etal2019, Terris.etal2020, Miyato.etal2018, Fazlyab.etal2019}. Fig.~\ref{Fig:visLipz} presents an empirical evaluation of the Lipschitz continuity of $\Rsf$ used in our simulations.

\begin{figure}[t]
\centering\includegraphics[width=13.5cm]{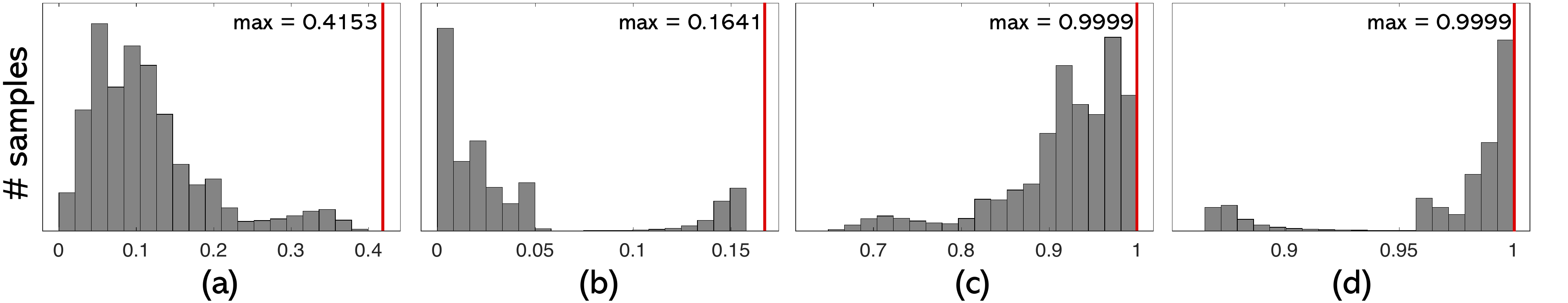}
\caption{~\emph{Empirical evaluation of the Lipschitz continuity of $\Rsf$ and $\Dsf$ used in our simulations and stated in Assumptions~\ref{As:PnPLip} and~\ref{As:LipDenAs}. As described in the main text, we trained two types of Lipschitz constrained networks, where the first simply denoises AWGN and the second removes artifacts specific to the PnP iterations. (a) and (b) show the histograms of $\|\Rsf(\xbm) - \Rsf(\zbm)\|_2/\|\xbm - \zbm\|_2$ for the denoiser and the artifact-removal operator, respectively. (c) and (d) show the histograms of $\|\Dsf(\xbm)-\Dsf(\zbm)\|_2/\|\xbm-\zbm\|_2$ for the same two operators. Note the empirical nonexpansiveness of $\Dsf$ despite the fact that Lipschitz continuity was only imposed on the residual $\Rsf$ during training.}}
\label{Fig:visLipz}
\end{figure}

\begin{assumption}
\label{As:PnPSREC}
The measurement operator $\Abm \in \R^{m \times n}$ satisfies the set-restricted eigenvalue condition (S-REC) over $\Im(\Dsf) \subseteq \R^n$ with $\mu > 0$, which can be written as  
\begin{equation}
\|\Abm\xbm-\Abm\zbm\|_2^2 \geq \mu \|\xbm-\zbm\|_2^2, \quad \forall \xbm, \zbm \in \Im(\Dsf) \;.
\end{equation}
\end{assumption}

S-REC in Assumption~\ref{As:PnPSREC} was adopted from the corresponding assumption for CSGM stated in~\eqref{Eq:CSGMREC}, which establishes a natural conceptual link between those two classes of methods. The main limitation of Assumption~\ref{As:PnPSREC}, which is also present in the traditional RIP/REC assumptions for compressive sensing, lies in the difficulty of verifying it for a given measurement operator $\Abm$. There has been significant activity in investigating the validity of related conditions for randomized matrices for different classes of signals~\cite{Elad2010, Foucart.Rauhut2013, Vershynin2018, Wainwright2019}, including for those synthesized by generative models~\cite{Bora.etal2017, Shah.Hegde2018, Hyder.etal2019, Daras.etal2021}. Despite this limitation, Assumption~\ref{As:PnPSREC} is still conceptually useful as it allows us to relax the strong convexity assumption used in the convergence analysis in~\cite{Ryu.etal2019} by stating that it is sufficient for the strong convexity to hold \emph{only} over the image set $\Im(\Dsf)$ rather than over the whole $\R^n$. This  suggests a new research direction for PnP on designing deep priors with range spaces restricted to satisfy S-REC for some $\Abm$. In the Supplement, we present an empirical evaluation of $\mu$ for the measurement operators used in our experiments by sampling from $\Im(\Dsf)$.

Consider the set $\Fix(\Dsf) \defn \{\xbm \in \R^n : \xbm = \Dsf(\xbm)\}$ of the \emph{fixed points} of $\Dsf$. Note that $\Fix(\Dsf)$ is equivalent to the set $\Zer(\Rsf) \defn \{\xbm \in \R^n : \Rsf(\xbm) = \zerobm\}$ of the \emph{zeros} of the residual $\Rsf = \Isf - \Dsf$. Intuitively, $\Zer(\Rsf)$ consists of all images that produce no residuals, and therefore can be interpreted as the set of all \emph{noise-free} images according to the network. Similarly, when $\Rsf$ is trained to predict artifacts in an image, $\Zer(\Rsf)$ is the set of images that are \emph{artifact-free} according to $\Rsf$. In the subsequent analysis, we use the notation $\Zer(\Rsf)$, but these results can be equivalently stated using $\Fix(\Dsf)$.

We first state the PnP recovery in the setting where there is no noise and $\xbmast \in \Zer(\Rsf)$.
\begin{theorem}
\label{Thm:IdealizedCS}
Run PnP-PGM for $t \geq 1$ iterations under Assumptions~\ref{As:PnPLip}-\ref{As:PnPSREC} for the problem~\eqref{Eq:ForwardProblem} with no noise and $\xbmast \in \Zer(\Rsf)$. Then, the sequence $\xbm^t$ generated by PnP-PGM satisfies
\begin{equation}
\|\xbm^t - \xbmast\|_2 \leq c \|\xbm^{t-1}-\xbmast\|_2 \leq c^t \|\xbm^0-\xbmast\|_2 \;,
\end{equation}
where $\xbm^0 \in \Im(\Dsf)$ and $c \defn (1+\alpha)\max\{|1-\gamma \mu|, |1-\gamma \lambda|\}$ with $\lambda \defn \lambda_{\mathsf{max}}(\Abm^\Tsf\Abm)$.
\end{theorem}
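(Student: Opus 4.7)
The plan is to establish a per-iteration contraction $\|\xbm^k - \xbmast\|_2 \leq c\,\|\xbm^{k-1} - \xbmast\|_2$ and then iterate. First I would verify that $\xbmast$ is a fixed point of $\Tsf$. In the noiseless setting $\ybm = \Abm\xbmast$, so the gradient $\nabla g(\xbmast) = \Abm^\Tsf(\Abm\xbmast - \ybm) = \zerobm$; combined with $\xbmast \in \Zer(\Rsf) = \Fix(\Dsf)$, this gives $\Tsf(\xbmast) = \Dsf(\xbmast - \gamma\nabla g(\xbmast)) = \Dsf(\xbmast) = \xbmast$. Therefore I can write
\begin{equation*}
\xbm^k - \xbmast = \Tsf(\xbm^{k-1}) - \Tsf(\xbmast) = \Dsf(\ubm^{k-1}) - \Dsf(\ubm^*),
\end{equation*}
with $\ubm^{k-1} \defn \xbm^{k-1} - \gamma\nabla g(\xbm^{k-1})$ and $\ubm^* \defn \xbmast - \gamma\nabla g(\xbmast) = \xbmast$.

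Next I would exploit the residual decomposition $\Dsf = \Isf - \Rsf$ together with Assumption~\ref{As:PnPLip}. The triangle inequality plus the $\alpha$-Lipschitz continuity of $\Rsf$ yields
\begin{equation*}
\|\xbm^k - \xbmast\|_2 \leq \|\ubm^{k-1} - \ubm^*\|_2 + \|\Rsf(\ubm^{k-1}) - \Rsf(\ubm^*)\|_2 \leq (1+\alpha)\,\|\ubm^{k-1} - \ubm^*\|_2.
\end{equation*}
A direct computation then gives $\ubm^{k-1} - \ubm^* = (\Isf - \gamma\Abm^\Tsf\Abm)(\xbm^{k-1} - \xbmast)$, so the remaining task is to control the action of the symmetric operator $\Isf - \gamma\Abm^\Tsf\Abm$ on the displacement $v \defn \xbm^{k-1} - \xbmast$.

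To bound this action I would note that $\xbm^{k-1} \in \Im(\Dsf)$ (by hypothesis for $k=1$ since $\xbm^0 \in \Im(\Dsf)$, and automatically for $k \geq 2$ since $\xbm^{k-1}$ is an output of $\Dsf$) and $\xbmast \in \Zer(\Rsf) \subseteq \Im(\Dsf)$, so Assumption~\ref{As:PnPSREC} applies to $v$ and gives $v^\Tsf\Abm^\Tsf\Abm v \geq \mu\|v\|_2^2$. Combined with the global upper bound $v^\Tsf\Abm^\Tsf\Abm v \leq \lambda\|v\|_2^2$, I would expand
\begin{equation*}
\|(\Isf - \gamma\Abm^\Tsf\Abm)v\|_2^2 = \|v\|_2^2 - 2\gamma\, v^\Tsf\Abm^\Tsf\Abm v + \gamma^2\|\Abm^\Tsf\Abm v\|_2^2,
\end{equation*}
use $\|\Abm^\Tsf\Abm v\|_2^2 \leq \lambda\, v^\Tsf\Abm^\Tsf\Abm v$ (which follows from $\Abm^\Tsf\Abm \preceq \lambda\Isf$), and argue that the resulting quadratic in the Rayleigh quotient of $\Abm^\Tsf\Abm$ along $v$ is maximized at one of the two endpoints $\mu$ or $\lambda$. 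This yields $\|(\Isf - \gamma\Abm^\Tsf\Abm)v\|_2 \leq \max\{|1-\gamma\mu|,|1-\gamma\lambda|\}\|v\|_2$. Chaining this bound with the $(1+\alpha)$ factor produces the one-step contraction, and the geometric decay $\|\xbm^t - \xbmast\|_2 \leq c^t\|\xbm^0 - \xbmast\|_2$ follows immediately by induction.

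The main obstacle is this last operator-norm step: the S-REC only controls the quadratic form $v^\Tsf\Abm^\Tsf\Abm v$ for $v$ that are differences of points in $\Im(\Dsf)$, rather than giving a full spectral bound, so some care is required to argue that the $\max\{|1-\gamma\mu|,|1-\gamma\lambda|\}$ rate is the correct envelope once the cross-term is handled via $\Abm^\Tsf\Abm \preceq \lambda\Isf$. Everything else is bookkeeping: identifying $\xbmast$ as a fixed point, splitting the denoiser into identity plus residual, and invoking the Lipschitz bound from Assumption~\ref{As:PnPLip}.
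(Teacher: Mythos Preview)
Your approach is essentially the paper's: both verify $\xbmast\in\Fix(\Tsf)$ from $\nabla g(\xbmast)=\zerobm$ and $\xbmast\in\Fix(\Dsf)$, factor $\Tsf$ into the denoiser step and the gradient step, bound the former by $(1{+}\alpha)$ via the $\alpha$-Lipschitz residual (this is the paper's Lemma~\ref{Sup:Lem:LipschitzDenoiser}, drawn from \cite{Ryu.etal2019}), bound the latter by $\max\{|1-\gamma\mu|,|1-\gamma\lambda|\}$ over $\Im(\Dsf)$, and iterate. The paper packages the gradient-step bound as a separate lemma (Lemma~\ref{Sup:Lem:ContracGradStep}) whose proof it simply defers to a ``simple modification'' of Lemma~7 in \cite{Ryu.etal2019}; you instead sketch a direct argument and, appropriately, flag it as the delicate part.

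One caution on that direct argument: after substituting $\|\Abm^\Tsf\Abm v\|_2^2\le\lambda\,v^\Tsf\Abm^\Tsf\Abm v$, the resulting upper bound is \emph{linear}, not quadratic, in the Rayleigh quotient $q=v^\Tsf\Abm^\Tsf\Abm v/\|v\|_2^2$, and at the endpoint $q=\mu$ it evaluates to $1-2\gamma\mu+\gamma^2\mu\lambda$, which generally exceeds $(1-\gamma\mu)^2$. So this particular route yields a contraction but not the exact constant $c$ in the statement; recovering that constant requires the sharper strong-convexity/cocoercivity inequality underlying \cite{Ryu.etal2019} rather than the one-sided bound $M^2\preceq\lambda M$ alone. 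The paper does not spell this out either---it just cites the external lemma---so your identification of this as ``the main obstacle'' is on point.
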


The proof of the theorem is available in the Supplement. Theorem~\ref{Thm:IdealizedCS} extends the theoretical analysis of PnP in~\cite{Ryu.etal2019} by showing convergence to the true solution $\xbmast$ of~\eqref{Eq:ForwardProblem} instead of the fixed points $\Fix(\Tsf)$ of $\Tsf$ in~\eqref{Eq:PnPPGM}. The condition $\xbm^0 \in \Im(\Dsf)$ can be easily enforced by simply passing any initial image through the operator $\Dsf$. One does not necessarily need $\alpha < 1$, for the convergence result in Theorem~\ref{Thm:IdealizedCS}. As shown in~\cite{Ryu.etal2019}, the coefficient $c$ in Theorem~\ref{Thm:IdealizedCS} is less than one if
\begin{equation}
\label{Eq:StepSizeBounds}
\frac{1}{\mu(1+1/\alpha)} < \gamma < \frac{2}{\lambda} - \frac{1}{\lambda(1+1/\alpha)} \;,
\end{equation}
which is possible if $\alpha < 2\mu/(\lambda-\mu)$. Since all PnP algorithms have the same fixed points~\cite{Meinhardt.etal2017, Sun.etal2018a}, our result implies that PnP can exactly recover the true solution $\xbmast$ to the inverse problem, which extends the existing theory in the literature that only shows convergence to $\Fix(\Tsf)$.

We now present a more general result that relaxes the assumptions in Theorem~\ref{Thm:IdealizedCS}.
\begin{theorem}
\label{Thm:RelaxedCS}
Run PnP-PGM for $t \geq 1$ iterations under Assumptions~\ref{As:PnPLip}-\ref{As:PnPSREC} for the problem~\eqref{Eq:ForwardProblem} with $\xbmast \in \R^n$ and $\ebm \in \R^m$. Then, the sequence $\xbm^t$ generated by PnP-PGM satisfies
\begin{equation}
\|\xbm^t - \xbmast\|_2 \leq c \|\xbm^{t-1}-\xbmast\|_2 + \varepsilon \leq c^t \|\xbm^0-\xbmast\|_2 + \frac{\varepsilon(1-c^t)}{(1-c)} \;,
\end{equation}
where $\xbm^0 \in \Im(\Dsf)$ and 
\begin{equation}
\label{Eq:PnPCSBound}
\varepsilon \defn (1+c) \left[\left(1+2\sqrt{\lambda/\mu}\right) \|\xbmast-\proj_{\Zer(\Rsf)}(\xbmast)\|_2 + 2/\sqrt{\mu} \|\ebm\|_2 + \delta(1+1/\alpha) \right]
\end{equation}
and $c \defn (1+\alpha)\max\{|1-\gamma \mu|, |1-\gamma \lambda|\}$ with $\lambda \defn \lambda_{\mathsf{max}}(\Abm^\Tsf\Abm)$.
\end{theorem}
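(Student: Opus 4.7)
The plan is to reduce Theorem~\ref{Thm:RelaxedCS} to a Theorem~\ref{Thm:IdealizedCS}-style analysis by introducing a reference point $\xbmhat \defn \proj_{\Zer(\Rsf)}(\xbmast)$, which lies in $\Zer(\Rsf) = \Fix(\Dsf) \subseteq \Im(\Dsf)$ and is at distance $\|\xbmast - \proj_{\Zer(\Rsf)}(\xbmast)\|_2$ from $\xbmast$ — precisely the model-mismatch term inside $\varepsilon$. The triangle inequality then splits the error as $\|\xbm^t - \xbmast\|_2 \leq \|\xbm^t - \xbmhat\|_2 + \|\xbmhat - \xbmast\|_2$, reducing the task to controlling the trajectory relative to the anchor $\xbmhat$.

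For the trajectory piece I would decompose $\xbm^t - \xbmhat = [\Tsf(\xbm^{t-1}) - \Tsf(\xbmhat)] + [\Tsf(\xbmhat) - \xbmhat]$. The first bracket is exactly the object handled in Theorem~\ref{Thm:IdealizedCS}: by induction $\xbm^{t-1} \in \Im(\Dsf)$, and $\xbmhat \in \Im(\Dsf)$ since it is a fixed point of $\Dsf$, so Assumption~\ref{As:PnPLip} (the $(1+\alpha)$-Lipschitz property of $\Dsf$ inherited from the residual) together with Assumption~\ref{As:PnPSREC} (S-REC on the gradient step) yields contraction by the same factor $c = (1+\alpha)\max\{|1-\gamma\mu|, |1-\gamma\lambda|\}$. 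The second bracket quantifies how much a single PnP step moves the anchor: using $\xbmhat = \Dsf(\xbmhat)$, the Lipschitz bound gives $\|\Tsf(\xbmhat) - \xbmhat\|_2 \leq (1+\alpha)\gamma \|\nabla g(\xbmhat)\|_2$, and substituting $\nabla g(\xbmhat) = \Abm^\Tsf \Abm(\xbmhat - \xbmast) - \Abm^\Tsf \ebm$ injects the noise and model-mismatch contributions into $\varepsilon$.

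To recover the specific coefficients $(1 + 2\sqrt{\lambda/\mu})$ on $\|\xbmast - \xbmhat\|_2$ and $2/\sqrt{\mu}$ on $\|\ebm\|_2$, crude operator-norm estimates are not tight enough; I would refine them using the lower-bound form of S-REC, namely $\|\vbm\|_2 \leq (1/\sqrt{\mu})\|\Abm\vbm\|_2$ for any $\vbm$ that is a difference of elements of $\Im(\Dsf)$. Routing the gradient-norm estimates through measurement residuals in this way exchanges $\lambda$ for $\sqrt{\lambda/\mu}$ and $\sqrt{\lambda}$ for $1/\sqrt{\mu}$. The $\delta(1+1/\alpha)$ term should absorb contributions from $\Rsf$ evaluated at iterates and their gradient-shifts: the $\delta$ factor comes from the uniform bound $\|\Rsf(\xbm)\|_2 \leq \delta$, while the $1/\alpha$ factor appears when inverting an $\alpha$-Lipschitz estimate of $\Rsf$ against its boundedness. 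Collecting the three contributions and gathering all non-contracting terms yields the one-step recursion $\|\xbm^t - \xbmast\|_2 \leq c\|\xbm^{t-1} - \xbmast\|_2 + \varepsilon$.

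Finally, iterating the recursion gives the second inequality via the standard geometric-series identity $\sum_{k=0}^{t-1} c^k = (1-c^t)/(1-c)$. The main obstacle is not the overall architecture, which is standard, but the careful bookkeeping that converts operator-norm bounds into the S-REC-aware form of $\varepsilon$ so that noise and model-mismatch enter with $1/\sqrt{\mu}$ and $\sqrt{\lambda/\mu}$ rather than $\sqrt{\lambda}$ and $\lambda$; this is what makes $\varepsilon$ small in the well-conditioned regime and ultimately what allows the bound to serve as a recovery guarantee rather than merely a convergence statement.
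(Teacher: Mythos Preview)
Your high-level architecture (triangle inequality, contraction of $\Tsf$ on $\Im(\Dsf)$, bounded drift of the anchor, geometric sum) is sound and does yield \emph{a} recovery bound. However, the route you sketch will not produce the specific $\varepsilon$ stated in the theorem, and two of your ``refinements'' are not actually available.

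The paper anchors not at $\xbmhat = \proj_{\Zer(\Rsf)}(\xbmast)$ but at a fixed point $\xbmbar \in \Fix(\Tsf)$. The argument is: first $\|\xbm^t - \xbmbar\|_2 \leq c\|\xbm^{t-1} - \xbmbar\|_2$ by contraction; then $\|\xbmbar - \xbmast\|_2 \leq \|\xbmbar - \xbmhat\|_2 + \|\xbmhat - \xbmast\|_2$. The crucial point is that \emph{both} $\xbmbar$ and $\xbmhat$ lie in $\Im(\Dsf)$, so S-REC legitimately gives $\|\xbmbar - \xbmhat\|_2 \leq (1/\sqrt{\mu})\|\Abm\xbmbar - \Abm\xbmhat\|_2 \leq (1/\sqrt{\mu})(\|\ybm - \Abm\xbmbar\|_2 + \|\ybm - \Abm\xbmhat\|_2)$. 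This is where the $2/\sqrt{\mu}$ and $2\sqrt{\lambda/\mu}$ coefficients come from. In your decomposition the object to control is $\|\nabla g(\xbmhat)\|_2$, which involves $\xbmhat - \xbmast$; since $\xbmast$ need not lie in $\Im(\Dsf)$, S-REC does not apply to that difference, and your proposed ``routing through measurement residuals'' cannot convert $\lambda$ into $\sqrt{\lambda/\mu}$ here.

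The $\delta(1+1/\alpha)$ term has a very specific origin that your proposal misses entirely. It comes from a separate lemma bounding the data-fidelity at the fixed point: writing the fixed-point equation as $\nabla g(\xbmbar) + (1/\gamma)\Rsf(\zbmbar) = \zerobm$ with $\zbmbar = \xbmbar - \gamma\nabla g(\xbmbar)$, restricted strong convexity gives $g(\xbmbar) \leq \min_{\xbm \in \Zer(\Rsf)} g(\xbm) + \|\Rsf(\zbmbar)\|_2^2/(2\mu\gamma^2)$, and then the bound $\|\Rsf\|_2 \leq \delta$ together with the step-size constraint $1/\gamma < \mu(1+1/\alpha)$ yields $\|\ybm - \Abm\xbmbar\|_2 \leq \min_{\xbm \in \Zer(\Rsf)}\|\ybm - \Abm\xbm\|_2 + \sqrt{\mu}\,\delta(1+1/\alpha)$. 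Nothing in your decomposition produces a term of this form: your drift bound $\|\Tsf(\xbmhat) - \xbmhat\|_2 \leq (1+\alpha)\gamma\|\nabla g(\xbmhat)\|_2$ never invokes $\delta$, and the sentence about ``inverting an $\alpha$-Lipschitz estimate of $\Rsf$ against its boundedness'' does not correspond to any concrete inequality.
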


Theorem~\ref{Thm:RelaxedCS} extends Theorem~\ref{Thm:IdealizedCS} by allowing $\xbmast$ to be outside of $\Zer(\Rsf)$ and extends the analysis in~\cite{Bora.etal2017} by considering operators $\Dsf$ that do not necessarily project onto the range of a generative model. In the error bound $\varepsilon$, the first two terms are the distance of $\xbmast$ to $\Zer(\Rsf)$ and the magnitude of the error $\ebm$, and have direct analogs in standard compressed sensing. The third term is the consequence of the possibility for the solution of PnP not being in the zero-set of $\Rsf$ and one can show that when $\Zer(\Rsf) \cap \Zer(\nabla g) \neq \varnothing$, then the third term disappears. As reported in the Supplement, we empirically verified that the distance of the PnP solution to $\Zer(\Rsf)$ is small for both the denoiser and the artifact-removal operators used in our experiments.

Our final result explicitly relates the solutions of PnP and RED. In order to obtain the result, we need an additional assumption that the denoiser $\Dsf = \Isf - \Rsf$ is nonexpansive. 
\begin{assumption}
\label{As:LipDenAs}
The denoiser $\Dsf$ is nonexpansive
$$\|\Dsf(\xbm)-\Dsf(\zbm)\|_2 \leq \|\xbm-\zbm\|_2, \quad \forall \xbm, \zbm \in \R^n \;.$$
\end{assumption}
This is related but different from Assumption~\ref{As:PnPLip} that assumes  the residual $\Rsf$ is $\alpha$-Lipschitz continuous.

The convergence of SD-RED in~\eqref{Eq:SDRED} to $\Zer(\Gsf)$ can be established for a nonexpansive operator $\Dsf$~\cite{Sun.etal2019b}. In principle, the nonexpansiveness of $\Dsf$ can be enforced during the training of the prior in the same manner as that of the more general Lipschitz continuity. However, the prior in our numerical evaluations is trained to have a contractive residual $\Rsf$ without any explicit constraints on $\Dsf$. As a reminder, the nonexpansiveness of $\Rsf$ is only a necessary (but not sufficient) condition for the nonexpansiveness of $\Dsf$~\cite{Bauschke.Combettes2017}. Despite this fact, our empirical evaluation of the Lipschitz constant of $\Dsf$ in Fig.~\ref{Fig:visLipz} indicates that $\Dsf$ used in our experiments is nonexpansive.

\begin{theorem}
\label{Thm:EquivREDPnP}
Suppose that Assumptions~\ref{As:PnPLip}-\ref{As:LipDenAs} are satisfied and that $\Zer(\nabla g) \cap \Zer(\Rsf) \neq \varnothing$, then PnP and RED have the same set of solutions: $\Fix(\Tsf) = \Zer(\Gsf)$.
\end{theorem}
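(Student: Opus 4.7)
The plan is to prove the two inclusions $\Fix(\Tsf) \subseteq \Zer(\Gsf)$ and $\Zer(\Gsf) \subseteq \Fix(\Tsf)$ separately, anchoring both arguments at a common point $\xbm^\star \in \Zer(\nabla g) \cap \Zer(\Rsf)$, whose existence is guaranteed by hypothesis. Such an $\xbm^\star$ sits a priori in both target sets, since $\Tsf(\xbm^\star) = \Dsf(\xbm^\star - \gamma\nabla g(\xbm^\star)) = \Dsf(\xbm^\star) = \xbm^\star$ and $\Gsf(\xbm^\star) = \nabla g(\xbm^\star) + \tau\Rsf(\xbm^\star) = \zerobm$. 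The two workhorse ingredients are (i) the nonexpansiveness of $\Dsf$ from Assumption~\ref{As:LipDenAs}, and (ii) Baillon--Haddad cocoercivity $\langle \xbm - \xbm^\star, \nabla g(\xbm)\rangle \geq \lambda^{-1}\|\nabla g(\xbm)\|_2^2$, which holds because $g$ is a convex quadratic with $\lambda$-Lipschitz gradient and $\nabla g(\xbm^\star) = \zerobm$.

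For $\Fix(\Tsf) \subseteq \Zer(\Gsf)$, given $\xbm \in \Fix(\Tsf)$ I would apply nonexpansiveness of $\Dsf$ to the pair $(\xbm - \gamma\nabla g(\xbm), \xbm^\star)$, square, and expand the right-hand side to obtain $2\gamma \langle \xbm - \xbm^\star, \nabla g(\xbm)\rangle \leq \gamma^2 \|\nabla g(\xbm)\|_2^2$. Combined with cocoercivity, this yields $(2\gamma/\lambda - \gamma^2)\|\nabla g(\xbm)\|_2^2 \leq 0$, which forces $\nabla g(\xbm) = \zerobm$ whenever $\gamma < 2/\lambda$---a condition strictly weaker than the step-size range in~\eqref{Eq:StepSizeBounds}. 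The fixed-point equation then collapses to $\xbm = \Dsf(\xbm)$, so $\Rsf(\xbm) = \zerobm$ and hence $\Gsf(\xbm) = \zerobm$.

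For $\Zer(\Gsf) \subseteq \Fix(\Tsf)$, given $\xbm \in \Zer(\Gsf)$ I would begin from the defining identity $\nabla g(\xbm) = -\tau \Rsf(\xbm)$. Applying nonexpansiveness of $\Dsf = \Isf - \Rsf$ to the pair $(\xbm, \xbm^\star)$ and expanding produces the sign inequality $2\langle \xbm - \xbm^\star, \Rsf(\xbm)\rangle \geq \|\Rsf(\xbm)\|_2^2 \geq 0$. Substituting $\Rsf(\xbm) = -\nabla g(\xbm)/\tau$ turns this into $\langle \xbm - \xbm^\star, \nabla g(\xbm)\rangle \leq 0$, while cocoercivity gives the reverse inequality $\langle \xbm - \xbm^\star, \nabla g(\xbm)\rangle \geq 0$. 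Both sides are therefore zero, forcing $\nabla g(\xbm) = \zerobm$ and in turn $\Rsf(\xbm) = \zerobm$. Consequently $\Tsf(\xbm) = \Dsf(\xbm - \gamma\zerobm) = \Dsf(\xbm) = \xbm$, as required.

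The delicate step, I anticipate, is the second inclusion: nonexpansiveness of $\Dsf$ (rather than firm nonexpansiveness) is all that Assumption~\ref{As:LipDenAs} supplies, so by itself it cannot pin down $\Rsf(\xbm)$. The resolution is to let the convex-quadratic structure of $g$ furnish the missing ingredient through cocoercivity, so that two one-sided inequalities squeeze $\langle \xbm - \xbm^\star, \nabla g(\xbm)\rangle$ to zero and drag $\Rsf(\xbm)$ along with it. Once that pinch is in place, both inclusions reduce both sets to the common anchor condition $\Zer(\nabla g) \cap \Zer(\Rsf)$, establishing $\Fix(\Tsf) = \Zer(\Gsf)$.
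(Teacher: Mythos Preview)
Your proof is correct. Both you and the paper ultimately show $\Fix(\Tsf) = \Zer(\nabla g)\cap\Zer(\Rsf) = \Zer(\Gsf)$, and for the $\Zer(\Gsf)$ half the arguments are essentially identical: the paper states the $(1/2)$-cocoercivity of $\Rsf$ and the $(1/\lambda)$-cocoercivity of $\nabla g$ explicitly and sums the two inner-product inequalities in one line to get $0 = \Gsf(\xbm)^\Tsf(\xbm-\xbm^\star)\geq (1/\lambda)\|\nabla g(\xbm)\|_2^2 + (\tau/2)\|\Rsf(\xbm)\|_2^2$, whereas you unpack nonexpansiveness of $\Dsf$ into the same cocoercivity inequality and then squeeze---same ingredients, slightly different bookkeeping. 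The genuine difference is in the $\Fix(\Tsf)$ half. The paper obtains $\Fix(\Tsf)=\Zer(\nabla g)\cap\Zer(\Rsf)$ by invoking a separate lemma that requires $\Tsf$ to be a \emph{contraction} over $\Im(\Dsf)$; this brings in Assumption~\ref{As:PnPSREC} (S-REC) and the full step-size window~\eqref{Eq:StepSizeBounds}. Your direct argument---nonexpansiveness of $\Dsf$ plus Baillon--Haddad---bypasses that lemma entirely and needs only $0<\gamma<2/\lambda$ and no S-REC. So your route is more elementary and proves the statement under strictly weaker hypotheses; the paper's route has the advantage of reusing machinery already built for Theorem~\ref{Thm:IdealizedCS}.
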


As a reminder, the solutions of PnP correspond to the fixed-points of the operator $\Tsf$ defined in~\eqref{Eq:PnPPGM}, while those of RED to the zeroes of the operator $\Gsf$ defined in~\eqref{Eq:SDRED}. The assumption that $\Zer(\nabla g) \cap \Zer(\Rsf) \neq \varnothing$ implies that there exist vectors that are noise/artifact free according to $\Rsf$ and consistent with the measurements $\ybm$. While this assumption is not universally applicable to all the inverse problems and priors, it still provides a highly-intuitive sufficient condition for the PnP/RED equivalence. Although the relationship between PnP and RED has been explored in the prior work~\cite{Reehorst.Schniter2019, Cohen.etal2020}, to the best of our knowledge, Theorem~\ref{Thm:EquivREDPnP} is the first to prove explicit equivalence. If one additionally considers PnP-PGM with a step size that satisfies the condition in \eqref{Eq:StepSizeBounds}, then $\Tsf$ is a contraction over $\Im(\Dsf)$, which implies that PnP-PGM converges linearly to its unique fixed point in $\Im(\Dsf)$. The direct corollary of our analysis is that, in the noiseless scenario $\ybm = \Abm\xbmast$ with $\xbmast \in \Zer(\Rsf)$, the image $\xbmast$ is the unique fixed point of both PnP and RED over $\Im(\Dsf)$.

It is worth mentioning that several of our assumptions have been stated in a way that simplifies mathematical exposition, but can alternatively be presented in a significantly weaker form. For example, the boundedness assumption in~\eqref{As:PnPLip}---which is used only in the proof of Theorem~\ref{Thm:RelaxedCS}---does not have to hold everywhere, but only at the fixed points of PnP. Indeed, as can be seen in eq.~(6) of the Supplement, the constant $\delta$ is only used to bound the norm of the residual at the fixed-point of PnP-PGM. Similarly, we do not need the nonexpansiveness of $\Dsf$ in Assumption~\ref{As:LipDenAs} to be true everywhere, but only at the fixed points of RED-SD (see page 4 of the Supplement). 

In summary, our theoretical analysis reveals that the fixed-point convergence of PnP/RED algorithms can be strengthened to provide recovery guarantees when S-REC from CSGM is satisfied. Since PnP/RED algorithms do not require nonconvex projections onto the range of a generative model, they enjoy computational benefits over methods that use generative models as priors. However, the literature on generative models is rich with theoretical bounds and recovery guarantees compared to that of PnP/RED. We believe that our work suggests an exciting new direction of research for PnP/RED by showing that a similar analysis can be carried out for PnP/RED.

\section{Numerical Evaluation}
\label{Sec:NumericalValidation}

Before presenting our numerical results, it is important to note that PnP and RED are well-known methods and it is \emph{not} our aim to claim any algorithmic novelty with respect to them. However, comparing PnP/RED to state-of-the-art \emph{compressed sensing (CS)} algorithms is of interest in the context of our theory. Our goal in this section is thus to both (a) empirically evaluate the recovery performance of PnP/RED and (b) compare their performances relative to widely-used CS algorithms.

We consider two scenarios: \emph{(a) CS using random projections} and \emph{(b) CS for magnetic resonance imaging (CS-MRI)}. In order to gain a deeper insights into performance under subsampling, we use an idealized noise-free setting; however, we expect similar relative performances under noise. For each scenario, we include comparisons with several well-established methods based on deep learning.

We consider two priors for PnP/RED: (i) an AWGN denoiser and (ii) an \emph{artifact-removal (AR)} operator trained to remove artifacts specific to the PnP iterations. We implement both priors\footnote{For additional details and code see the Supplement and the GitHub repository.} using the DnCNN architecture~\cite{Zhang.etal2017}, with its batch normalization layers removed for controlling the Lipschitz constant of the network via spectral normalization~\cite{Miyato.etal2018}. We train the denoiser as a nonexpansive residual network $\Rsf$ that predicts the noise residual from a noisy input image. Thus, $\Rsf$ satisfies the necessary condition for the nonexpansiveness of $\Dsf$. Similar to~\cite{Gilton.etal2021}, we train the AR prior by including it into a~\emph{deep unfolding} architecture that performs PnP iterations. When equipped with spectral normalization~\cite{Miyato.etal2018}, the residual $\Rsf$ of the AR operator still satisfies Lipschitz continuity assumptions and achieves superior performance compared to the denoiser (as corroborated by our results). Our implementation also relies on the scaling strategy from~\cite{Xu.etal2021} for controlling the influence of $\Dsf$ relative to $g$. The reconstruction quality is quantified using the peak signal-to-noise ratio (PSNR) in dB.

\begin{figure}[t]
\centering\includegraphics[width=13.5cm]{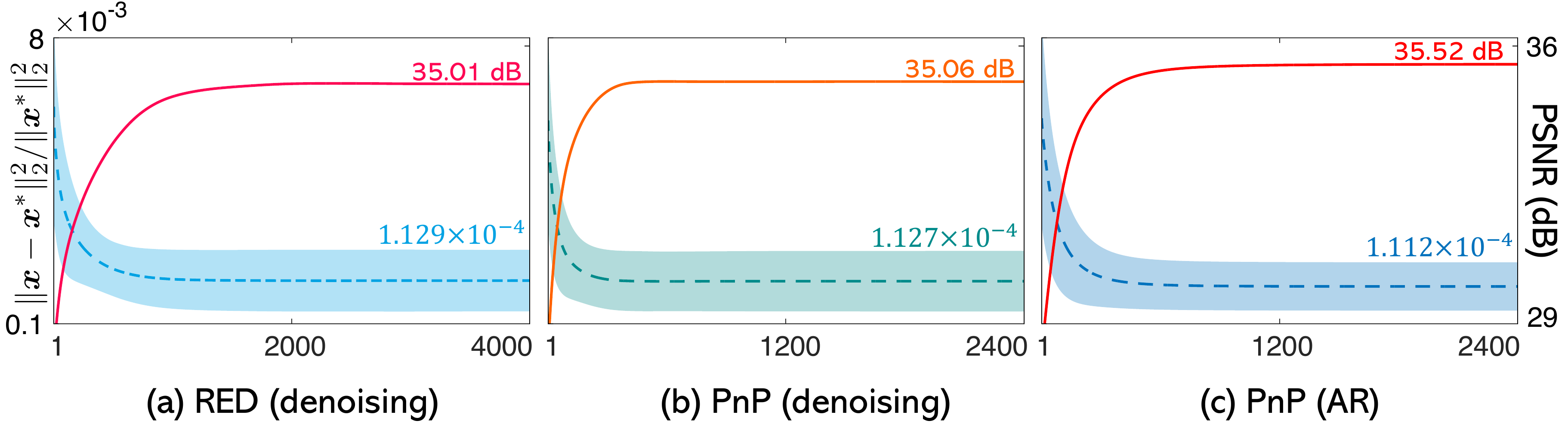}
\caption{~\emph{Empirical evaluation of the convergence of PnP/RED to the true solution $\xbmast$ using a denoiser and an artifact removal (AR) operator. Average normalized distance and PSNR relative to the true solution $\xbmast$ are plotted with the shaded areas representing the range of values attained over all test images. Note the similar recovery performance of PnP and RED, as well as the improvement in performance due to a prior trained to remove artifacts specific to PnP iterations (rather than an AWGN denoiser).}}
\label{Fig:convergence}
\end{figure}

 \begin{table}[t]
\caption{Numerical evaluation of the CS recovery in terms of PSNR (dB) on BSD68 and Set11.}
    \centering
    \renewcommand\arraystretch{1.2}
    {\footnotesize
    \scalebox{1}{
    \begin{tabular*}{13.65cm}{L{70pt}||C{30pt}lC{30pt}lC{30pt}lC{30pt}lC{30pt}lC{30pt}lC{30pt}l}
        \hline
        \multirow{2}{5em}{\diagbox[innerwidth=2.5cm]{\textbf{Method}}{\textbf{CS Ratio}}}& \multicolumn{4}{c|}{\textbf{BSD68} } & \multicolumn{4}{c}{\textbf{Set11} } \\
        \cline{2-9}
        & \textbf{  10\%}  & \textbf{  30\%}  & \textbf{  40\%} & \textbf{  50\%} & \textbf{  10\%} & \textbf{  30\%} & \textbf{  40\%} & \textbf{  50\%}  \\ \hline\hline
        \textbf{TV}       & {24.56}            & {28.61}            & {30.27}            & 31.98    & 24.47 & 30.21           & 32.29   & 34.27 \\
         \textbf{SDA~\cite{Mousavi.etal2015}}       & {23.12}            & {26.38}            & {27.41}            & 28.35    & 22.65 & 26.63           & 27.79   & 28.95 \\
         \textbf{ReconNet~\cite{Kulkarni.etal2016}}       & 24.15            & 27.53           & 29.08           & 29.86    & 24.28 & 28.74           & 30.58   & 31.50  \\
         \textbf{ISTA-Net~\cite{zhang2018ista}}      & {25.02}            & {29.93}            & {31.85}            & 33.61  & 25.80 & 32.91           & 35.36   & 37.43 \\
         \textbf{ISTA-Net$^{+}$~\cite{zhang2018ista}}    & \textcolor{black}{25.33}            & \textcolor{black}{30.34}            & {32.21}            & 34.01   & 26.64  & 33.82           &36.06   & 38.07 \\
         \cdashline{1-9}
         \textbf{RED (denoising)}      & 24.97           &  30.20          & 32.25           &  \textcolor{black}{34.39}  & 27.70 & 35.01           &37.28   & \textcolor{black}{39.26}  \\
         \textbf{PnP (denoising)}       & 25.06          & 30.31           & \textcolor{black}{32.29}           &34.35   & \textcolor{black}{27.76} & \textcolor{black}{35.06}           & \textcolor{black}{37.30}           & 39.21    \\
         \textbf{PnP (AR)}       & \textcolor{black}{\textbf{26.46}}           &  \textcolor{black}{\textbf{31.33} }            & \textcolor{black}{\textbf{ 33.18} }            & \textcolor{black}{\textbf{34.92} }   & \textcolor{black}{\textbf{ 28.98} } & \textcolor{black}{\textbf{35.53}}           &  \textcolor{black}{\textbf{$\:$37.34} }            & \textcolor{black}{\textbf{39.29} }   \\\hline
    \end{tabular*}}
    }
\label{Tab:table1}
\end{table}

\subsection{Reconstruction of Natural Images from Random Projections}
\label{Sec:ValidationNature}

We adopt a simulation setup widely-used in the CS literature, in which non-overlapping $33 \times 33$ patches of an image are measured using the same $m\times n$ random Gaussian matrix $\Abm$, whose rows have been orthogonalized~\cite{Kulkarni.etal2016, zhang2018ista}. The patches are vectorized to $n=1089$-length vectors $\xbm^*$. The training data for the denoiser is generated by adding AWGN to the images from the BSD500~\cite{Martin.etal2001} and DIV2K datasets~\cite{Agustsson.etal2017}. We pre-train several deep models as denoisers for $\sigma \in [1, 15] $, using $\sigma$ intervals of 0.5, and use the denoiser achieving the best PSNR value in each experiment. We use the same set of 91 images as in~\cite{Kulkarni.etal2016} to train the AR operators that are implemented on individual image patches at a time for the CS ratios $(m/n)$ of $\{10\%, 30\%,40\%, 50\%\}$. In order to overcome the block-artifacts in the recovered images, we implement PnP and RED regularizers over the entire image while still using the per-patch measurement model for $\nabla g$.

Our first numerical study in Fig.~\ref{Fig:visLipz} evaluates the Lipschitz continuity of our pre-trained denoisers and the AR operators by following the procedure in~\cite{Ryu.etal2019}.  We use the residual $\Rsf$ and its corresponding operator $\Dsf = \Isf - \Rsf$ and plot the histograms of $\alpha_1 = \|\Rsf(\xbm) - \Rsf(\zbm)\|_2/\|\xbm - \zbm\|_2$ and $\alpha_2 = \|\Dsf(\xbm) - \Dsf(\zbm)\|_2/\|\xbm - \zbm\|_2$ over 1160 AWGN corrupted image pairs extracted from BSD68. The maximum value of each histogram is indicated by a vertical bar, providing an empirical bound on the Lipschitz constants. Fig.~\ref{Fig:visLipz} confirms empirically that  both $\Rsf$ and $\Dsf$ are contractive operators.

Theorem~\ref{Thm:RelaxedCS} establishes that the sequence of iterates $\xbm^t$ generated by PnP-PGM converges to the true solutions $\xbmast$ up to an error term. Fig.~\ref{Fig:convergence} illustrates the convergence behavior of PnP/RED in terms of $\|\xbm^t-\xbmast\|_2^2/\|\xbmast\|_2^2$ and peak signal-to-noise ratio (PSNR) for CS with subsampling ratio of $30\%$ on Set11.  The shaded areas represent the range of values attained across all test images. The results in Fig.~\ref{Fig:convergence} are consistent with our general observation that the PnP/RED algorithms converge in all our experiments for both types of priors and achieve excellent recovery performance.

\begin{figure}[t]
\centering\includegraphics[width=13.5cm]{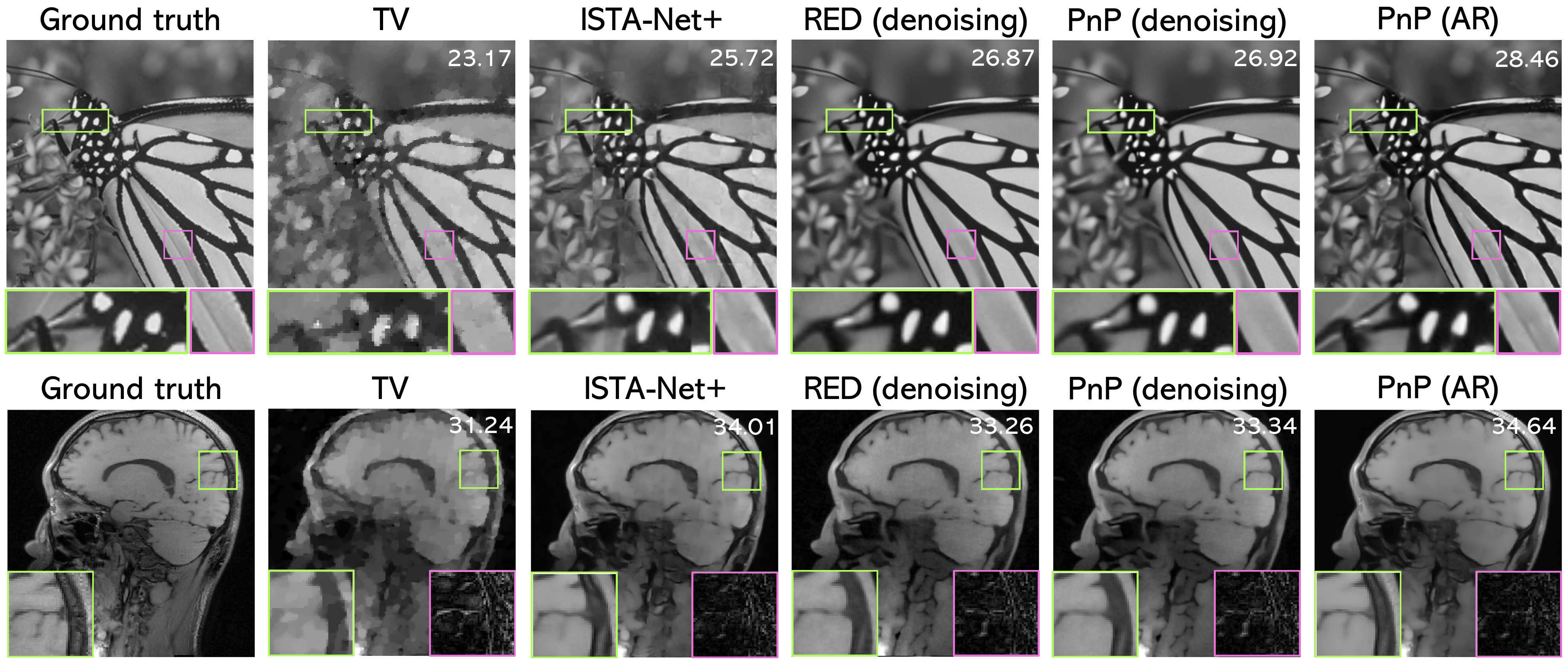}
\caption{~\emph{Visual evaluation of various compressive sensing algorithms at $10\%$ sampling on two imaging problems: (top) reconstruction of \emph{Butterfly} from Set11; (bottom) reconstruction of a brain MR image from its radial Fourier measurements. The pink box in the bottom image provides the error residual that was amplified by $10\times$ for better visualization. Note the similar performance of PnP and RED, as well as the competitiveness of both relative to other methods. Additionally, note the improvement due to the usage of an AR prior instead of an AWGN denoiser within PnP.}}
\label{Fig:visual_nature_mri}
\end{figure}

 \begin{table}[t]
\caption{Average PSNR values for various CS-MRI methods on test images from~\cite{zhang2018ista}.}
    \centering
    \renewcommand\arraystretch{1.2}
    {\footnotesize
    \scalebox{1}{
    \begin{tabular*}{11.6cm}{L{80pt}||C{48pt}lC{48pt}lC{48pt}lC{48pt}l}
        \hline
         \diagbox{\bf Method}{\bf CS Ratio}& \textbf{  10\%}  & \textbf{  20\%}  & \textbf{  30\%} & \textbf{  40\%} & \textbf{~  50\%}  \\ \hline\hline
         \textbf{TV}       & {31.36}            & {35.62}            & {38.41}            & 40.43    & 42.20  \\
         \textbf{ADMM-Net~\cite{Yang.etal2016}}      & {34.19}            & {37.17}            & {39.84}            & 41.56  & 43.00  \\
         \textbf{ISTA-Net$^{+}$~\cite{zhang2018ista}}    & \textcolor{black}{34.65}            & {38.70}            & {40.97}            & 42.65   & 44.12  \\
         \textbf{RED (denoising)}      & 34.37           & 38.63           & 40.94           &42.62   & 44.21  \\
         \textbf{PnP (denoising)}       & 34.56          & \textcolor{black}{38.74}           & \textcolor{black}{41.06}           & \textcolor{black}{42.73}   & \textcolor{black}{ 44.24}  \\
         \textbf{PnP (AR)}       & \textcolor{black}{\textbf{35.21}}           &  \textcolor{black}{\textbf{39.05} }            & \textcolor{black}{\textbf{ 41.28} }            & \textcolor{black}{\textbf{42.96} }   & \textcolor{black}{\textbf{ 44.47} }  \\\hline
    \end{tabular*}}
    }
\label{Tab:table3}
\end{table}

We also report the average PSNR values obtained by five baseline CS algorithms, namely TV~\cite{Beck.Teboulle2009a}, SDA~\cite{Mousavi.etal2015}, ReconNet~\cite{Kulkarni.etal2016}, ISTA-Net~\cite{zhang2018ista} and ISTA-Net${+}$~\cite{zhang2018ista}. TV is an iterative methods that does not require training, while the other four are all deep learning-based methods that have publicly available implementations. The numerical results on Set11 and BSD68 with respect to four measurement rates are summarized in Table~\ref{Tab:table1}. We observe that the performances of PnP and RED are nearly identical to one another. The result also highlights that PnP using the AR prior provides the best performance\footnote{We did not use RED with the AR prior in our experiments since it is expected to closely match PnP.}
compared to all the other methods, outperforming PnP using the AWGN denoiser by at least 0.57 dB on BSD68. Fig.~\ref{Fig:visual_nature_mri} (top) shows visual examples for an image from Set11. Note that both PnP and RED yield similar visual recovery performance. The enlarged regions in the image suggest that PnP (AR) better recovers the fine details and sharper edges compared to other methods.

\subsection{Image reconstruction in Compressed Sensing MRI}
\label{Sec:CSMRI}
MRI is a widely-used medical imaging technology that has known limitations due to the low speed of data acquisition. CS-MRI~\cite{Lustig.etal2007, Lustig.etal2008} seeks to recover an image $\xbmast$ from its sparsely-sampled Fourier measurements. We simulate a single-coil CS-MRI using radial Fourier sampling. The measurement operator $\Abm$ is thus $\Abm=\Pbm\Fbm$, where $\Pbm$ is the diagonal sampling matrix and $\Fbm$ is the Fourier transform.

\begin{figure}[t]
\centering\includegraphics[width=13.5cm]{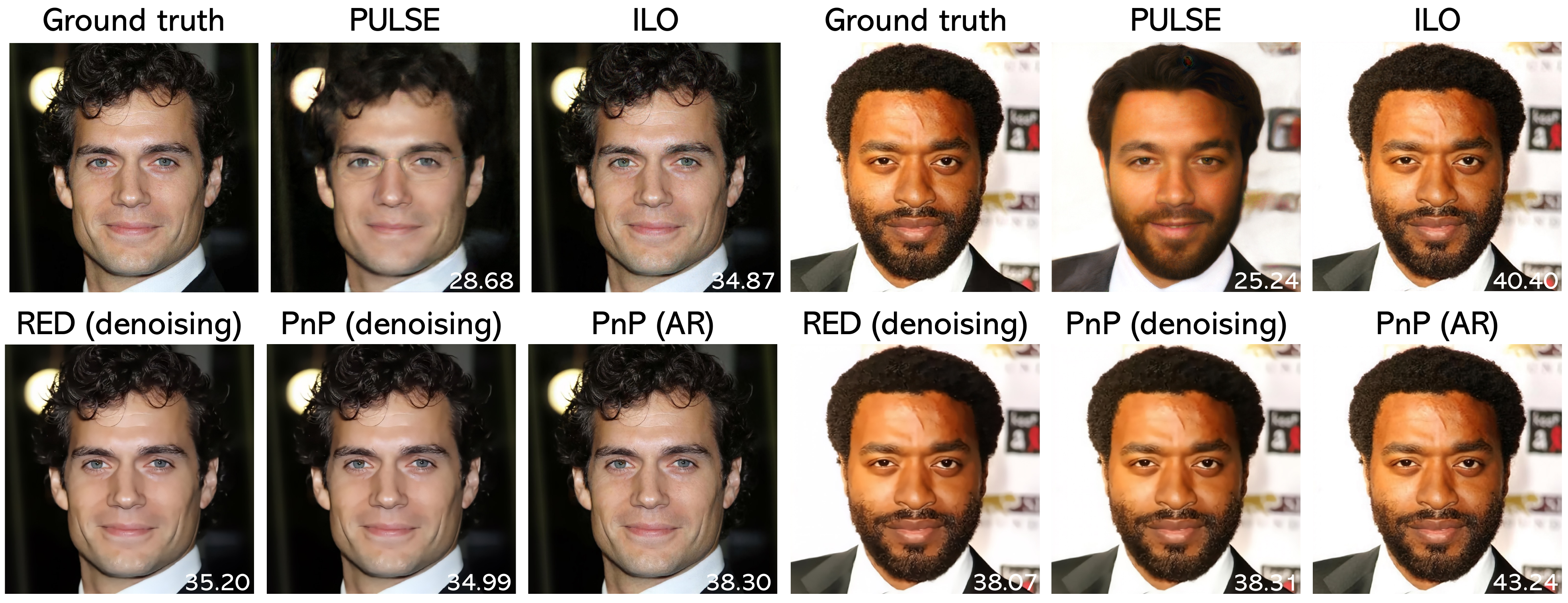}
\caption{~\emph{Visual evaluation of PnP/RED and two methods using generative models as priors on the CelebA HQ~\cite{karras.etal2018} dataset at $10\%$ CS sampling. Note the visual and quantitative similarity of PnP and RED when both are using AWGN denoisers. PnP using an artifact-removal (AR) prior visually matches the performance of ILO based on StyleGAN2, which highlights the benefit of using pre-trained AR operators within PnP. Best viewed by zooming in the display.}}
\label{Fig:vsGM}
\end{figure}

 \begin{table}[t]
\caption{Average PSNR (dB) values for several algorithms on test images from CelebA HQ.}
    \centering
    \renewcommand\arraystretch{1.2}
    {\footnotesize
    \scalebox{1}{
    \begin{tabular*}{11.6cm}{L{80pt}||C{48pt}lC{48pt}lC{48pt}lC{48pt}l}
        \hline
         \diagbox{\bf Method}{\bf CS Ratio}& \textbf{ 10\%}  & \textbf{  20\%}  & \textbf{  30\%} & \textbf{  40\%} & \textbf{  50\%}  \\ \hline\hline
         \textbf{TV}       & {32.13}            & {35.24}            & {37.41}            & 39.35    & 41.29  \\
         \textbf{PULSE~\cite{Menon.etal2020}  }      & {27.45}            & {29.98}            & {33.06}            & 34.25 & 34.77  \\
         \textbf{ILO~\cite{Daras.etal2021}}    & \textcolor{black}{36.15}            & {40.98}            & {43.46}            & 47.89   & 48.21  \\
         \textbf{RED (denoising)}      & 35.46           &\textcolor{black}{41.59}           & 45.65           & \textcolor{black}{48.13}   & 52.17  \\
         \textbf{PnP (denoising)}       & 35.61          & {41.51}           & \textcolor{black}{45.71}           & {48.05}   & \textcolor{black}{ 52.24}  \\
         \textbf{PnP (AR)}       & \textcolor{black}{\textbf{39.19}}           &  \textcolor{black}{\textbf{44.20} }            & \textcolor{black}{\textbf{ 48.66} }            & \textcolor{black}{\textbf{51.32} }   & \textcolor{black}{\textbf{ 53.89} }  \\\hline
    \end{tabular*}}
    }
\label{Tab:table2}
\end{table}

The priors for PnP/RED were trained using the brain dataset from~\cite{zhang2018ista}, where the test set contains 50 slices of $256\times256$ images (i.e., $n = 65536$). We train seven variants of DnCNN, each using a separate noise level from $\sigma \in \{1,1.5, 2, 2.5, 3, 4, 5\}$. Similarly, we separately train the AR operators for different CS ratios $(m/n)$, initializing the weights of the models from the pre-trained denoiser with $\sigma=2$.  For these sets of experiments, we also equipped PnP/RED with \emph{Nesterov acceleration}~\cite{Nesterov2004} for faster convergence. We compare PnP/RED against publicly available implementations of several well-known methods, including TV~\cite{Beck.Teboulle2009a}, ADMM-Net~\cite{Yang.etal2016}, and ISTA-Net$^{+}$~\cite{zhang2018ista}. The last two are deep unrolling methods that train both image transforms and shrinkage functions within the algorithm.

Table~\ref{Tab:table3} reports the results for five CS ratios. The visual comparison can be found in Fig.~\ref{Fig:visual_nature_mri} (bottom). It can been seen that PnP/RED with an AWGN denoiser matches the performance of ISTA-Net$^+$ and outperforms ADMM-Net at higher sampling ratios, while PnP with an AR prior improves over PnP/RED with an AWGN denoiser~\cite{Eslahi.Foi2018}. Note also the similarity of PnP and RED performances.

\subsection{Comparison with generative models on human faces}
\label{Sec:VSCSGM}

We numerically evaluated the recovery performance of PnP/RED in CS against two recent algorithms using generative models: PULSE~\cite{Menon.etal2020} and ILO~\cite{Daras.etal2021}. Similar to the measurement matrix used for grayscale images, we use orthogonalized random Gaussian matrices for sampling image blocks of size $33\times33\times3$. The test images correspond to 15 images randomly selected from CelebA HQ~\cite{karras.etal2018} dataset, each of size $1024 \times 1024$ pixels. We use the DIV2K~\cite{Agustsson.etal2017} and 200 high quality face images from FFHQ dataset~\cite{Karras.etal2019} to train the PnP/RED denoisers for color image denoising at six noise levels corresponding to $\sigma \in \{1, 2, 3, 4, 7, 10\}$.  We use the same training set to train the AR operators for CS ratios of $[10\%, 50\%]$, using the ratio intervals of $10\%$. Similar to CS-MRI, we equipped PnP/RED with Nesterov acceleration. The PSNR comparison between different methods is presented in Table~\ref{Tab:table2}. It can be seen that ILO outperforms PULSE in terms of PSNR, which is consistent with the results in~\cite{Daras.etal2021}. 
Note also how PnP/RED match or sometimes quantitively outperform ILO at high CS ratios, with PnP (AR) leading to significantly better results compared to PnP (denoising). Fig.~\ref{Fig:vsGM} provides visual reconstruction examples. Note the ILO images are sharper compared to PnP/RED with denoisers because ILO uses a state-of-the-art generative model specifically trained on face images. However, PnP (AR) achieves better PSNR and a similar visual quality as ILO.

\section{Conclusion and Future Work}\label{sec:conclusion}

The main goal of this work is to address the theoretical gap between two-widely used classes of methods for solving inverse problems, namely PnP/RED and CSGM. Motivated by the theoretical analysis of CSGM, we used S-REC to establish recovery guarantees for PnP/RED. Our theoretical results provide a new type of convergence for PnP-PGM that goes beyond a simple fixed-point convergence by showing convergence relative to the true solution. Additionally, we show the full equivalence of PnP and RED under some explicit conditions on the inverse problem. While the focus of this work is mainly theoretical, we presented several numerical evaluations that can provide additional insights into PnP/RED and their performance relative to standard methods used in compressed sensing. Empirically, we observed the similarity of PnP/RED in image reconstruction from subsampled random projections and Fourier transform. We also provided additional evidence on the suboptimality of AWGN denoisers compared to artifact-removal operators that take into account the actual artifacts within PnP iterates. 

The work presented in this paper has a certain number of limitations and possible directions for improvement. The main limitation of our theoretical analysis, which is common to all compressive sensing research, is in the difficulty of theoretically verifying S-REC for a given measurement operator. One can also consider the Lipschitz assumptions on $\Rsf$/$\Dsf$ as a limitation, since those can have a negative impact on the recovery. However, our results suggest that even with Lipschitz constrained priors, PnP/RED are competitive with widely-known CS algorithms. While PnP/RED can be implemented using non-Lipschitz-constrained priors, we expect that this will reduce their stability and ultimately hurt their recovery performances. A relatively minor limitation of our simulations is that they were performed without AWGN. One can easily re-run our code by including AWGN and we expect that the relative performances will be preserved for a reasonable amount of noise. We hope that this work will inspire further theoretical and algorithmic research on PnP/RED that will lead to extensions and improvements to our results.

\section{Broader impact}\label{sec:broaderImpacts}

This work is expected to impact the area of imaging inverse problems with potential applications to computational microscopy, computerized tomography, medical imaging, and image restoration. There is a growing need in imaging to deal with noisy and incomplete measurements by integrating multiple information sources, including physical information describing the imaging instrument and learned information characterizing the statistical distribution of the desired image. The ability to accurately solve inverse problems has the potential to enable new technological advances for imaging. These advances might lead to new imaging tools for diagnosing health conditions, understanding biological processes, or inferring properties of complex materials. Traditionally, imaging relied on linear models and fixed transforms (filtered back projection, wavelet transform) that are relatively straightforward to understand. Learning based methods, including PnP and RED, have the potential to enable new technological capabilities; yet, they also come with a downside of being much more complex. Their usage might thus lead to unexpected outcomes and surprising results when used by non-experts. While we aim to use our method to enable positive contributions to humanity, one can also imagine nonethical usage of imaging technology. This work focuses on understanding theoretical properties of imaging algorithms using learned priors, but it might be adopted within broader data science, which might lead to broader impacts that we have not anticipated.

\begin{ack}
Research presented in this article was supported by NSF awards CCF-1813910, CCF-2043134, and  CCF-2046293 and by the Laboratory Directed Research and Development program of Los Alamos National Laboratory under project number 20200061DR.

\end{ack}


\newpage 

\appendix

{\Large\textbf{Supplementary Material}}

\medskip

The mathematical analysis presented in this supplementary document builds on two distinct lines of work: (a) monotone operator theory~\cite{Bauschke.Combettes2017, Ryu.Boyd2016} and (b) compressive sensing using generative models (CSGM)~\cite{Bora.etal2017}. In Section~\ref{Sup:Sec:Theorem1}, we build on past work to prove the convergence of PnP-PGM to the true solution of the inverse problem in the absence of noise. In Section~\ref{Sup:Sec:Theorem2}, we extend the result in Section~\ref{Sup:Sec:Theorem1} to $\xbmast \in \R^n$ and $\ebm \in \R^m$ (i.e., when the signal can be arbitrary and measurements can have noise). In Section~\ref{Sup:Sec:Theorem3}, we show that PnP/RED can have the same set of solutions under some specific conditions. In Section~\ref{Sup:Sec:Backmaterial}, we provide background material useful for our theoretical analysis. Finally, in Section~\ref{Sec:TechnicalDetails}, we provide additional technical details on our implementations and simulations omitted from the main paper due to space.

The algorithmic details of PnP-PGM and SD-RED are summarized in Fig.~\ref{Fig:PnPRED}. It is important to note that it is not our intent to claim any algorithmic novelty in PnP/RED, which are well-known methods. However, there is a strong interest in understanding the theoretical properties of PnP/RED in terms of both recovery and convergence. The main contribution of this work is the development of new theoretical insights into the recovery and convergence of PnP/RED. Finally, our code, including pre-trained denoisers and AR operators, is also included in the supplementary material.

We follow the same notation in the supplement as in the main manuscript. The measurement model corresponds to $\ybm = \Abm\xbmast + \ebm$, where $\xbmast$ is the true solution and $\ebm$ is the noise. The function $g(\xbm) = \frac{1}{2}\|\ybm-\Abm\xbm\|_2^2$ denotes the data-fidelity term. The operator $\Dsf$ denotes the PnP/RED prior, which is implemented via its residual $\Rsf \defn \Isf - \Dsf$. The operator $\Tsf \defn \Dsf(\Isf-\gamma \nabla g)$  denotes the PnP update and $\Gsf \defn \nabla g + \tau \Rsf$ denotes the term used to compute RED updates.

\section{Proof of Theorem 1}
\label{Sup:Sec:Theorem1}

In this section, we prove the first of the main theoretical result in this work, namely the convergence of PnP-PGM to the true solution of the problem $\ybm = \Abm\xbmast$ when $\xbmast \in \Zer(\Rsf)$. Our analysis extends the existing convergence analysis of PnP-PGM from~\cite{Ryu.etal2019}, which proved a linear convergence of the algorithm to $\Fix(\Tsf)$. Here we extend~\cite{Ryu.etal2019} by using the fact that $\xbmast \in \Zer(\Rsf) \cap \Zer(\nabla g)$ and relaxing the assumption of strong convexity in~\cite{Ryu.etal2019} to S-REC over $\Im(\Dsf)$.

\begin{theorem}
\label{Thm:IdealizedCS}
Run PnP-PGM for $t \geq 1$ iterations under Assumptions~1-2 for the problem~(1) of the main paper with no noise and $\xbmast \in \Zer(\Rsf)$. Then, the sequence $\xbm^t$ generated by PnP-PGM satisfies
\begin{equation}
\|\xbm^t - \xbmast\|_2 \leq c \|\xbm^{t-1}-\xbmast\|_2 \leq c^t \|\xbm^0-\xbmast\|_2 \;,
\end{equation}
where $\xbm^0 \in \Im(\Dsf)$ and $c \defn (1+\alpha)\max\{|1-\gamma \mu|, |1-\gamma \lambda|\}$ with $\lambda \defn \lambda_{\mathsf{max}}(\Abm^\Tsf\Abm)$.
\end{theorem}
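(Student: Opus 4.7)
The plan is to verify that $\xbmast$ is a fixed point of the PnP-PGM operator $\Tsf$ and then show that $\Tsf$ contracts the distance to $\xbmast$ along the iterate trajectory, after which an induction closes the argument. First I would observe that with $\ybm = \Abm\xbmast$ we have $\nabla g(\xbmast) = \Abm^\Tsf(\Abm\xbmast - \ybm) = \zerobm$, and combining this with $\xbmast \in \Zer(\Rsf)$, i.e., $\Dsf(\xbmast) = \xbmast$, gives $\Tsf(\xbmast) = \Dsf(\xbmast) = \xbmast$. I would also verify the invariance $\xbm^k \in \Im(\Dsf)$ for every $k \geq 0$, which follows from $\xbm^0 \in \Im(\Dsf)$ by hypothesis together with the fact that every $\xbm^k$ with $k \geq 1$ is an image of $\Dsf$; similarly $\xbmast = \Dsf(\xbmast) \in \Im(\Dsf)$. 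Consequently, both endpoints of the error $\xbm^{k-1} - \xbmast$ lie in $\Im(\Dsf)$, which is exactly the regime where S-REC applies.

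Next I would write $\Dsf = \Isf - \Rsf$ and decompose the one-step error $\Tsf(\xbm^{k-1}) - \Tsf(\xbmast)$ into a linear piece $(\Ibm - \gamma \Abm^\Tsf\Abm)(\xbm^{k-1} - \xbmast)$ coming from the gradient step and a residual piece $\Rsf((\Isf-\gamma\nabla g)\xbm^{k-1}) - \Rsf((\Isf-\gamma\nabla g)\xbmast)$. The triangle inequality combined with the $\alpha$-Lipschitz continuity of $\Rsf$ from Assumption~\ref{As:PnPLip} bounds the residual piece by $\alpha\|(\Ibm - \gamma\Abm^\Tsf\Abm)(\xbm^{k-1} - \xbmast)\|_2$, so that
\[
\|\xbm^k - \xbmast\|_2 \leq (1+\alpha)\,\|(\Ibm - \gamma\Abm^\Tsf\Abm)(\xbm^{k-1} - \xbmast)\|_2.
\]

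The final analytic step is to estimate the linear map in the direction $\xbm^{k-1} - \xbmast$. Since this difference lies between two points of $\Im(\Dsf)$, S-REC yields $\|\Abm(\xbm^{k-1} - \xbmast)\|_2^2 \geq \mu\|\xbm^{k-1} - \xbmast\|_2^2$, while $\|\Abm(\xbm^{k-1} - \xbmast)\|_2^2 \leq \lambda\|\xbm^{k-1} - \xbmast\|_2^2$ follows from the definition of $\lambda = \lambda_{\mathsf{max}}(\Abm^\Tsf\Abm)$. These two bounds act as restricted analogues of strong convexity and smoothness of $g$ along this specific direction and, via the standard spectral argument used in the linear-rate analysis of proximal gradient on smooth strongly-convex quadratics, yield $\|(\Ibm - \gamma\Abm^\Tsf\Abm)(\xbm^{k-1} - \xbmast)\|_2 \leq \max\{|1-\gamma\mu|, |1-\gamma\lambda|\}\|\xbm^{k-1} - \xbmast\|_2$. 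Multiplying by $(1+\alpha)$ produces the claimed per-step rate $c$, and iterating the resulting one-step bound from $k=1$ to $k=t$ gives the $c^t$ decay.

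The main obstacle I anticipate is this last step: the factor $\max\{|1-\gamma\mu|, |1-\gamma\lambda|\}$ is classically obtained by diagonalising $\Abm^\Tsf\Abm$ under global strong convexity, whereas S-REC provides only a scalar quadratic-form inequality on the restricted set $\Im(\Dsf) - \Im(\Dsf)$. Carefully justifying that the spectral argument nonetheless carries through along the iterate direction, and pairing the resulting rate with the step-size window in~\eqref{Eq:StepSizeBounds} to guarantee $c < 1$, is the delicate technical point of the proof.
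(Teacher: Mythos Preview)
Your approach is essentially the paper's: verify $\xbmast\in\Fix(\Tsf)$ and that all iterates (together with $\xbmast$) lie in $\Im(\Dsf)$, then bound one PnP-PGM step by composing the $(1+\alpha)$-Lipschitz estimate on $\Dsf$ (the paper packages this as a lemma via the equivalence ``$\Rsf$ is $\alpha$-Lipschitz $\Leftrightarrow$ $(1+\alpha)^{-1}\Dsf$ is nonexpansive''; your $\Dsf=\Isf-\Rsf$ plus triangle-inequality step is an equivalent derivation) with a contraction bound on $\Isf-\gamma\nabla g$ over $\Im(\Dsf)$, and iterate. The concern you raise in your final paragraph is exactly the step the paper delegates to a separate lemma, which it states without a self-contained proof as ``a simple modification of Lemma~7 in~\cite{Ryu.etal2019}'' after noting that for the least-squares $g$ S-REC over $\Im(\Dsf)$ is equivalent to restricted strong convexity of $g$ there; the paper does not spell out how the contraction factor $\max\{|1-\gamma\mu|,|1-\gamma\lambda|\}$ survives the passage from global strong convexity to S-REC either, so you have correctly located the one nontrivial ingredient.
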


Suppose all the assumptions for Theorem~1 are true and the step size $\gamma > 0$ is selected in a way that satisfies eq.~(10) in the main paper.
First note that we have assumed that $\xbm^0 \in \Im(\Dsf)$ and we have
\begin{equation*}
\xbm^t = \Tsf(\xbm^{t-1}) = \Dsf(\xbm^{t-1}-\gamma \nabla g(\xbm^{t-1})) \in \Im(\Dsf) \;,
\end{equation*}
which implies that all the PnP-PGM iterates $\{\xbm^t\}$ are in $\Im(\Dsf)$.

Note also the following equivalences
\begin{subequations}
\label{Sup:Eq:SetNotations}
\begin{align}
\label{Sup:Eq:SetNotationsA}&\Zer(\nabla g) = \Fix(\Isf - \gamma \nabla g) =  \{\xbm \in \R^n: \nabla g(\xbm) = \zerobm\} = \{\xbm \in \R^n : \Abm\xbm = \ybm\}\\
\label{Sup:Eq:SetNotationsB}&\Zer(\Rsf) = \Fix(\Dsf) = \{\xbm \in \R^n : \Rsf(\xbm) = \zerobm\} \;,
\end{align}
\end{subequations}
where the first equality in~\eqref{Sup:Eq:SetNotationsA} is due to the following equivalence true for any $\xbm \in \R^n$ and $\gamma > 0$
\begin{equation*}
\nabla g(\xbm) = \zerobm \quad\Leftrightarrow\quad \xbm - \gamma \nabla g(\xbm) = \xbm \;.
\end{equation*}
From the assumption $\ybm = \Abm\xbmast$ with $\xbmast \in \Zer(\Rsf)$ and from~\eqref{Sup:Eq:SetNotations}, we have the following inclusions:
\begin{equation*}
\xbmast \in \Zer(\nabla g) \cap \Zer(\Rsf) \subseteq \Fix(\Tsf) \subseteq \Im(\Dsf) \subseteq \R^n \;.
\end{equation*}
From Lemma~\ref{Sup:Lem:LipschitzDenoiser} and Lemma~\ref{Sup:Lem:ContracGradStep}, we conclude that for any $\xbm, \zbm \in \Im(\Dsf)$, we have
\begin{equation*}
\|\Tsf(\xbm)-\Tsf(\zbm)\|_2 \leq c \|\xbm-\zbm\|_2 \quad\text{with}\quad c = (1+\alpha)\max\{|1-\gamma \mu|, |1 - \gamma \lambda|\} \;.
\end{equation*}
From $\Tsf$ being a contraction over $\Im(\Dsf)$ and with Lemma~\ref{Sup:Lem:UniqueComFixedPoint}, we can conclude that $\xbmast \in \Zer(\nabla g) \cap \Zer(\Rsf)$ is the unique fixed point of PnP-PGM for any $\xbm^0 \in \Im(\Dsf)$. Thus, we have that
$$\|\xbm^t - \xbmast\|_2 = \|\Tsf(\xbm^{t-1})-\Tsf(\xbmast)\|_2 \leq c\|\xbm^{t-1}-\xbmast\|_2 \leq \cdots \leq c^t \|\xbm^0-\xbmast\|_2 \;,$$
which establishes the desired result.

\begin{figure}[t]
\centering\includegraphics[width=13.5cm]{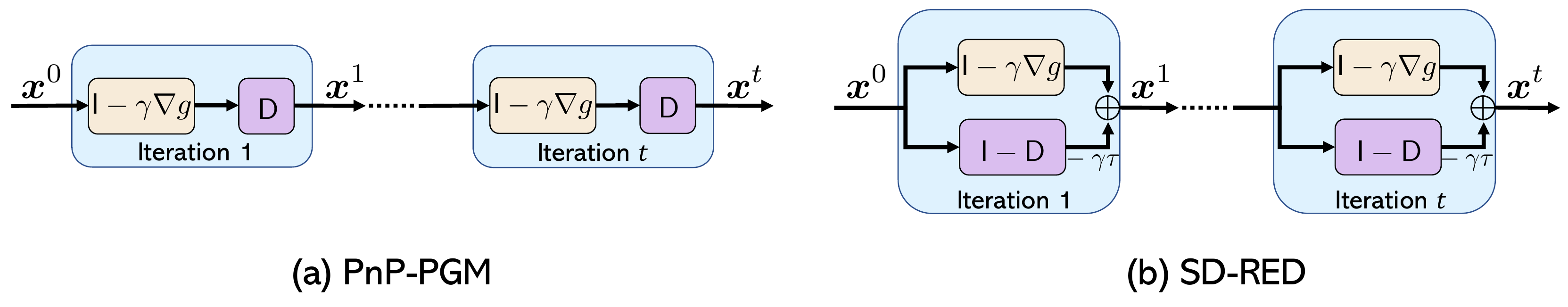}
\caption{~\emph{Algorithmic details of two optimization methods used in this work: (a) PnP-PGM and (b) SD-RED. Both algorithms are initialized with $\xbm^0$ and perform $t \geq 1$ iterations.}}
\label{Fig:PnPRED}
\end{figure}

\section{Proof of Theorem 2}
\label{Sup:Sec:Theorem2}

 In this section, we extend the analysis in Section~\ref{Sup:Sec:Theorem1} to the noisy measurement model $\ybm = \Abm\xbmast + \ebm$ where $\xbmast \in \R^n$ and $\ebm \in \R^m$. The following analysis builds on that of CSGM in~\cite{Bora.etal2017} by showing that the proof techniques used for CSGM can be also used for analyzing PnP. Note also that one can improve the error term in the recovery under an additional assumption discussed in Section~\ref{Sup:Sec:ProofLem1}.
 
 \begin{theorem}
\label{Thm:RelaxedCS}
Run PnP-PGM for $t \geq 1$ iterations under Assumptions~1-2 for the problem~(1) of the main paper with $\xbmast \in \R^n$ and $\ebm \in \R^m$. Then, the sequence $\xbm^t$ generated by PnP-PGM satisfies
\begin{equation}
\|\xbm^t - \xbmast\|_2 \leq c \|\xbm^{t-1}-\xbmast\|_2 + \varepsilon \leq c^t \|\xbm^0-\xbmast\|_2 + \frac{\varepsilon(1-c^t)}{(1-c)} \;,
\end{equation}
where $\xbm^0 \in \Im(\Dsf)$ and 
\begin{equation}
\label{Eq:PnPCSBound}
\varepsilon \defn (1+c) \left[\left(1+2\sqrt{\lambda/\mu}\right) \|\xbmast-\proj_{\Zer(\Rsf)}(\xbmast)\|_2 + 2/\sqrt{\mu} \|\ebm\|_2 + \delta(1+1/\alpha) \right]
\end{equation}
and $c \defn (1+\alpha)\max\{|1-\gamma \mu|, |1-\gamma \lambda|\}$ with $\lambda \defn \lambda_{\mathsf{max}}(\Abm^\Tsf\Abm)$.
\end{theorem}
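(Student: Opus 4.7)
\textbf{Proof plan for Theorem~\ref{Thm:RelaxedCS}.} My overall strategy is to reduce the desired inequality to two cleaner sub-tasks: (a) the linear convergence of $\xbm^t$ to the (unique) fixed point $\xbmhat$ of $\Tsf$ on $\Im(\Dsf)$, which was essentially already done in the proof of Theorem~\ref{Thm:IdealizedCS}, and (b) a purely static bound on $\|\xbmhat - \xbmast\|_2$ that produces the expression inside the brackets of $\varepsilon$. From the same two lemmas invoked for Theorem~\ref{Thm:IdealizedCS}, $\Tsf$ is $c$-contractive on $\Im(\Dsf)$, so it admits a unique fixed point $\xbmhat \in \Im(\Dsf)$ and the iterates satisfy $\|\xbm^t - \xbmhat\|_2 \le c \|\xbm^{t-1} - \xbmhat\|_2$. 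Applying the triangle inequality twice around $\xbmhat$ gives
\begin{equation*}
\|\xbm^t - \xbmast\|_2 \le c \|\xbm^{t-1} - \xbmast\|_2 + (1+c)\|\xbmhat - \xbmast\|_2,
\end{equation*}
so it suffices to show that $(1+c)\|\xbmhat - \xbmast\|_2 \le \varepsilon$, after which the second inequality in the statement follows by unrolling the recursion and summing the geometric series $\sum_{k=0}^{t-1} c^k = (1-c^t)/(1-c)$.

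The heart of the proof is therefore bounding $\|\xbmhat - \xbmast\|_2$. I would introduce the anchor $\xbmbar \defn \proj_{\Zer(\Rsf)}(\xbmast)$, which by definition satisfies $\xbmbar \in \Zer(\Rsf) = \Fix(\Dsf) \subseteq \Im(\Dsf)$, so both $\xbmhat$ and $\xbmbar$ lie in $\Im(\Dsf)$ and Assumption~\ref{As:PnPSREC} applies. Splitting $\|\xbmhat - \xbmast\|_2 \le \|\xbmhat - \xbmbar\|_2 + \|\xbmbar - \xbmast\|_2$ accounts for the leading ``$1$'' inside the coefficient $(1+2\sqrt{\lambda/\mu})$, and S-REC plus the triangle inequality yield
\begin{equation*}
\sqrt{\mu}\,\|\xbmhat - \xbmbar\|_2 \le \|\Abm\xbmhat - \ybm\|_2 + \|\Abm\xbmbar - \ybm\|_2.
\end{equation*}
The ``model mismatch'' term $\|\Abm\xbmbar - \ybm\|_2 = \|\Abm(\xbmbar - \xbmast) - \ebm\|_2 \le \sqrt{\lambda}\,\|\xbmbar - \xbmast\|_2 + \|\ebm\|_2$ is immediate from $\lambda = \lambda_{\max}(\Abm^\Tsf\Abm)$ and accounts for the $\sqrt{\lambda/\mu}$ and $\|\ebm\|_2/\sqrt{\mu}$ contributions in $\varepsilon$.

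The remaining, and hardest, piece is controlling $\|\Abm\xbmhat - \ybm\|_2$ and producing the $\delta(1 + 1/\alpha)$ term. The fixed-point identity $\xbmhat = \Dsf(\xbmhat - \gamma \nabla g(\xbmhat))$ rewrites as $\gamma \nabla g(\xbmhat) = -\Rsf(\xbmhat - \gamma \nabla g(\xbmhat))$, so Assumption~\ref{As:PnPLip} immediately gives $\|\nabla g(\xbmhat)\|_2 \le \delta/\gamma$. Combined with the exact quadratic expansion $g(\xbmhat) = g(\xbmbar) + \langle \nabla g(\xbmhat),\, \xbmhat - \xbmbar\rangle - \tfrac{1}{2}\|\Abm(\xbmhat-\xbmbar)\|_2^2$ and Cauchy--Schwarz, this yields $\|\Abm\xbmhat - \ybm\|_2^2 \le \|\Abm\xbmbar - \ybm\|_2^2 + (2\delta/\gamma)\,\|\xbmhat - \xbmbar\|_2$. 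Substituting this into the S-REC inequality produces a scalar quadratic in $\|\xbmhat - \xbmbar\|_2$ that I can solve to obtain
\begin{equation*}
\|\xbmhat - \xbmbar\|_2 \le \frac{2}{\sqrt{\mu}}\,\|\Abm\xbmbar - \ybm\|_2 + \frac{2\delta}{\gamma \mu}.
\end{equation*}
Finally, the lower bound on the admissible step size from~\eqref{Eq:StepSizeBounds}, $\gamma > 1/(\mu(1 + 1/\alpha))$, converts $\delta/(\gamma \mu)$ into a constant multiple of $\delta(1 + 1/\alpha)$, matching the third term in $\varepsilon$. The main obstacle I anticipate is precisely this last algebraic step: one has to combine S-REC (a lower eigenvalue bound) with a gradient-norm bound (a dual quantity) to extract a control on $\|\Abm\xbmhat - \ybm\|_2$ itself, and then carefully use the step-size range to land on the exact form $\delta(1 + 1/\alpha)$ advertised in $\varepsilon$. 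The other ingredients--contraction to the fixed point, triangle inequalities, and geometric-series unrolling--are routine by comparison.
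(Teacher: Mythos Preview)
Your plan mirrors the paper's proof almost exactly at the structural level: linear contraction to the PnP fixed point, triangle inequalities around that fixed point to reduce to a static bound, anchoring through $\proj_{\Zer(\Rsf)}(\xbmast)$, and S-REC plus the triangle inequality to control the distance between the fixed point and the anchor in terms of the two data-fidelity residuals. The only genuine divergence is in how you bound the data-fidelity residual at the fixed point. The paper packages this as a separate lemma and argues via the \emph{restricted strong convexity} form of S-REC: writing $g(\xbm)\ge g(\xbmhat)+\langle\nabla g(\xbmhat),\xbm-\xbmhat\rangle+\tfrac{\mu}{2}\|\xbm-\xbmhat\|_2^2$, substituting $\nabla g(\xbmhat)=-(1/\gamma)\Rsf(\xbmhat-\gamma\nabla g(\xbmhat))$, and then minimizing the right side over $\xbm\in\R^n$ yields directly $g(\xbmhat)\le \min_{\xbm\in\Zer(\Rsf)} g(\xbm)+\delta^2/(2\mu\gamma^2)$, hence $\|\Abm\xbmhat-\ybm\|_2\le\|\Abm\xbmbar-\ybm\|_2+\delta/(\sqrt{\mu}\gamma)$. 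After dividing by $\sqrt{\mu}$ in the S-REC bound and invoking $1/\gamma<\mu(1+1/\alpha)$, this lands on exactly $\delta(1+1/\alpha)$.

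Your alternative---exact Taylor expansion, Cauchy--Schwarz, then solving a scalar quadratic---is perfectly valid and is arguably more elementary since it never invokes the strong-convexity reformulation of S-REC. However, carrying it through gives $\|\xbmhat-\xbmbar\|_2\le (2/\sqrt{\mu})\|\Abm\xbmbar-\ybm\|_2+2\delta/(\gamma\mu)$, and the step-size bound then produces $2\delta(1+1/\alpha)$ rather than $\delta(1+1/\alpha)$. So your route proves a correct recovery theorem of the same shape, but with a factor of~$2$ on the third term of~$\varepsilon$; you will not recover the stated constant exactly along this path. The anticipation you express about ``landing on the exact form'' is warranted---the paper's RSC-minimization trick is precisely what buys that last factor of two.
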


Suppose all the assumptions for Theorem~2 are true and the step size $\gamma > 0$ is selected in a way that satisfies eq.~(10) of the main manuscript. First note that Lemma~\ref{Sup:Lem:LipschitzDenoiser} and Lemma~\ref{Sup:Lem:ContracGradStep} imply that for $\xbmbar \in \Fix(\Tsf)$, we have that
\begin{equation}
\label{Sup:Eq:PnPConv}
\|\xbm^t - \xbmbar\|_2 \leq c\|\xbm^{t-1} - \xbmbar\|_2 \quad\text{with}\quad c = (1+\alpha)\max\{|1-\gamma \mu|, |1 - \gamma \lambda|\} \in (0, 1) \;.
\end{equation}
Let $\xbmhat = \proj_{\Zer(\Rsf)}(\xbmast)$, then we have that
\begin{align*}
\|\xbmbar-\xbmhat\|
&\leq \frac{1}{\sqrt{\mu}} \left[\|\ybm-\Abm\xbmbar\|_2 + \|\ybm-\Abm\xbmhat\|_2\right] \\
&\leq \frac{1}{\sqrt{\mu}} \left[\min_{\xbm \in \Zer(\Rsf)} \|\ybm-\Abm\xbm\|_2 + \sqrt{\mu}\delta(1+1/\alpha) + \|\ybm-\Abm\xbmhat\|_2\right] \\
&\leq \frac{2}{\sqrt{\mu}} \|\ybm-\Abm\xbmhat\|_2 +\delta(1+1/\alpha) \\
&\leq 2\sqrt{\frac{\lambda}{\mu}} \|\xbmast-\xbmhat\|_2 + \frac{2}{\sqrt{\mu}}\|\ebm\|_2 + \delta(1+1/\alpha) \;,
\end{align*}
where the first inequality  uses S-REC, the second one uses Lemma~\ref{Sup:Lem:FixPointBound} in Section~\ref{Sup:Sec:ProofLem1}, the third one combines two terms by picking the larger one, and the final one uses the measurement model and the triangular inequality.
By using the inequality above, we can obtain the bound
\begin{align*}
\|\xbmbar- \xbmast\|_2 \leq \left[1+2\sqrt{\lambda/\mu}\right]\|\xbmast-\prox_{\Zer(\Rsf)}(\xbmast)\|_2 + [2/\sqrt{\mu}]\|\ebm\|_2 + \delta(1+1/\alpha) \defn \varepsilon/(1+c) \;.
\end{align*}
Note that the first two terms of $\varepsilon/(1+c)$ above are the distance of $\xbmast$ to $\Zer(\Rsf)$ and the magnitude of the error $\ebm$, and have direct analogs in standard compressed sensing. The third term is the consequence of the possibility for the solution of PnP not being in $\Zer(\Rsf)$ and as discussed in Section~\ref{Sup:Sec:ProofLem1} when $\Zer(\Rsf) \cap \Zer(\nabla g) \neq \varnothing$, then the third term disappears.

Then, from~\eqref{Sup:Eq:PnPConv}, we obtain
\begin{align*}
\|\xbm^t - \xbmast\|_2
&\leq \|\xbm^t-\xbmbar\|_2 + \|\xbmbar-\xbmast\|_2 = \|\xbm^t-\xbmbar\|_2 + \varepsilon/(c+1) \\
&\leq c\|\xbm^{t-1}-\xbmbar\|_2 + \varepsilon/(c+1)  = c\|\xbm^{t-1}-\xbmast\|_2 + c\varepsilon/(c+1) + \varepsilon/(c+1) \\
&=  c\|\xbm^{t-1}-\xbmast\|_2 + \varepsilon \leq c^t\|\xbm^0-\xbmast\|_2 + \epsilon \sum_{k = 0}^{t-1} c^k \\
&\leq c^t\|\xbm^0-\xbmast\|_2 + \varepsilon (1-c^t)/(1-c) \;,
\end{align*}
which establishes the desired result.

\subsection{A Technical Lemma for the Proof of Theorem~2}
\label{Sup:Sec:ProofLem1}

The following lemma provides a bound used for Theorem~2. As discussed within the proof, if $\Zer(\Rsf) \cap \Zer(\nabla g) \neq \varnothing$, the error term on the right of Lemma~\ref{Sup:Lem:FixPointBound} can be removed by using Lemma~\ref{Sup:Lem:UniqueComFixedPoint}. While this would lead to a tighter overall bound for Theorem~2, it would also reduce its generality. Fig.~\ref{Fig:fixconvergence} empirically shows that the sequence $\|\Rsf(\xbm^t)\|_2$ obtained by PnP-PGM in our simulations converges to a small value, suggesting that the solution obtained by the algorithm is near $\Zer(\Rsf)$.

\begin{lemma}
\label{Sup:Lem:FixPointBound}
Under the assumptions of Theorem~2 in the main manuscript, we have
\begin{equation*}
\|\ybm-\Abm\xbmbar\|_2 \leq \min_{\xbm \in \Zer(\Rsf)} \|\ybm-\Abm\xbm\|_2 + \sqrt{\mu}\delta (1+ 1/\alpha) \;.
\end{equation*}
If in addition, we know that $\Zer(\Rsf) \cap \Zer(\nabla g) \neq \varnothing$, then
\begin{equation*}
\|\ybm-\Abm\xbmbar\|_2 \leq \min_{\xbm \in \Zer(\Rsf)} \|\ybm-\Abm\xbm\|_2 \;.
\end{equation*}
\end{lemma}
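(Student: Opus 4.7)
My plan is to establish the two inequalities by distinct routes: the first follows from combining S-REC with the identity $\nabla g(\xbmbar) = \Abm^\Tsf(\Abm\xbmbar - \ybm)$ and a carefully tuned Young's inequality, while the second follows immediately from uniqueness of the fixed point of $\Tsf$ on $\Im(\Dsf)$.

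For the first inequality, I fix any $\xbmhat \in \Zer(\Rsf)$ and observe that both $\xbmhat = \Dsf(\xbmhat)$ and $\xbmbar = \Dsf(\xbmbar - \gamma\nabla g(\xbmbar))$ lie in $\Im(\Dsf)$, so Assumption~2 gives $\mu\|\xbmbar - \xbmhat\|_2^2 \leq \|\Abm(\xbmbar - \xbmhat)\|_2^2$. I then expand the right-hand side by writing $\Abm(\xbmbar - \xbmhat) = (\Abm\xbmbar - \ybm) - (\Abm\xbmhat - \ybm)$ and replacing the cross term $\langle \Abm\xbmbar - \ybm,\,\Abm\xbmhat - \Abm\xbmbar\rangle$ by $\langle \nabla g(\xbmbar),\,\xbmhat - \xbmbar\rangle$. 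After rearrangement this yields
\begin{equation*}
\mu\|\xbmbar - \xbmhat\|_2^2 + \|\ybm - \Abm\xbmbar\|_2^2 \;\leq\; \|\ybm - \Abm\xbmhat\|_2^2 - 2\langle \nabla g(\xbmbar),\,\xbmhat - \xbmbar\rangle.
\end{equation*}
Next, I apply Young's inequality $2\langle \abm, \bbm\rangle \leq \|\abm\|_2^2/\mu + \mu\|\bbm\|_2^2$ with $\abm = \nabla g(\xbmbar)$ and $\bbm = \xbmbar - \xbmhat$; the parameter $\mu$ is chosen precisely so that the $\mu\|\xbmbar-\xbmhat\|_2^2$ terms cancel, leaving
\begin{equation*}
\|\ybm - \Abm\xbmbar\|_2^2 \;\leq\; \|\ybm - \Abm\xbmhat\|_2^2 + \|\nabla g(\xbmbar)\|_2^2 / \mu.
\end{equation*}
The fixed-point equation gives $\Rsf(\xbmbar - \gamma\nabla g(\xbmbar)) = -\gamma\nabla g(\xbmbar)$, and the boundedness in Assumption~1 therefore yields $\|\nabla g(\xbmbar)\|_2 \leq \delta/\gamma$. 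Invoking the step-size lower bound from eq.~(10), i.e.\ $1/\gamma < \mu(1 + 1/\alpha)$, converts $\|\nabla g(\xbmbar)\|_2^2/\mu$ into the bound $\mu\delta^2(1+1/\alpha)^2$. Taking square roots, using $\sqrt{a^2+b^2} \leq a+b$ for nonnegative $a,b$, and minimizing over $\xbmhat \in \Zer(\Rsf)$ gives the first inequality.

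For the second inequality, any $\xbm^\circ \in \Zer(\Rsf) \cap \Zer(\nabla g)$ satisfies $\Tsf(\xbm^\circ) = \Dsf(\xbm^\circ - \gamma\zerobm) = \Dsf(\xbm^\circ) = \xbm^\circ$, so $\xbm^\circ \in \Fix(\Tsf) \subseteq \Im(\Dsf)$. Under the step-size condition, Lemma~\ref{Sup:Lem:UniqueComFixedPoint} asserts that $\Tsf$ has a unique fixed point on $\Im(\Dsf)$, forcing $\xbm^\circ = \xbmbar$. But $\nabla g(\xbm^\circ) = \zerobm$ means $\xbm^\circ$ is a global least-squares minimizer, so $\|\ybm - \Abm\xbmbar\|_2 = \|\ybm - \Abm\xbm^\circ\|_2 \leq \|\ybm - \Abm\xbm\|_2$ for every $\xbm \in \R^n$, and in particular for every $\xbm \in \Zer(\Rsf)$.

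The main technical obstacle is the choice of Young's-inequality parameter that cancels the $\mu\|\xbmbar - \xbmhat\|_2^2$ term produced by S-REC exactly. A more naive approach that bounds $\langle \nabla g(\xbmbar), \xbmhat - \xbmbar\rangle$ by Cauchy--Schwarz and then estimates $\|\xbmhat - \xbmbar\|_2$ by a second application of S-REC through $\|\Abm\xbmbar - \ybm\|_2 + \|\Abm\xbmhat - \ybm\|_2$ reaches the correct dependence on $\delta$, $\mu$, and $\alpha$, but carries a stray factor of $2$; the Young-based cancellation is precisely what recovers the sharp constant $\sqrt{\mu}\,\delta(1+1/\alpha)$ stated in the lemma.
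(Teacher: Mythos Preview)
Your proof is correct and essentially coincides with the paper's. The paper phrases the first part via the RSC form of S-REC, substitutes $\nabla g(\xbmbar)=-(1/\gamma)\Rsf(\zbmbar)$, and then minimizes the resulting quadratic $-(1/\gamma)\Rsf(\zbmbar)^\Tsf(\xbm-\xbmbar)+\tfrac{\mu}{2}\|\xbm-\xbmbar\|_2^2$ over $\R^n$; your Young's inequality with parameter $\mu$ is precisely the dual of that minimization and yields the identical bound $\|\ybm-\Abm\xbmbar\|_2^2\le\|\ybm-\Abm\xbmhat\|_2^2+\|\nabla g(\xbmbar)\|_2^2/\mu$, after which both arguments use $\|\Rsf(\zbmbar)\|_2\le\delta$ and $1/\gamma<\mu(1+1/\alpha)$ in the same way. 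For the second part, the paper invokes Lemma~\ref{Sup:Lem:UniqueComFixedPoint} to conclude $\xbmbar\in\Zer(\Rsf)\cap\Zer(\nabla g)$ so that $\Rsf(\zbmbar)=\zerobm$ and the error term vanishes, which is the same conclusion you reach via uniqueness of the fixed point.
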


\begin{proof}
First note that by re-expressing the fixed point equation of PnP-PGM, we obtain
\begin{align*}
&\xbmbar = \Dsf(\xbmbar - \gamma \nabla g(\xbmbar))  \\
&\Leftrightarrow\quad
\begin{cases}
\zbmbar = \xbmbar - \gamma \nabla g(\xbmbar) \\
\xbmbar = \zbmbar - (\zbmbar - \Dsf(\zbmbar)) = \zbmbar - \Rsf(\zbmbar)
\end{cases}
\quad\Rightarrow\quad \nabla g(\xbmbar) + \frac{1}{\gamma}\Rsf(\zbmbar) = \zerobm \;,
\end{align*}
where the final result is obtained by adding the two equalities on the left.
Since $g$ satisfies S-REC over $\Im(\Dsf)$, Lemma~\ref{Sup:Lem:EquivRSCREC} in Section~\ref{Sup:Sec:ConvexEtc} implies that for any $\xbm \in \Im(\Dsf)$ and $\xbmbar \in \Fix(\Tsf)$
\begin{align*}
g(\xbm)
&\geq g(\xbmbar) + \nabla g(\xbmbar)^\Tsf(\xbm-\xbmbar) + \frac{\mu}{2}\|\xbm-\xbmbar\|_2^2\\
&= g(\xbmbar) -(1/\gamma)  \Rsf(\zbmbar)^\Tsf(\xbm-\xbmbar) + \frac{\mu}{2}\|\xbm-\xbmbar\|_2^2\\
&\geq \min_{\xbm \in \Im(\Dsf)} \left\{g(\xbmbar) -(1/\gamma)  \Rsf(\zbmbar)^\Tsf(\xbm-\xbmbar) + \frac{\mu}{2}\|\xbm-\xbmbar\|_2^2\right\} \\
&\geq \min_{\xbm \in \R^n} \left\{g(\xbmbar) -(1/\gamma)  \Rsf(\zbmbar)^\Tsf(\xbm-\xbmbar) + \frac{\mu}{2}\|\xbm-\xbmbar\|_2^2\right\} \\
&\geq g(\xbmbar) - \frac{1}{2\mu\gamma^2}\|\Rsf(\zbmbar)\|_2^2 \;,
\end{align*}
where $\zbmbar = \xbmbar - \gamma \nabla g (\xbmbar)$. By rearranging the terms and minimizing over $\xbm \in \Im(\Dsf)$, we obtain
\begin{equation}
\label{Sup:Eq:ObjBound}
g(\xbmbar) \leq \min_{\xbm \in \Zer(\Rsf)} g(\xbm) + \frac{1}{2\mu\gamma^2} \|\Rsf(\zbmbar)\|_2^2 \leq \min_{\xbm \in \Zer(\Rsf)} g(\xbm) + \frac{\delta^2}{2\mu\gamma^2} \;,
\end{equation}
where in the last inequality we used the boundedness of $\Rsf$.

By using the actual expression of $g$ and the lower-bound on $\gamma$ in eq.~(10) in the main paper, we obtain
\begin{align*}
&1/\gamma < \mu\left(1+1/\alpha\right) \\\
&\Rightarrow\quad \|\ybm-\Abm\xbmbar\|_2 \leq \min_{\xbm \in \Zer(\Rsf)}\|\ybm-\Abm\xbm\|_2 + \delta/(\sqrt{\mu}\gamma) \leq \min_{\xbm \in \Zer(\Rsf)}\|\ybm-\Abm\xbm\|_2 + \delta\sqrt{\mu}(1+1/\alpha) \;.
\end{align*}

If we assume that $\Zer(\Rsf) \cap \Zer(\nabla g) \neq \varnothing$, then from Lemma~\ref{Sup:Lem:UniqueComFixedPoint}, we have $\xbmbar \in \Zer(\Rsf) \cap \Zer(\nabla g)$, which implies that $\xbmbar = \zbmbar$ and $\Rsf(\zbmbar) = \Rsf(\xbmbar) =  \zerobm$. In this case, we can eliminate the error term in~\eqref{Sup:Eq:ObjBound}
\begin{equation*}
g(\xbmbar) \leq \min_{\xbm \in \Zer(\Rsf)} g(\xbm) \;.
\end{equation*}
\end{proof}

\section{Proof of Theorem 3}
\label{Sup:Sec:Theorem3}

\begin{theorem}
\label{Thm:EquivREDPnP}
Suppose that Assumptions~1-3 are satisfied and that $\Zer(\nabla g) \cap \Zer(\Rsf) \neq \varnothing$, then PnP and RED have the same set of solutions: $\Fix(\Tsf) = \Zer(\Gsf)$.
\end{theorem}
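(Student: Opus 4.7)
The goal is to prove $\Fix(\Tsf) = \Zer(\Gsf)$. My plan is to prove the stronger sandwich statement $\Fix(\Tsf) = S = \Zer(\Gsf)$, where $S \defn \Zer(\nabla g) \cap \Zer(\Rsf)$ is nonempty by hypothesis. The easy inclusion $S \subseteq \Fix(\Tsf) \cap \Zer(\Gsf)$ is immediate: if $\xbm \in S$, then $\nabla g(\xbm) = \zerobm$ and $\Dsf(\xbm) = \xbm$, so $\Tsf(\xbm) = \Dsf(\xbm - \gamma \zerobm) = \xbm$ and $\Gsf(\xbm) = \zerobm + \tau \zerobm = \zerobm$. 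The nontrivial content is the reverse inclusions, which I would obtain by separate monotonicity-style arguments for $\Zer(\Gsf)$ and for $\Fix(\Tsf)$.

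For $\Zer(\Gsf) \subseteq S$, I would fix any $\xbm \in \Zer(\Gsf)$ and any $\xbmhat \in S$, so that $\Gsf(\xbmhat) = \zerobm$ as well. The two key monotonicity facts are: $\nabla g$ is monotone since $g$ is convex (being a least-squares quadratic), and $\Rsf = \Isf - \Dsf$ is monotone because $\Dsf$ is nonexpansive (Assumption~3; this is a standard Cauchy--Schwarz computation). Subtracting the two zero equations and pairing with $\xbm - \xbmhat$ gives
\[
0 \;=\; \langle \nabla g(\xbm) - \nabla g(\xbmhat),\, \xbm - \xbmhat\rangle \;+\; \tau \langle \Rsf(\xbm) - \Rsf(\xbmhat),\, \xbm - \xbmhat\rangle,
\]
so both nonnegative summands vanish. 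Using $\nabla g(\xbm) - \nabla g(\xbmhat) = \Abm^\Tsf\Abm(\xbm - \xbmhat)$, the first equality forces $\|\Abm(\xbm - \xbmhat)\|_2 = 0$, hence $\nabla g(\xbm) = \nabla g(\xbmhat) = \zerobm$, and then $\Rsf(\xbm) = \zerobm$ follows from $\Gsf(\xbm) = \zerobm$. Thus $\xbm \in S$.

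For $\Fix(\Tsf) \subseteq S$, I would take any $\xbmbar \in \Fix(\Tsf)$ and any $\xbmhat \in S$, set $\zbmbar \defn \xbmbar - \gamma \nabla g(\xbmbar)$, and note that $\xbmhat = \Dsf(\xbmhat) = \Dsf(\xbmhat - \gamma \nabla g(\xbmhat))$ since $\nabla g(\xbmhat) = \zerobm$. Applying nonexpansiveness of $\Dsf$ at $\zbmbar$ and $\xbmhat$ and expanding,
\[
\|\xbmbar - \xbmhat\|_2^2 \;\leq\; \|\zbmbar - \xbmhat\|_2^2 \;=\; \|\xbmbar - \xbmhat\|_2^2 - 2\gamma\langle\nabla g(\xbmbar),\xbmbar-\xbmhat\rangle + \gamma^2\|\nabla g(\xbmbar)\|_2^2.
\]
Baillon--Haddad co-coercivity (or the direct computation using the quadratic $g$, which yields $\langle \nabla g(\xbmbar), \xbmbar - \xbmhat\rangle = \|\Abm(\xbmbar - \xbmhat)\|_2^2 \geq (1/\lambda)\|\nabla g(\xbmbar)\|_2^2$) then gives $(2\gamma/\lambda - \gamma^2)\|\nabla g(\xbmbar)\|_2^2 \leq 0$. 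For any admissible step size $\gamma \in (0, 2/\lambda)$---which is subsumed by the contraction window in eq.~(10) under which $\Tsf$ is well-behaved---this forces $\nabla g(\xbmbar) = \zerobm$, whence $\zbmbar = \xbmbar$ and $\xbmbar = \Dsf(\xbmbar)$, placing $\xbmbar$ in $S$.

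The main obstacle is the $\Fix(\Tsf)\subseteq S$ direction: unlike the RED half, it implicitly relies on a step-size restriction $\gamma < 2/\lambda$ to kill $\|\nabla g(\xbmbar)\|_2^2$, since without such a bound one cannot rule out fixed points of $\Tsf$ at which $\nabla g$ does not vanish. This restriction is not stated in the theorem hypothesis but is consistent with the paper's standing window~(10) on $\gamma$; an alternative, equivalent route would be to invoke the contraction property of $\Tsf$ over $\Im(\Dsf)$ established in the proof of Theorem~1 to argue that $\Fix(\Tsf)$ has a unique element, which must coincide with any $\xbmhat \in S$ already known to be fixed.
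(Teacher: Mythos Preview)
Your proof is correct and follows the same overall sandwich strategy as the paper, proving $\Fix(\Tsf) = S = \Zer(\Gsf)$ with $S = \Zer(\nabla g)\cap\Zer(\Rsf)$. The two halves differ in minor but noteworthy ways.

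For $\Zer(\Gsf)\subseteq S$, the paper uses cocoercivity directly: Baillon--Haddad gives $\langle\nabla g(\xbm),\xbm-\xbmhat\rangle \geq (1/\lambda)\|\nabla g(\xbm)\|_2^2$, and nonexpansiveness of $\Dsf$ gives $(1/2)$-cocoercivity of $\Rsf$, so pairing $\Gsf(\xbm)=\zerobm$ with $\xbm-\xbmhat$ immediately yields $\|\nabla g(\xbm)\|_2^2 = \|\Rsf(\xbm)\|_2^2 = 0$ in one line. Your route via bare monotonicity plus the explicit quadratic form of $g$ reaches the same conclusion but leans on the specific structure $\nabla g(\xbm)-\nabla g(\xbmhat) = \Abm^\Tsf\Abm(\xbm-\xbmhat)$; the paper's cocoercivity argument would extend verbatim to any convex $g$ with Lipschitz gradient.

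For $\Fix(\Tsf)\subseteq S$, the paper takes exactly your ``alternative route'': it invokes the contraction property of $\Tsf$ over $\Im(\Dsf)$ (Assumptions~1--2 and the step-size window~(10)) together with a composite-fixed-point lemma stating that if $\Tsf = \Dsf\circ\Ssf$ is a contraction and $\Fix(\Dsf)\cap\Fix(\Ssf)\neq\varnothing$, then $\Fix(\Tsf)=\Fix(\Dsf)\cap\Fix(\Ssf)$. Your primary direct argument via nonexpansiveness of $\Dsf$ and cocoercivity of $\nabla g$ is more elementary and needs only $\gamma\in(0,2/\lambda)$, which is strictly weaker than the contraction window~(10); it also does not require Assumptions~1--2. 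You are right that some step-size restriction is tacit in the theorem statement in either approach.
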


The SD-RED algorithm in eq.~(6) of the main manuscript seeks zeroes of the operator
\begin{equation*}
\Gsf = \nabla g + \tau \Rsf \;.
\end{equation*}
It is clear that
\begin{equation*}
\nabla g(\zbm) = \zerobm \quad\text{and}\quad \Rsf(\zbm) = \zerobm \quad\Rightarrow\quad \Gsf(\zbm) = \zerobm \;,
\end{equation*}
which corresponds to the inclusion $\Zer(\nabla g) \cap \Zer(\Rsf) \subseteq \Zer(\Gsf)$.

We now prove the reverse inclusion under the assumptions of Theorem~3. Let $\xbm \in \Zer(\Gsf)$ and $\zbm \in \Zer(\nabla g) \cap \Zer(\Rsf)$. Since $\nabla g$ is $\lambda$-Lipschitz continuous with $\lambda = \lambda_{\textsf{\tiny max}}(\Abm^\Tsf\Abm)$, Lemma~\ref{Sup:Lem:CoCoerciveGrag} in Section~\ref{Sup:Sec:ConvexEtc} implies that $\nabla g$ is also $(1/\lambda)$-cocoercive. Therefore, we have that
\begin{equation*}
\nabla g(\xbm)^\Tsf(\xbm-\zbm) = (\nabla g(\xbm)-\nabla g(\zbm))^\Tsf(\xbm-\zbm) \geq (1/\lambda) \|\nabla g(\xbm) - \nabla g(\zbm) \|_2^2 = (1/\lambda) \|\nabla g(\xbm)\|_2^2 \;.
\end{equation*}
On the other hand, since $\Dsf$ is nonexpansive, $\Rsf = \Isf - \Dsf$ is $(1/2)$-cocoercive, which implies that
\begin{equation*}
\Rsf(\xbm)^\Tsf(\xbm-\zbm) = (\Rsf(\xbm)-\Rsf(\zbm))^\Tsf(\xbm-\zbm) \geq (1/2) \|\Rsf(\xbm) - \Rsf(\zbm) \|_2^2 = (1/2) \|\Rsf(\xbm)\|_2^2 \;.
\end{equation*}
By using the fact that $\Gsf(\xbm) = \zerobm$ and the two inequalities above, we obtain
\begin{equation}
0 = \Gsf(\xbm)^\Tsf(\xbm-\zbm) = \nabla g(\xbm)^\Tsf(\xbm-\zbm) + \tau \Rsf(\xbm)^\Tsf(\xbm-\zbm) \geq (1/\lambda) \|\nabla g(\xbm)\|_2^2 + (1/2)\|\Rsf(\xbm)\|_2^2 \;,
\end{equation}
which directly leads to the conclusion
\begin{equation*}
\Gsf(\xbm) = \zerobm \quad\Rightarrow\quad \nabla g(\xbm) = \zerobm \quad\text{and}\quad \Rsf(\xbm) = \zerobm \;.
\end{equation*}
Therefore, we have that $\Zer(\Gsf) = \Zer(\nabla g) \cap \Zer(\Rsf)$.

Note also that from Lemma~\ref{Sup:Lem:LipschitzDenoiser}, we know that when $\Zer(\nabla g) \cap \Zer(\Rsf) \neq \varnothing$, we have $\Fix(\Tsf) = \Zer(\nabla g) \cap \Zer(\Rsf)$, which directly leads to our result
$$\Zer(\Gsf) = \Zer(\nabla g) \cap \Zer(\Rsf) = \Fix(\Tsf) \;.$$

\section{Background Material}
\label{Sup:Sec:Backmaterial}

The results in this sections are well-known and can be found in different forms in standard textbooks~\cite{Bauschke.Combettes2017, Rockafellar1970, Boyd.Vandenberghe2004, Nesterov2004}. For completeness, we summarize the key results used in our analysis.

\subsection{Properties of Monotone Operators}

\medskip
\begin{definition}
An operator $\Tsf$ is Lipschitz continuous with constant $\lambda > 0$ if
\begin{equation*}
\|\Tsf(\xbm)-\Tsf(\zbm)\|_2 \leq \lambda \|\xbm-\zbm\|_2 \quad \forall \xbm, \ybm \in \R^n \;.
\end{equation*}
When $\lambda = 1$, we say that $\Tsf$ is nonexpansive. When $\lambda < 1$, we say that $\Tsf$ is a contraction.
\end{definition}

\medskip
\begin{definition}
$\Tsf$ is monotone if
\begin{equation*}
(\Tsf(\xbm)-\Tsf(\zbm))^\Tsf(\xbm-\zbm) \geq 0 \quad \forall \xbm, \ybm \in \R^n \;.
\end{equation*}
We say that $\Tsf$ is strongly monotone with parameter $\theta > 0$ if
\begin{equation*}
(\Tsf(\xbm)-\Tsf(\zbm))^\Tsf(\xbm-\zbm) \geq \theta\|\xbm-\zbm\|_2^2 \quad \forall \xbm, \ybm \in \R^n \;.
\end{equation*}
\end{definition}

\medskip
\begin{definition}
$\Tsf$ is cocoercive with constant $\beta > 0$ if
\begin{equation*}
(\Tsf(\xbm)-\Tsf(\zbm))^\Tsf(\xbm-\zbm) \geq \beta\|\Tsf(\xbm)-\Tsf(\zbm)\|_2^2 \quad \forall \xbm, \zbm \in \R^n \;.
\end{equation*}
When $\beta = 1$, we say that $\Tsf$ is firmly nonexpansive.
\end{definition}

\begin{definition}
For a constant $0 < \alpha < 1$, we say that $\Tsf$ is $\alpha$-averaged, if there exists a nonexpansive operator $\Nsf$ such that $\Tsf = (1-\alpha)\Isf + \alpha \Nsf$.
\end{definition}

\medskip\noindent
The following lemma is derived from the definitions above.

\begin{lemma}
Consider $\Rsf = \Isf - \Dsf$ where $\Dsf: \R^n \rightarrow \R^n$. Then,
\begin{equation*}
\Dsf \text{ is nonexpanisve} \quad\Leftrightarrow\quad \Rsf \text{ is } (1/2)\text{-cocoercive} \;.
\end{equation*}
\end{lemma}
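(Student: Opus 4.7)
The plan is to prove the equivalence by a direct algebraic identity rather than by treating each direction separately. Introduce the shorthand $u \defn \xbm - \zbm$ and $v \defn \Dsf(\xbm) - \Dsf(\zbm)$, so that the definition $\Rsf = \Isf - \Dsf$ gives $\Rsf(\xbm) - \Rsf(\zbm) = u - v$. With this notation, nonexpansiveness of $\Dsf$ is the inequality $\|v\|_2 \leq \|u\|_2$, while $(1/2)$-cocoercivity of $\Rsf$ is the inequality $(u-v)^\Tsf u \geq \tfrac{1}{2}\|u-v\|_2^2$, each required to hold for all $\xbm,\zbm \in \R^n$.

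The main step is to expand both sides of the cocoercivity inequality with the elementary identity $\|u-v\|_2^2 = \|u\|_2^2 - 2u^\Tsf v + \|v\|_2^2$ and $(u-v)^\Tsf u = \|u\|_2^2 - u^\Tsf v$. Substituting and cancelling the common cross term $-u^\Tsf v$ reduces the cocoercivity condition to $\tfrac{1}{2}\|u\|_2^2 \geq \tfrac{1}{2}\|v\|_2^2$, which is exactly the nonexpansiveness of $\Dsf$. Since every manipulation is an equivalence, the two conditions coincide.

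I would then close the proof by noting that the reduction holds for arbitrary $\xbm,\zbm \in \R^n$, so the universally quantified versions of the two statements are equivalent. There is essentially no obstacle here; the only thing to be careful about is the bookkeeping of the substitution $\Rsf(\xbm) - \Rsf(\zbm) = u - v$ and ensuring that the factor $1/2$ matches in both the hypothesis and the conclusion (a different cocoercivity constant would not give a clean equivalence with nonexpansiveness). This identity is also the special case $\theta = 1$ of the more general Baillon–Haddad–type observation that $\Dsf$ is $\theta$-Lipschitz iff $\Rsf$ is $\tfrac{1}{1+\theta}$-cocoercive up to scaling, which is consistent with our derivation.
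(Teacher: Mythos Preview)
Your proof is correct and follows the same algebraic route as the paper: both expand the cocoercivity inequality and the inner product $(\Rsf(\xbm)-\Rsf(\zbm))^{\Tsf}(\xbm-\zbm)$ in terms of $\hbm = \xbm-\zbm$ and $\Dsf(\xbm)-\Dsf(\zbm)$, cancel the cross term, and reduce to $\|\Dsf(\xbm)-\Dsf(\zbm)\|_2 \le \|\hbm\|_2$. The only difference is cosmetic---you present the computation as a single two-sided equivalence, whereas the paper proves one implication and then notes that the converse follows by reversing the steps. (Your closing remark about a general ``$\Dsf$ is $\theta$-Lipschitz $\Leftrightarrow$ $\Rsf$ is $\tfrac{1}{1+\theta}$-cocoercive'' equivalence is not correct for $\theta \neq 1$, since the cross term $u^{\Tsf}v$ no longer cancels; a $2$-Lipschitz rotation already gives a counterexample. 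This side comment is not used in your actual argument, however.)
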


\begin{proof}
First suppose that $\Rsf$ is $1/2$ cocoercive. Let $\hbm \defn \xbm - \zbm$ for any $\xbm, \zbm \in \R^n$. We then have
\begin{equation*}
\frac{1}{2}\|\Rsf(\xbm)-\Rsf(\zbm)\|_2^2 \leq (\Rsf(\xbm)-\Rsf(\zbm))^\Tsf\hbm = \|\hbm\|_2^2 - (\Dsf(\xbm)-\Dsf(\zbm))^\Tsf\hbm \;.
\end{equation*}
We also have that
\begin{equation*}
\frac{1}{2}\|\Rsf(\xbm)-\Rsf(\zbm)\|_2^2 = \frac{1}{2}\|\hbm\|^2 - (\Dsf(\xbm)-\Dsf(\zbm))^\Tsf\hbm + \frac{1}{2}\|\Dsf(\xbm)-\Dsf(\zbm)\|_2^2 \;.
\end{equation*}
By combining these two and simplifying the expression
\begin{equation*}
\|\Dsf(\xbm)-\Dsf(\zbm)\|_2 \leq \|\hbm\|_2 \;.
\end{equation*}
The converse can be proved by following this logic in reverse.
\end{proof}

\medskip\noindent
The following lemma relates the Lipschitz continuity of the residual $\Ssf = \Isf - \Tsf$ to that of $\Tsf$.
\begin{lemma}
\label{Sup:Lem:LipschitzDenoiser}
The operator $\Rsf = \Isf - \Dsf$ is $\alpha$-Lipschitz continuous if and only if the operator $(1/(1+\alpha)) \Dsf$ is nonexpansive and $\alpha/(1+\alpha)$-averaged.
\end{lemma}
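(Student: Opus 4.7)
The plan is to prove both directions by working with an explicit witness for the averagedness. Recall that $\Tsf$ is $\theta$-averaged iff $\Tsf = (1-\theta)\Isf + \theta\Nsf$ for some nonexpansive $\Nsf$. With $\theta = \alpha/(1+\alpha)$ so that $1-\theta = 1/(1+\alpha)$, the identity $\frac{1}{1+\alpha}\Dsf = \frac{1}{1+\alpha}\Isf + \frac{\alpha}{1+\alpha}\Nsf$ forces a unique candidate, namely $\Nsf = \frac{1}{\alpha}(\Dsf - \Isf) = -\frac{1}{\alpha}\Rsf$. The whole proof reduces to checking that this $\Nsf$ is nonexpansive exactly when $\Rsf$ is $\alpha$-Lipschitz.

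First I would treat the forward implication: assume $\Rsf$ is $\alpha$-Lipschitz and define $\Nsf \defn -\Rsf/\alpha$. Then for all $\xbm,\zbm \in \R^n$,
\begin{equation*}
\|\Nsf(\xbm)-\Nsf(\zbm)\|_2 = \tfrac{1}{\alpha}\|\Rsf(\xbm)-\Rsf(\zbm)\|_2 \leq \|\xbm-\zbm\|_2,
\end{equation*}
so $\Nsf$ is nonexpansive. Substituting $\Rsf = \Isf - \Dsf$ into the definition of $\Nsf$ and rearranging yields
\begin{equation*}
\tfrac{1}{1+\alpha}\Isf + \tfrac{\alpha}{1+\alpha}\Nsf = \tfrac{1}{1+\alpha}\Isf - \tfrac{1}{1+\alpha}(\Isf-\Dsf) = \tfrac{1}{1+\alpha}\Dsf,
\end{equation*}
which simultaneously exhibits $\frac{1}{1+\alpha}\Dsf$ as $\frac{\alpha}{1+\alpha}$-averaged and, since averaged operators are nonexpansive, establishes the nonexpansiveness claim as a free byproduct.

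For the converse, suppose $\frac{1}{1+\alpha}\Dsf$ is $\frac{\alpha}{1+\alpha}$-averaged, so there exists a nonexpansive $\Nsf$ with $\frac{1}{1+\alpha}\Dsf = \frac{1}{1+\alpha}\Isf + \frac{\alpha}{1+\alpha}\Nsf$. Multiplying through by $1+\alpha$ gives $\Dsf = \Isf + \alpha\Nsf$, hence $\Rsf = \Isf - \Dsf = -\alpha\Nsf$, and the nonexpansiveness of $\Nsf$ gives
\begin{equation*}
\|\Rsf(\xbm)-\Rsf(\zbm)\|_2 = \alpha\|\Nsf(\xbm)-\Nsf(\zbm)\|_2 \leq \alpha\|\xbm-\zbm\|_2,
\end{equation*}
as required. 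There is no real obstacle here; the only subtle point is identifying the correct averaging coefficient $\theta = \alpha/(1+\alpha)$ so that the candidate $\Nsf = -\Rsf/\alpha$ inherits a nonexpansiveness constant of exactly $1$ from the $\alpha$-Lipschitz bound on $\Rsf$. Once that pairing is fixed, both directions are immediate algebraic rearrangements.
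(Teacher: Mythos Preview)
Your proof is correct. The paper itself does not give an argument for this lemma; it simply cites Lemma~9 of~\cite{Ryu.etal2019}. Your self-contained derivation via the explicit witness $\Nsf = -\Rsf/\alpha$ is exactly the natural way to unpack that reference, and the algebra checks out in both directions. One minor remark: in the converse you only invoke the $\alpha/(1+\alpha)$-averagedness hypothesis, not the separate nonexpansiveness hypothesis, but that is fine since averagedness already implies nonexpansiveness (so the latter is redundant in the statement).
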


\begin{proof}
See Lemma~9 in~\cite{Ryu.etal2019}.
\end{proof}

\begin{figure}[t]
\centering\includegraphics[width=13.5cm]{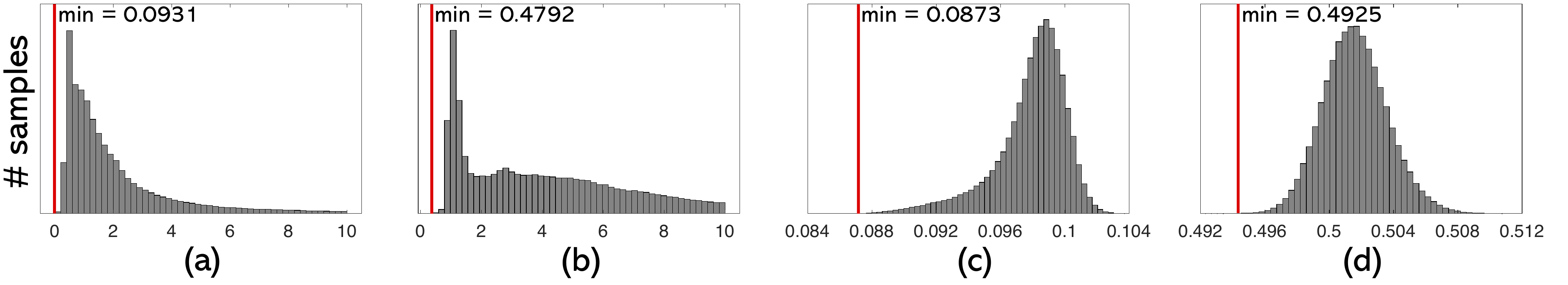}
\caption{~\emph{Empirical evaluation of the S-REC constant $\mu>0$ for the measurement operators $\Abm$ used in our simulations.  We tested both the AWGN denoisers and the AR operators by randomly sampling from their image spaces $\Im(\Dsf)$. The $x-$axis is the value of $\|\Abm\xbm - \Abm\ybm\|_2^2/\|\xbm - \ybm\|_2^2$. (a) and (b) show the histograms for the radially sub-sampled MRI matrices at $10\%$ and $50\%$ sampling ratios, respectively. (c) and (d) show the histograms for the random Gaussian matrices for the same two sampling ratios. As expected, one can observe the increase in $\mu$ for the higher sampling ratio of $50\%$.}}
\label{Fig:muest}
\end{figure}

\medskip\noindent
The following lemma considers fixed points of a composite operator.
\begin{lemma}
\label{Sup:Lem:UniqueComFixedPoint}
Let $\Tsf = \Dsf \cdot \Ssf$ with $\Fix(\Dsf) \cap \Fix(\Ssf) \neq \varnothing$ be a contraction with constant $\lambda \in (0, 1)$ over the set $\Im(\Dsf) \subseteq \R^n$. Then, we have that $\Fix(\Tsf) = \Fix(\Dsf) \cap \Fix(\Ssf)$ .
\end{lemma}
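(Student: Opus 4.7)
The inclusion $\Fix(\Dsf)\cap\Fix(\Ssf)\subseteq\Fix(\Tsf)$ is immediate from the definition: if $\xbm=\Dsf(\xbm)=\Ssf(\xbm)$, then $\Tsf(\xbm)=\Dsf(\Ssf(\xbm))=\Dsf(\xbm)=\xbm$. So the content of the lemma is the reverse inclusion, and my plan is to obtain it by showing that $\Tsf$ has a \emph{unique} fixed point which already lies in $\Fix(\Dsf)\cap\Fix(\Ssf)$.

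First I would exploit the nonemptiness hypothesis: pick any $\zbm\in\Fix(\Dsf)\cap\Fix(\Ssf)$. Since $\zbm=\Dsf(\zbm)$, we automatically have $\zbm\in\Im(\Dsf)$, and by the trivial inclusion above $\zbm\in\Fix(\Tsf)$. So we have produced a fixed point of $\Tsf$ that sits inside $\Im(\Dsf)$.

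Next I would observe the key structural fact that any fixed point of $\Tsf$ must lie in $\Im(\Dsf)$: if $\xbm=\Tsf(\xbm)=\Dsf(\Ssf(\xbm))$, then $\xbm$ is in the image of $\Dsf$ by construction. Combined with the previous step, this means $\Fix(\Tsf)\subseteq\Im(\Dsf)$ and $\zbm\in\Fix(\Tsf)\cap\Im(\Dsf)$. Now I apply the contraction hypothesis: $\Tsf$ is a contraction on $\Im(\Dsf)$ with constant $\lambda\in(0,1)$, so if $\xbm,\xbm'\in\Fix(\Tsf)\subseteq\Im(\Dsf)$ then $\|\xbm-\xbm'\|_2=\|\Tsf(\xbm)-\Tsf(\xbm')\|_2\leq\lambda\|\xbm-\xbm'\|_2$, forcing $\xbm=\xbm'$. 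Hence $\Fix(\Tsf)=\{\zbm\}$, and since $\zbm\in\Fix(\Dsf)\cap\Fix(\Ssf)$ by choice, we conclude $\Fix(\Tsf)\subseteq\Fix(\Dsf)\cap\Fix(\Ssf)$, completing the proof.

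I do not expect a real obstacle here; the argument is short once the two observations (produce a fixed point in $\Im(\Dsf)$ from the nonemptiness hypothesis, and note that \emph{all} fixed points of $\Tsf$ automatically lie in $\Im(\Dsf)$ because $\Tsf$ factors through $\Dsf$) are in hand. The only subtle point worth flagging is that the contraction hypothesis is assumed only on $\Im(\Dsf)$ rather than on all of $\R^n$, so the inclusion $\Fix(\Tsf)\subseteq\Im(\Dsf)$ is essential for applying the uniqueness-of-fixed-point argument; without it, one could not rule out stray fixed points outside the image of $\Dsf$.
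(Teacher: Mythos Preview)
Your proof is correct and follows essentially the same approach as the paper: both use the observation that $\Fix(\Tsf)\subseteq\Im(\Dsf)$ together with the contraction hypothesis on $\Im(\Dsf)$ to force any fixed point of $\Tsf$ to coincide with a chosen $\zbm\in\Fix(\Dsf)\cap\Fix(\Ssf)$. The only cosmetic difference is that the paper, adapting Proposition~4.49 of Bauschke--Combettes, first runs a three-case split (handling $\Ssf(\xbm)\in\Fix(\Dsf)$ and $\xbm\in\Fix(\Ssf)$ directly before invoking the contraction in the remaining case), whereas you go straight to the uniqueness argument---which is all that is actually needed once the contraction is available.
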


\begin{proof}
We modify the proof of Proposition 4.49 from~\cite{Bauschke.Combettes2017} to be consistent with our assumptions.

\medskip

It is clear that $\Fix(\Dsf) \cap \Fix(\Ssf) \subseteq \Fix(\Tsf)$ and our goal is to show the reverse inclusion. Let $\xbm \in \Fix(\Tsf)$ and consider three cases.
\begin{itemize}
\item \emph{Case $\Ssf(\xbm) \in \Fix(\Dsf)$}: We have that
\begin{equation*}
\Ssf(\xbm) = \Dsf(\Ssf(\xbm)) = \Tsf(\xbm) = \xbm \in \Fix(\Dsf) \cap \Fix(\Ssf) \;.
\end{equation*}

\item \emph{Case $\xbm \in \Fix(\Ssf)$}: We have that
\begin{equation*}
\Dsf(\xbm) = \Dsf(\Ssf(\xbm)) = \Tsf(\xbm) = \xbm \in \Fix(\Dsf) \cap \Fix(\Ssf) \;.
\end{equation*}

\item \emph{Case $\Ssf(\xbm) \notin \Fix(\Dsf)$ and $\xbm \notin \Fix(\Ssf)$}: Since $\Tsf = \Dsf \cdot \Ssf$ is a contraction over $\Im(\Dsf)$
\begin{equation*}
\|\xbm-\zbm\|_2 = \|\Tsf(\xbm)- \Tsf(\zbm)\|_2 \leq \lambda \|\xbm-\zbm\|_2 \quad \forall \zbm \in \Fix(\Dsf) \cap \Fix(\Ssf) \;,
\end{equation*}
which is impossible.

\end{itemize}
\end{proof}

\subsection{Convexity, restricted strong convexity, and set-restricted eigenvalue condition}
\label{Sup:Sec:ConvexEtc}

S-REC in the main manuscript can be generalized to the \emph{restricted strong convexity (RSC)} assumption, which is widely-used in the nonconvex analysis of the gradient methods (see Section~3.2 in~\cite{Jain.Kar2017}).

\begin{definition}
\label{Sup:Def:RSC}
A continuously differentiable function $g$ is said to satisfy restricted strong convexity (RSC) over $\Xcal \subseteq \R^n$ with $\mu > 0$ if
\begin{equation*}
g(\zbm) \geq g(\xbm) + \nabla g(\xbm)^\Tsf(\zbm-\xbm) + \frac{\mu}{2}\|\zbm-\xbm\|_2^2 \quad \forall \xbm, \zbm \in \Xcal \;.
\end{equation*}
\end{definition}

\begin{figure}[t]
\centering\includegraphics[width=13cm]{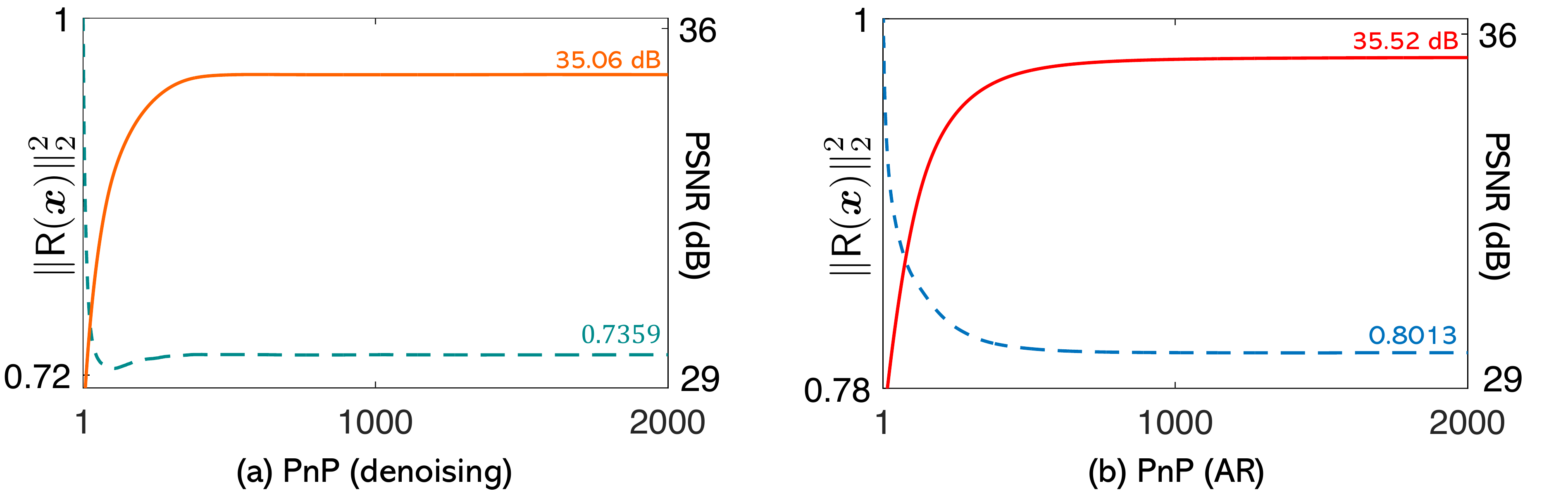}
\caption{~\emph{Illustration of the convergence of PnP under nonexpensive denoisers and AR operators. Average normalized distance to $\|\Rsf(\xbm)\|_2^2 = \|\xbm-\Dsf(\xbm)\|_2^2$ and PSNR (dB) are plotted as dashed and solid lines, respectively, against the iteration  number. This plot illustrates that PnP in our experiments converges to vectors close to $\Zer(\Rsf)$, which is consistent with the view that it regularizes inverse problems by obtaining solutions near the fixed-points of a denoiser/AR operator.}}
\label{Fig:fixconvergence}
\end{figure}

\medskip\noindent
In fact, for $g(\xbm) = \frac{1}{2}\|\ybm-\Abm\xbm\|_2^2$, S-REC is equivalent to RSC in Definition~\ref{Sup:Def:RSC}.
\begin{lemma}
\label{Sup:Lem:EquivRSCREC}
Let $g(\xbm) = \frac{1}{2}\|\ybm-\Abm\xbm\|_2^2$ and consider $\Xcal \subseteq \R^n$. Then,
\begin{equation*}
g \text{ satisfies S-REC with } \mu \text{ over } \Xcal \quad\Leftrightarrow\quad g \text{ satisfies } \mu\text{-RSC over } \Xcal \;.
\end{equation*}
\end{lemma}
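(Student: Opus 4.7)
The plan is to exploit the fact that $g(\xbm) = \tfrac{1}{2}\|\ybm - \Abm\xbm\|_2^2$ is a quadratic in $\xbm$, so its second-order Taylor expansion about any point is exact. This will reduce the claimed equivalence to a single algebraic identity, after which both implications fall out by inspection.

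First, I would write down $\nabla g(\xbm) = \Abm^\Tsf(\Abm\xbm - \ybm)$ and compute the Bregman-type residual
\begin{equation*}
\Delta(\xbm, \zbm) \defn g(\zbm) - g(\xbm) - \nabla g(\xbm)^\Tsf(\zbm - \xbm).
\end{equation*}
Because $g$ is quadratic with Hessian $\Abm^\Tsf\Abm$, the standard expansion of $\tfrac{1}{2}\|\ybm - \Abm\zbm\|_2^2$ around $\xbm$ (or a direct calculation using $\ybm - \Abm\zbm = (\ybm - \Abm\xbm) - \Abm(\zbm - \xbm)$) yields the exact identity
\begin{equation*}
\Delta(\xbm, \zbm) = \tfrac{1}{2}\,(\zbm - \xbm)^\Tsf \Abm^\Tsf\Abm (\zbm - \xbm) = \tfrac{1}{2}\|\Abm\zbm - \Abm\xbm\|_2^2.
\end{equation*}

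With this identity in hand, both directions of the equivalence are immediate. The $\mu$-RSC condition from Definition~\ref{Sup:Def:RSC} reads $\Delta(\xbm, \zbm) \geq \tfrac{\mu}{2}\|\zbm - \xbm\|_2^2$ for all $\xbm, \zbm \in \Xcal$, while S-REC reads $\|\Abm\xbm - \Abm\zbm\|_2^2 \geq \mu \|\xbm - \zbm\|_2^2$ on the same set. Substituting the identity into the RSC inequality produces exactly the S-REC inequality (up to the common factor $\tfrac{1}{2}$), and conversely, dividing the S-REC inequality by $2$ and substituting gives RSC. Both directions are thus established simultaneously.

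There is no real obstacle here; the only thing to be careful about is writing the Taylor identity cleanly and emphasizing that exactness (rather than the usual $\geq$ or $\leq$ bound for general convex functions) is what drives the equivalence. This is also why the lemma is specific to the quadratic data-fidelity term: for a general convex $g$, only one direction ($\mu$-strong-convexity implies a quadratic lower bound on $\Delta$) would hold automatically, and the reverse implication would fail without quadratic structure.
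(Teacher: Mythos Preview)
Your proposal is correct and follows essentially the same approach as the paper: both arguments rest on the exact quadratic expansion $g(\zbm) = g(\xbm) + \nabla g(\xbm)^\Tsf(\zbm-\xbm) + \tfrac{1}{2}\|\Abm(\zbm-\xbm)\|_2^2$, from which the equivalence between S-REC and $\mu$-RSC is read off directly. The only cosmetic difference is that you package the computation via the Bregman residual $\Delta(\xbm,\zbm)$ and handle both directions simultaneously, whereas the paper writes out one implication and remarks that the other follows by reversing the steps.
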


\begin{proof}
Suppose $g$ is the least-squares function that satisfies S-REC with $\mu$, then for any $\xbm, \zbm \in \Xcal$
\begin{align*}
g(\zbm)
&= g(\xbm) + \nabla g(\xbm)^\Tsf(\zbm-\xbm) + \frac{1}{2}(\zbm-\xbm)\Abm^\Tsf\Abm(\zbm-\xbm) \\
&= g(\xbm) + \nabla g(\xbm)^\Tsf(\zbm-\xbm) + \frac{1}{2}\|\Abm(\zbm-\xbm)\|_2^2 \\
&\geq g(\xbm) + \nabla g(\xbm)^\Tsf(\zbm-\xbm) + \frac{\mu}{2}\|\zbm-\xbm\|_2^2 \;,
\end{align*}
which implies that $g$ satisfies $\mu$-RSC. To show S-REC using $\mu$-RSC, follow the logic in reverse.
\end{proof}

\medskip\noindent
One can use the previous and the following lemma to show that the gradient step of PnP-PGM can be a contraction for any vector in $\Im(\Dsf)$ for a properly chosen step size.

\begin{lemma}
\label{Sup:Lem:ContracGradStep}
Assume $g$ satisfies $\mu$-RSC over $\Xcal \subseteq \R^n$ and $\nabla g$ is $\lambda$-Lipschitz continuous. Then,
\begin{equation*}
\|(\Isf - \gamma \nabla g)(\xbm) - (\Isf - \gamma \nabla g)(\zbm)\|_2 \leq \max\{|1-\gamma \mu|, |1-\gamma \lambda|\}\|\xbm-\zbm\|_2 \quad\forall \xbm, \zbm \in \Xcal \;.
\end{equation*}
\end{lemma}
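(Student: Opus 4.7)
The plan is to follow the textbook one-step contraction argument for gradient descent on a strongly monotone, Lipschitz-gradient map, specialized to the restricted set $\Xcal$. First I would fix $\xbm,\zbm\in\Xcal$, set $\hbm\defn\xbm-\zbm$ and $\Delta\defn\nabla g(\xbm)-\nabla g(\zbm)$, and expand
$$\|(\Isf-\gamma\nabla g)(\xbm)-(\Isf-\gamma\nabla g)(\zbm)\|_2^2=\|\hbm\|_2^2-2\gamma\hbm^\Tsf\Delta+\gamma^2\|\Delta\|_2^2.$$
Applying $\mu$-RSC at both orderings of $(\xbm,\zbm)$ and adding yields strong monotonicity $\hbm^\Tsf\Delta\geq\mu\|\hbm\|_2^2$, which via Cauchy--Schwarz also forces $\|\Delta\|_2\geq\mu\|\hbm\|_2$; the Lipschitz hypothesis supplies $\|\Delta\|_2\leq\lambda\|\hbm\|_2$. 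These three inequalities are the only ingredients I would use.

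The engine of the proof is Nesterov's strong-monotonicity--cocoercivity interpolation (Theorem~2.1.12 in his \emph{Introductory Lectures})
$$\hbm^\Tsf\Delta\;\geq\;\frac{\mu\lambda}{\mu+\lambda}\|\hbm\|_2^2+\frac{1}{\mu+\lambda}\|\Delta\|_2^2,$$
which I would invoke here and which is obtained classically by applying Baillon--Haddad to $g-\tfrac{\mu}{2}\|\cdot\|_2^2$. Substituting into the expansion and collecting terms gives
$$\|\ldots\|_2^2\;\leq\;\left(1-\frac{2\gamma\mu\lambda}{\mu+\lambda}\right)\|\hbm\|_2^2+\gamma\!\left(\gamma-\frac{2}{\mu+\lambda}\right)\|\Delta\|_2^2.$$

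I would then split on the sign of $\gamma-2/(\mu+\lambda)$. When $\gamma\leq 2/(\mu+\lambda)$ the coefficient of $\|\Delta\|_2^2$ is non-positive, so plugging in the lower bound $\|\Delta\|_2^2\geq\mu^2\|\hbm\|_2^2$ collapses the expression algebraically to $(1-\gamma\mu)^2\|\hbm\|_2^2$ after combining the $-2\gamma\mu\lambda/(\mu+\lambda)$ and $-2\gamma\mu^2/(\mu+\lambda)$ terms. When $\gamma\geq 2/(\mu+\lambda)$ the coefficient is non-negative, so plugging in the upper bound $\|\Delta\|_2^2\leq\lambda^2\|\hbm\|_2^2$ collapses the expression analogously to $(1-\gamma\lambda)^2\|\hbm\|_2^2$. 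In either regime the squared norm is bounded by $\max\{|1-\gamma\mu|,|1-\gamma\lambda|\}^2\|\hbm\|_2^2$, and taking the positive square root gives the lemma.

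The main obstacle is justifying the Nesterov interpolation under the weaker $\mu$-RSC-over-$\Xcal$ hypothesis rather than global $\mu$-strong convexity, since the Baillon--Haddad step needs the shifted function to be convex on all of $\R^n$. For the paper's quadratic $g(\xbm)=\tfrac{1}{2}\|\ybm-\Abm\xbm\|_2^2$, however, $\nabla g$ is the linear map $\Abm^\Tsf\Abm$, and the interpolation reduces to the scalar spectral inequality $\hbm^\Tsf(\lambda\Ibm-\Abm^\Tsf\Abm)(\Abm^\Tsf\Abm-\mu\Ibm)\hbm\geq 0$ evaluated along $\hbm\in\Xcal-\Xcal$; the S-REC lower bound together with $\lambda=\lambda_{\max}(\Abm^\Tsf\Abm)$ is precisely what is needed to make this spectral statement go through. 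Beyond this, the only delicate step is bookkeeping the case split so that the two bounds really do collapse exactly to $(1-\gamma\mu)^2$ and $(1-\gamma\lambda)^2$ rather than to looser intermediate expressions.
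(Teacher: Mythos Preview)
Your overall route---expand the square, apply Nesterov's interpolation $\hbm^\Tsf\Delta\geq\frac{\mu\lambda}{\mu+\lambda}\|\hbm\|_2^2+\frac{1}{\mu+\lambda}\|\Delta\|_2^2$, then case-split on the sign of $\gamma-2/(\mu+\lambda)$---is the standard argument and is exactly what the paper defers to by citing Lemma~7 of Ryu et al.\ as a ``simple modification.'' So strategically you and the paper coincide; the paper simply does not write the steps out.

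The obstacle you flag is real, and your proposed resolution is where the gap lies. The spectral inequality $\hbm^\Tsf(\lambda\Ibm-\Abm^\Tsf\Abm)(\Abm^\Tsf\Abm-\mu\Ibm)\hbm\geq 0$ does \emph{not} follow from S-REC together with $\lambda=\lambda_{\max}(\Abm^\Tsf\Abm)$: S-REC only controls the Rayleigh quotient of $\Abm^\Tsf\Abm$ along $\hbm$, not how $\hbm$ decomposes across the eigenspaces of $\Abm^\Tsf\Abm$. Concretely, take $\Abm^\Tsf\Abm=\mathrm{diag}(0,2)$ (so $\lambda=2$) and $\Xcal=\{\zerobm,\hbm\}$ with $\hbm=(1,2)^\Tsf$; then S-REC holds with $\mu=8/5$, yet $(\lambda\Ibm-\Abm^\Tsf\Abm)(\Abm^\Tsf\Abm-\mu\Ibm)=\mathrm{diag}(-16/5,0)$ and the quadratic form equals $-16/5<0$. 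Worse, at $\gamma=2/(\mu+\lambda)=5/9$ one gets $\|(\Isf-\gamma\Abm^\Tsf\Abm)\hbm\|_2^2=85/81$ while $\max\{|1-\gamma\mu|,|1-\gamma\lambda|\}^2\|\hbm\|_2^2=5/81$, so the lemma's conclusion itself fails for this $\Xcal$. The paper's one-line proof does not close this gap either; some additional structural hypothesis on $\Xcal$ (for instance that $\Xcal-\Xcal$ lies in the span of eigenvectors of $\Abm^\Tsf\Abm$ with eigenvalue at least $\mu$, which makes the Baillon--Haddad step legitimate) is needed for either argument to go through.
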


\begin{proof}
Since for any $\xbm, \zbm \in \Xcal$, the function $g$ is strongly convex with constant $\mu$, this lemma is a simple modification of Lemma~7 in~\cite{Ryu.etal2019}.
\end{proof}

\begin{lemma}
\label{Sup:Lem:CoCoerciveGrag}
For a convex and continuously differentiable function $g$, we have
\begin{equation*}
\nabla g \text{ is } \lambda\text{-Lipschitz continuous} \quad\Leftrightarrow\quad \nabla g \text{ is } (1/\lambda)\text{-cocoercive} \;.
\end{equation*}
\end{lemma}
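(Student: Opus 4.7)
The plan is to handle the two directions of the equivalence separately, with the hard direction requiring an auxiliary-function construction that is essentially the classical Baillon-Haddad theorem. The reverse direction $(\Leftarrow)$ is immediate: if $\nabla g$ is $(1/\lambda)$-cocoercive, then Cauchy-Schwarz yields
\begin{equation*}
\tfrac{1}{\lambda}\|\nabla g(\xbm) - \nabla g(\zbm)\|_2^2 \,\leq\, (\nabla g(\xbm) - \nabla g(\zbm))^\Tsf(\xbm - \zbm) \,\leq\, \|\nabla g(\xbm) - \nabla g(\zbm)\|_2 \, \|\xbm - \zbm\|_2,
\end{equation*}
and dividing by $\|\nabla g(\xbm) - \nabla g(\zbm)\|_2$ (handling the trivial case separately) gives $\lambda$-Lipschitz continuity. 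Convexity of $g$ plays no role in this direction.

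For the forward direction $(\Rightarrow)$, my plan is to fix arbitrary $\xbm \in \R^n$ and introduce the auxiliary function $\varphi_\xbm(\ybm) \defn g(\ybm) - \nabla g(\xbm)^\Tsf \ybm$. This $\varphi_\xbm$ is convex (as $g$ is convex and only a linear term was subtracted) with $\lambda$-Lipschitz gradient $\nabla\varphi_\xbm(\ybm) = \nabla g(\ybm) - \nabla g(\xbm)$, and crucially $\nabla\varphi_\xbm(\xbm) = \zerobm$, so convexity forces $\xbm$ to be a global minimizer of $\varphi_\xbm$. Applying the descent lemma (a direct consequence of $\lambda$-Lipschitz gradient) at any $\zbm \in \R^n$ and evaluating the resulting quadratic upper bound at its own minimizer $\ybm = \zbm - (1/\lambda)\nabla\varphi_\xbm(\zbm)$ yields
\begin{equation*}
\varphi_\xbm(\xbm) \,\leq\, \varphi_\xbm\!\left(\zbm - \tfrac{1}{\lambda}\nabla\varphi_\xbm(\zbm)\right) \,\leq\, \varphi_\xbm(\zbm) - \tfrac{1}{2\lambda}\|\nabla\varphi_\xbm(\zbm)\|_2^2.
\end{equation*}
Unpacking the definition of $\varphi_\xbm$ gives $g(\xbm) - g(\zbm) - \nabla g(\xbm)^\Tsf(\xbm - \zbm) \leq -\tfrac{1}{2\lambda}\|\nabla g(\zbm) - \nabla g(\xbm)\|_2^2$; swapping the roles of $\xbm$ and $\zbm$ and adding the two inequalities produces the cocoercivity bound $(\nabla g(\xbm) - \nabla g(\zbm))^\Tsf(\xbm - \zbm) \geq (1/\lambda)\|\nabla g(\xbm) - \nabla g(\zbm)\|_2^2$.

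The main obstacle I anticipate is precisely the forward direction, where cocoercivity really needs convexity: without it, $\xbm$ is only a critical point of $\varphi_\xbm$ and need not be a global minimizer, so the first inequality in the chain above fails. The subtle step is thus linking a purely local bound (the descent lemma around $\zbm$) to the global minimum value $\varphi_\xbm(\xbm)$, which requires both the Lipschitz bound on $\nabla g$ (so that the quadratic majorant exists) and the convexity of $g$ (so that stationarity implies global optimality). Once these two ingredients are combined, the symmetric addition trick closes the argument cleanly.
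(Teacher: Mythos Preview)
Your argument is correct and is precisely the standard Baillon--Haddad proof. The paper itself does not write out a proof at all: it simply cites Theorem~2.1.5 in Nesterov's \emph{Introductory Lectures on Convex Optimization}, which establishes exactly this equivalence (among several others) using the same auxiliary-function construction and descent-lemma-plus-symmetrization trick you outline. So you have reproduced the textbook proof that the paper defers to, with nothing missing and nothing extra.
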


\begin{proof}
See Theorem~2.1.5 in Section 2.1 of~\cite{Nesterov2004}.
\end{proof}

\section{Additional Technical Details and Numerical Results}
\label{Sec:TechnicalDetails}

We designed two types of deep priors for PnP/RED: (i) an AWGN denoiser and (ii) an artifact-removal (AR) operator trained to remove artifacts specific to the PnP iterations\footnote{The implementation of our pre-trained denoisers and AR operators are also available in the supplement.}. Both of these deep priors share the same neural network architecture, based on DnCNN~\cite{Zhang.etal2017}.  The networks contain three components. The first part is a composite convolutional layer, consisting of a normal convolutional layer and a rectified linear units (ReLU) layer. It convolves the $n_1 \times n_2$ input to $n_1 \times n_2 \times 64$ features maps by using 64 filters of size $3 \times 3$. The second part is a sequence of 10 composite convolutional layers, each having 64 filters of size $3 \times 3 \times 64$. Those composite layers further process the feature maps generated by the first part. The third part of the network, a single convolutional layer, generates the final output image by convolving the feature maps with a $3 \times 3 \times 64$ filter. Every convolution is performed with a stride $=1$, so that the intermediate feature maps share the same spatial size of the input image. We train several denoisers to optimize the \emph{mean squared error (MSE)} by using the Adam optimizer. All the experiments in this work were performed on a machine equipped with an Intel Xeon Gold 6130 Processor and eight NVIDIA GeForce RTX 2080 Ti GPUs.

\begin{figure}[t!]
\centering\includegraphics[width=13.5cm]{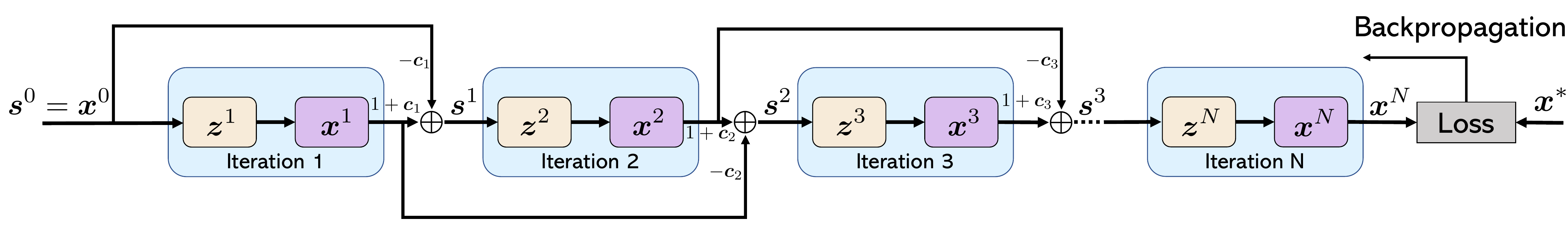}
\caption{\emph{Detailed architecture used for training the AR operator by unrolling the iterations of PnP-FISTA~\cite{Sun.etal2018a} with the DnCNN prior. Each layer contains one iteration consisting of a data-consistency update and an image prior update. The input of the unrolling network is the initialization $\xbm^0$ and the output is the reconstructed image from the $N$th iteration, which is subsequently used within the training loss. In order to make the AR operator satisfy the Assumption A, we impose the spectral normalization and weight sharing on DnCNN across different iterations. Note that once DnCNN is pre-trained following this scheme, it is used as an AR operator within PnP.}}
\label{Fig:AR}
\end{figure}

We now present the implementation details of training the AR operators used in this work.  Inspired by ISTA-Net$^+$~\footnote{ISTA-Net$^+$ is publicly available at \url{https://github.com/jianzhangcs/ISTA-Net-PyTorch}.}, we implement our own deep unfolding neural network for training the AR operator. Given an initial solution $\xbm^0$, ~\emph{i.e.} $\xbm^0 = \Abm^{\Tsf}\ybm$, we iteratively refine it by infusing information from both the gradient of the data-fidelity term $\nabla g$ and the learned operator $\Dsf$ defined as
\begin{equation}
\label{Eq:artifact estimator}
\Rsf(\xbm) = (\Isf - \Dsf)(\xbm) = \xbm - \Dsf(\xbm) \;,
\end{equation}
where $\Rsf$ is the residual of the deep neural network.
We use Nesterov acceleration in the unrolled architecture, fixing the total number of unrolling iterations to $N \geq 1$
\begin{align}
\label{Eq:PnP-Netupdate}
&\zbm^{k} = \sbm^{k-1} - \gamma \nabla g(\sbm^{k-1})\\
&\xbm^k = \Dsf(\zbm^k)\\
&c_k = (q_{k-1} -1 )/q_k\\
&\sbm^k = \xbm^k + c_k(\xbm^k - \xbm^{k-1}) \;,
\end{align}
where $\gamma>0$ is a step-size parameter and the value of $q_k = 1/2(1+\sqrt{1+4q_{k-1}^2})$ is adapted during the training for better PSNR performance.
Fig.~\ref{Fig:AR} illustrates the algorithmic details for training the AR operator. In our implementation, we opted to share the weights of the AR operator across different iterations to satisfy our theoretical assumptions. We trained several  AR operators for $N$ unfolded iterations using the MSE loss
\begin{align}
\label{Eq:lmse}
\Lcal_{\textbf{MSE}} =  \frac{1}{M}\sum_{j=1}^{M}\|\xbm_j^N - \xbm_j^\ast\|_2^2 \;,
\end{align}
where $\xbm_j^\ast$ is the ground truth. We also included a \emph{smoothness-constraint} loss across different iterations, defined as
\begin{align}
\label{Eq:lsmooth}
\Lcal_{\textbf{Smooth}} = \frac{1}{M}\sum_{j=1}^{M}\sum_{k={N-q}}^N\|\xbm_j^k - \Dsf(\zbm_j^k)\|_2^2 \;.
\end{align}
We observe that the AR operators trained with this smoothness-constraint outperform those trained without it, especially, when the AR operator is integrated into the PnP algorithm. The total AR training loss is thus $\Lcal = \Lcal_{\textbf{MSE}} + \beta\Lcal_{\textbf{Smooth}}$, where $\beta > 0$ controls the amount of smoothing. For the experiments in this paper, we set $N = 90, \beta =10$ for all gray and color AR operators' training, while we set $N=27, \beta=10$ for CS-MRI.

\begin{figure}[t]
\centering\includegraphics[width=\textwidth]{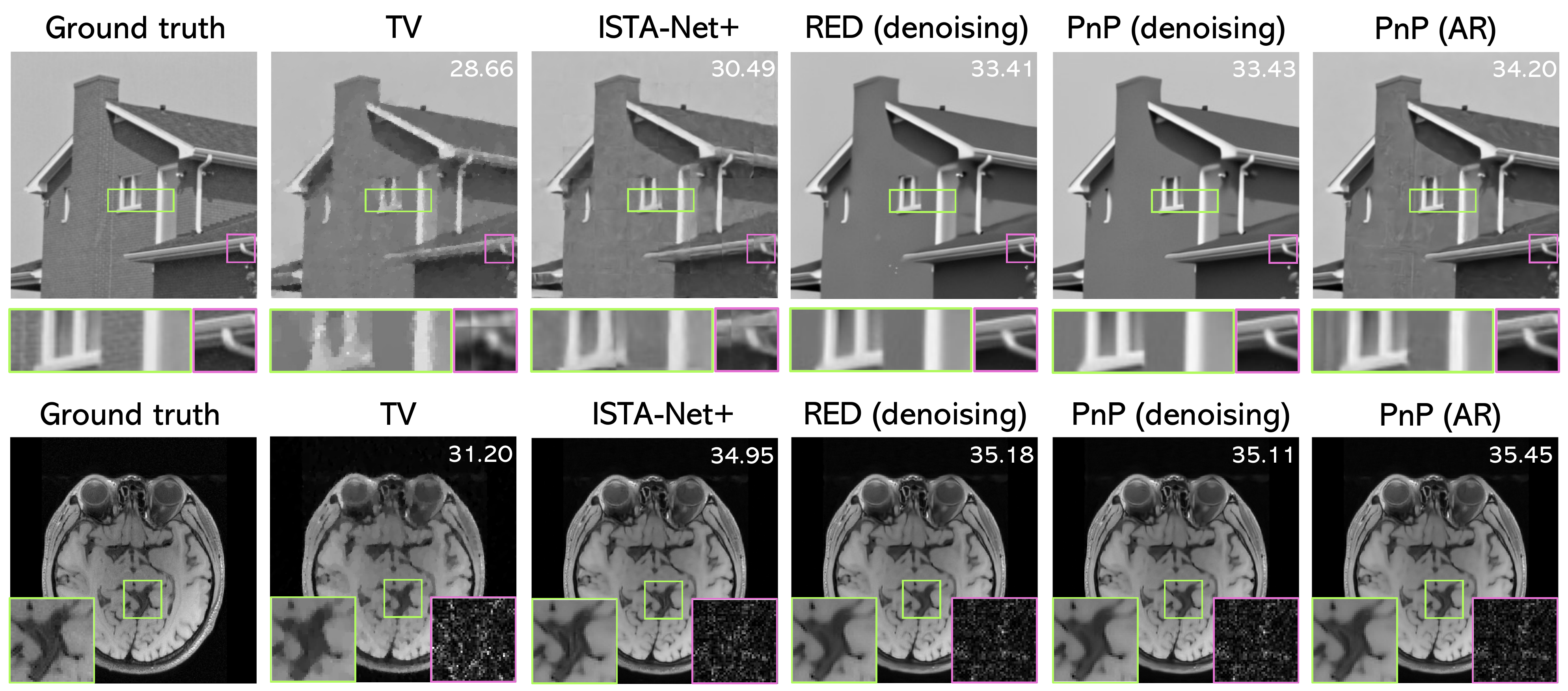}
\caption{~\emph{Additional visual comparisons between various methods for CS and CS-MRI. Top: reconstruction results on the ``House" image in Set11 at CS ratios of $10\%$. Bottom: results on MRI images with radially under-samplying at CS ratios of $20\%$ (The pink box provides the error residual that was amplified by $10\times$ for better visualization.). Best viewed by zooming in the display.}}
\label{Fig:nature_mri02}
\end{figure}

We used  a pre-training strategy to accelerate the training of the weights within the AR operator. Since the weights are shared across iterations of the deep unfolding network, we can then initialize them with those obtained from pre-trained AWGN denoisers. We observe that this pre-training strategy is considerably more efficient than initializing the entire unfolding network with the random weights. Since we initialize our learned components with deep denoisers, the initial setup for our method exactly corresponds to tuning a PnP approach with a deep denoiser.
Such training adapts the operator $\Dsf$ to a particular inverse problem and data distribution.

 \begin{table}[t]
\caption{Average PSNR (dB) values for two spectral normalization (SN) technique used in training $\alpha$-Lipschitz continuity denoisers on Set11.}
    \centering
    \renewcommand\arraystretch{1.2}
    {\footnotesize
    \scalebox{1}{
    \begin{tabular*}{13.5cm}{L{80pt}||C{130pt}lC{130pt}l}
        \hline
         \diagbox{\bf CS Ratio}{\bf Method} & \textbf{ PnP (denoiser real-SN~\cite{Ryu.etal2019})} & & \textbf{ PnP (denoiser SN~\cite{Miyato.etal2018})  }  \\ \hline\hline
         $0.1$       & 27.32  & &  27.76 \\
         $0.3$       & 34.78  & &  35.06      \\\hline
    \end{tabular*}}
    }
\label{Tab:table1}
\end{table}

In Fig.~\ref{Fig:muest}, we report the empirical evaluation of $\mu$ for the measurement operators used in our experiments by sampling images from $\Im(\Dsf)$. Specifically, we test two types of measurement matrixes for CS, namely random matrix and radially subsampled Fourier matrix, both at subsampling rates of $10\%$ and $50\%$. For each type of matrix, we first use the operator $\Dsf$ to generate several denoised image pairs on BSD68 and medical brain images, respectively. This ensures the tested image pairs are all in the range of $\Dsf$. We plot the histograms of $\mu = \|\Abm\xbm-\Abm\zbm\|_2^2 / \|\xbm-\zbm\|_2^2$, and the minimum value of each histogram is indicated by a vertical bar, providing an empirical lower bound on the values of $\mu$. Fig.~\ref{Fig:muest} illustrates that empirically the measurement operators $\Abm$ used in this work satisfies S-REC over $\Im(\Dsf)$ with $\mu > 0$.

In Table~\ref{Tab:table1}, we provide additional empirical comparisons between the spectral normalization (SN) technique in~\cite{Ryu.etal2019} and the one in~\cite{Miyato.etal2018} for training denoisers used in PnP. It is worth noting that the SN from~\cite{Miyato.etal2018} uses a convenient but inexact implementation for the convolutional layers. Both of our pre-trained models are available here: \url{https://github.com/wustl-cig/pnp-recovery}.

In Fig.~\ref{Fig:fixconvergence}, we report the convergence of $\|\Rsf(\xbm^t)\|_2^2 = \|\xbm^t - \Dsf(\xbm^t)\|_2^2$ for both the AWGN denoisers and the AR operators use in our experiments. As can be observed from the plots, in both cases, PnP converges to vectors close to $\Zer(\Rsf)$, which is consistent with the view that it regularizes inverse problems by obtaining solutions near the fixed-points of a denoiser/AR operator. Note that this view is completely backward compatible with the traditional sparsity-promoting priors and ISTA-algorithms, where one achieves regularization by promoting sparse solutions in some transform domain.

We ran fixed-point iterations of the denoisers and the AR operators used in this work on Set11. Fig.~\ref{Fig:fixpointiter} below presents visual comparisons for different values of $\|\Rsf(\xbm)\|_2^2$ for the AR operator and denoiser, respectively. Table~\ref{Tab:table2} provides PSNR (dB) for different values of $\|\Rsf\|_2^2$ using TV as a reference. In all experiments, we observed that as the images get closer to the fixed-points of the denoiser, they start losing visual details. On the other hand, deep denoisers seem to preserve visual details better than TV. This suggests that the regularization for the deep priors we used in this work is analogous to that of traditional regularization using TV, where good performance is achieved by finding images balancing data-consistency and the distance to the fixed-points of $\Dsf$ but not by returning the fixed-points of $\Dsf$ directly. Note that for some denoisers it might be desirable to directly return the images at the fixed points. For example, consider a denoiser that projects vectors to a set of natural images; the fixed-points of such denoiser are natural images.

We provide additional visualizations of the solutions produced by PnP/RED and various baseline methods considered in our work. Fig.~\ref{Fig:nature_mri02} (top) reports the visual comparison of multiple methods on Set11 with CS ratios of $10\%$, while Fig.~\ref{Fig:nature_mri02} (bottom) reports the comparison on medical brain images for CS-MRI with under-sampling ratios of $20\%$. Fig.~\ref{Fig:BSD} illustrates the numerical comparison on BSD68 for CS ratios of $30\%$ (top) and $10\%$ (bottom), respectively. Fig.~\ref{Fig:visface02} reports the visual comparison between PnP/RED and two CS methods based on StyleGAN2. Note that in all figures, PnP/RED achieves competitive results, with PnP (AR) achieving superior reconstruction results compared to PnP (denoising).

\begin{figure}[t]
\centering\includegraphics[width=0.95\textwidth]{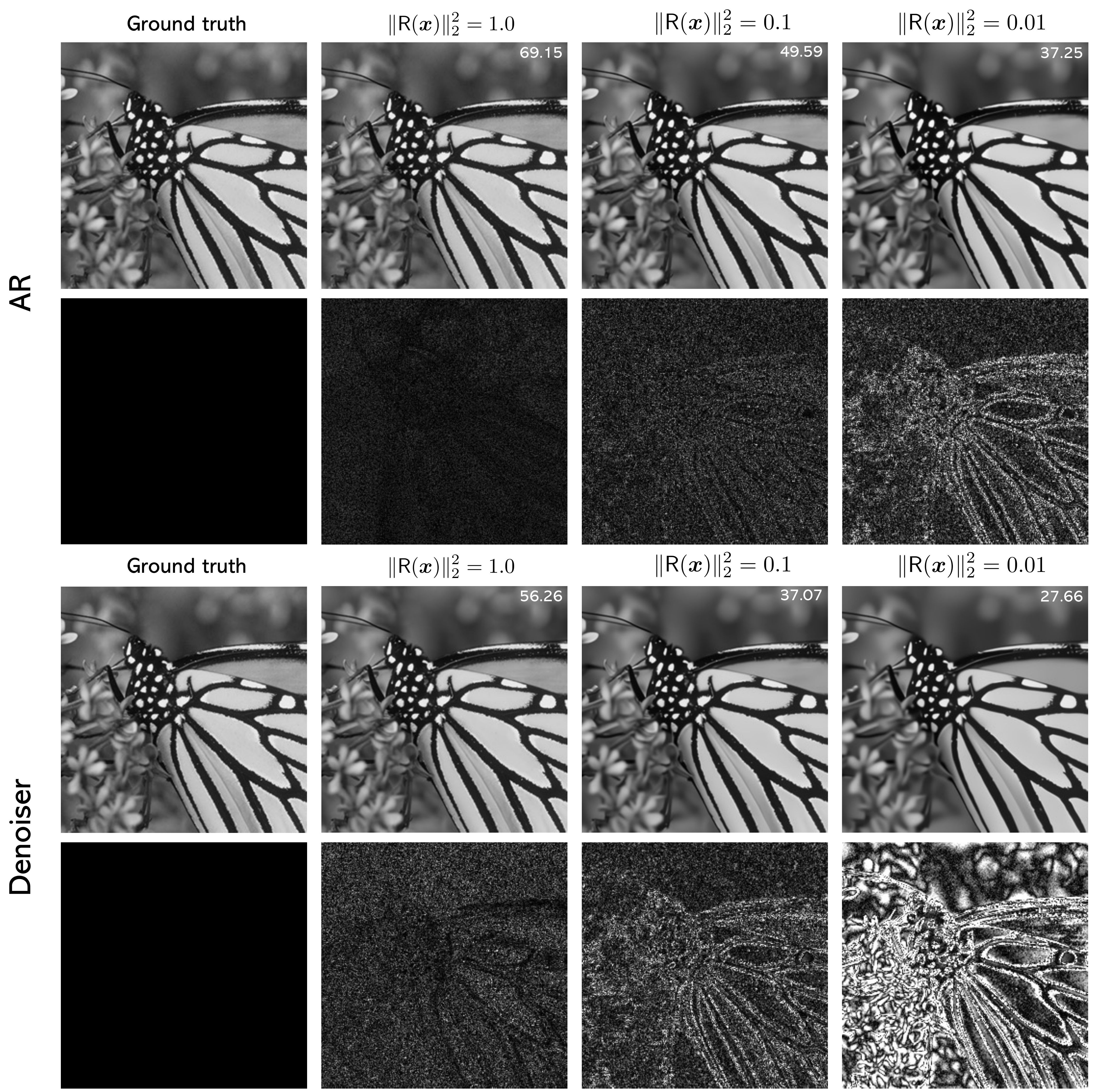}
\caption{~\emph{Visual comparison of running fixed-point iterations of the AR operators and denoisers used in the main paper when applied to the Set11. Table~\ref{Tab:table2} additionally provides PSNR (dB) for different values of $\|\Rsf{\xbm}\|_2^2$ using TV as a reference. The error residual to the ground truth images was amplified 30$\times$ and showed in grayscale for better visualization.}}
\label{Fig:fixpointiter}
\end{figure}

\begin{figure}[h!]
\centering\includegraphics[width=\textwidth]{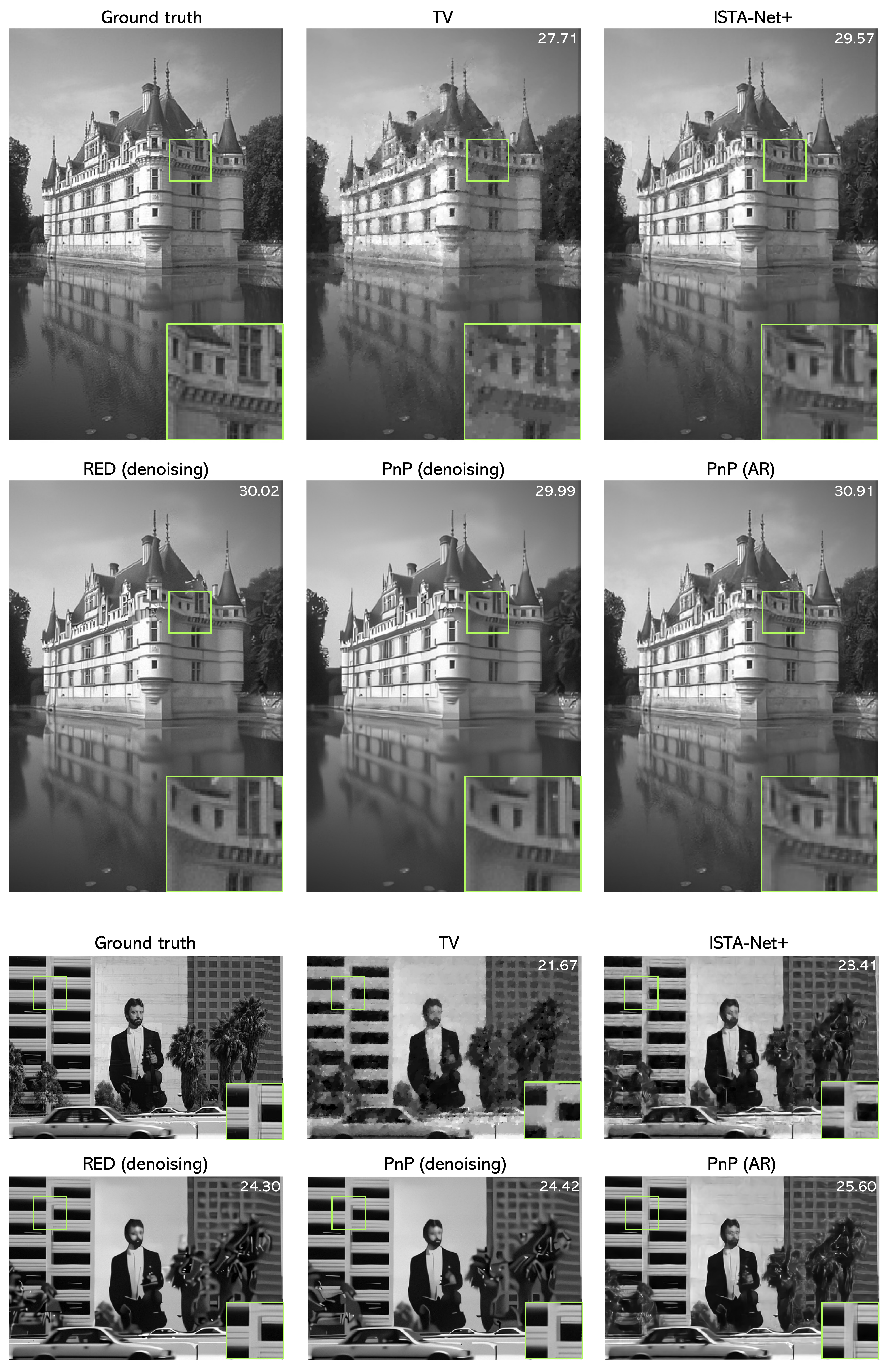}
\caption{\emph{Supplementary visual comparisons between various methods on BSD68. Top: Results at  $30\%$ sampling ratio. Bottom: Results at $10\%$ sampling ratio.}}
\label{Fig:BSD}
\end{figure}

\begin{figure}[h!]
\centering\includegraphics[width=\textwidth]{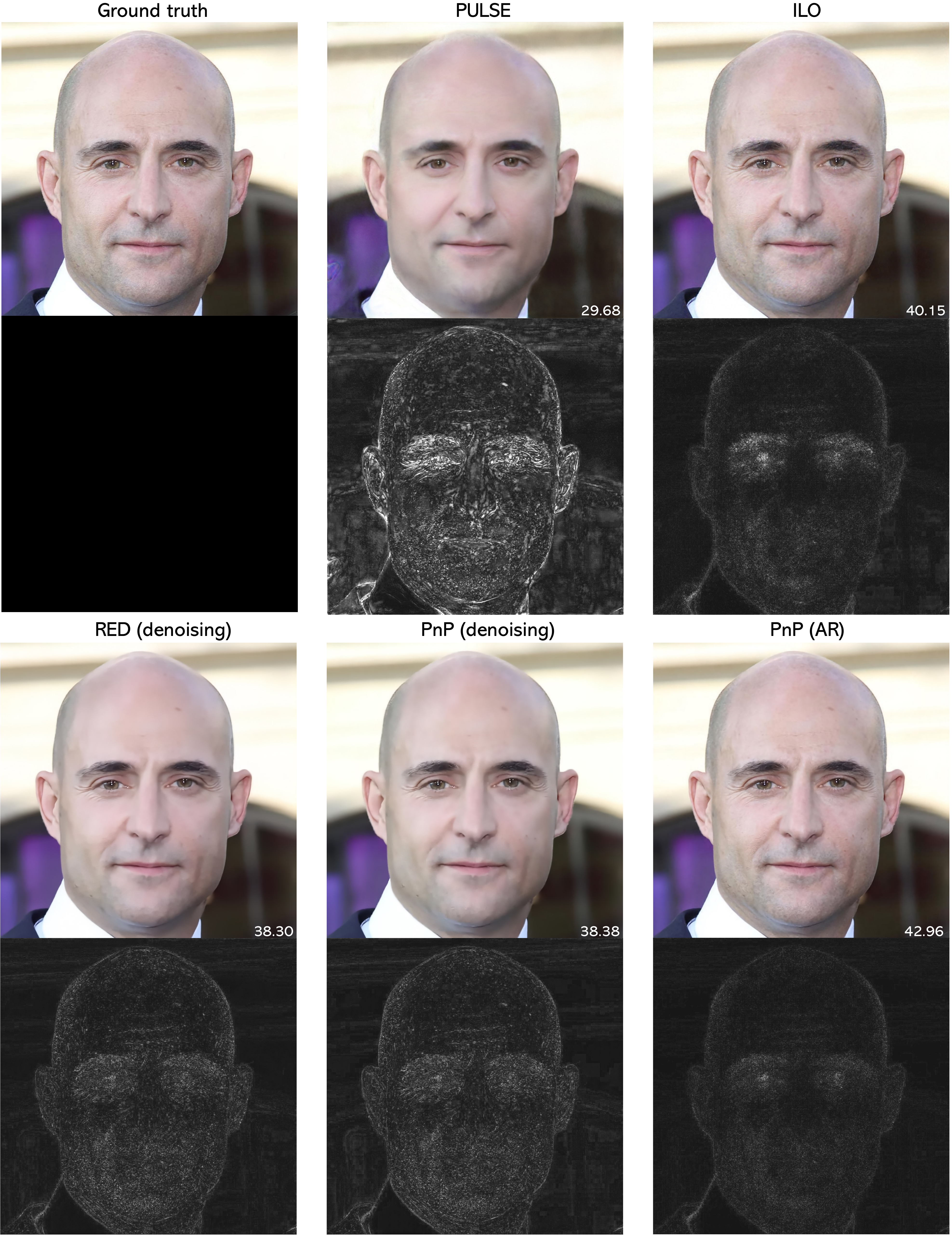}
\caption{~\emph{Visual comparison between PnP/RED and two methods using generative models, when applied to the CelebA HQ dataset at $10\%$ sampling ratio. The error residuals relative to the ground truth images were amplified 10$\times$ and showed in grayscale for better visualization. Note the similarity between the RED and PnP solutions. PnP (AR) leads to sharper images, comparable to those obtained by ILO with StyleGAN2. This highlights the benefit of using pre-trained AR operators.}}
\label{Fig:visface02}
\end{figure}

\begin{figure}[h!]
\centering\includegraphics[width=\textwidth]{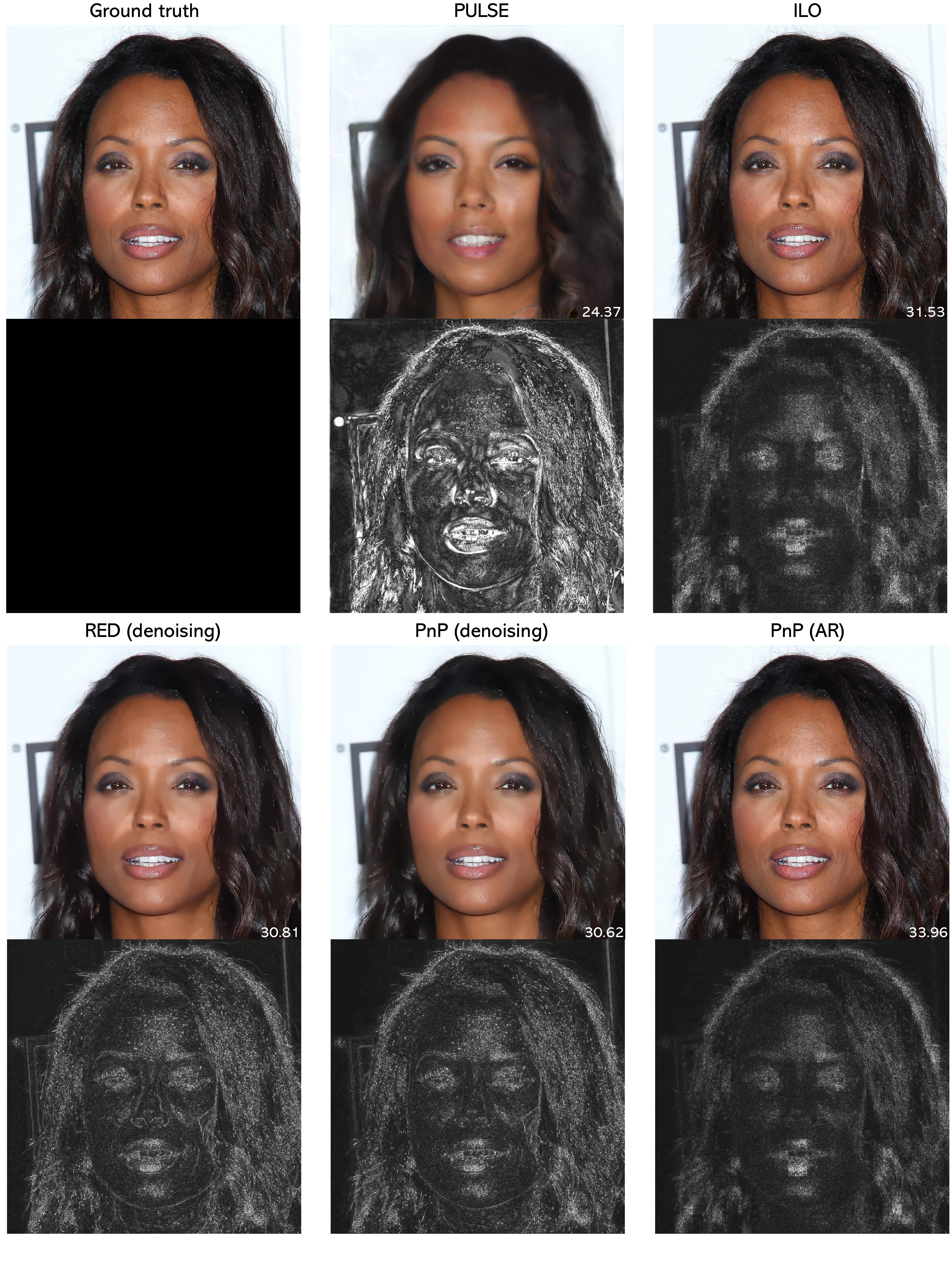}
\caption{~\emph{Visual comparison between PnP/RED and two methods using generative models, when applied to the CelebA HQ dataset at $10\%$ sampling ratio.  The error residuals to relative to the ground truth images were amplified 7$\times$ and showed in grayscale for better visualization. Note the similarity between the RED and PnP solutions. PnP (AR) leads to sharper images, comparable to those obtained by ILO with StyleGAN2. This highlights the benefit of using pre-trained AR operators.}}
\label{Fig:visface02}
\end{figure}

 \begin{table}[t]
\caption{Average PSNR (dB) values for different values of $\|\Rsf(\xbm)\|_2^2$ on Set 11.}
    \centering
    \renewcommand\arraystretch{1.2}
    {\footnotesize
    \scalebox{1}{
    \begin{tabular*}{10.5cm}{L{80pt}||C{80pt}lC{80pt}lC{80pt}|}
        \hline
         \diagbox{\bf CS Ratio}{\bf $\|\Rsf(\xbm)\|_2^2$} & \textbf{1.0} & \textbf{  0.10} & \textbf{0.001}\\ \hline\hline
         \textbf{AR}     & 72.72  &  43.98 & 35.41 \\
         \textbf{Denoiser}       & 65.76  &   35.37 & 23.81\\
         \textbf{TV}      & 14.45  & 14.24  & 13.84 \\\hline
    \end{tabular*}}
    }
\label{Tab:table2}
\end{table}


\begin{thebibliography}{10}

\bibitem{Candes.etal2006}
E.~J. Cand{\`e}s, J.~Romberg, and T.~Tao,
\newblock ``Robust uncertainty principles: Exact signal reconstruction from
  highly incomplete frequency information,''
\newblock {\em IEEE Trans. Inf. Theory}, vol. 52, no. 2, pp. 489--509, February
  2006.

\bibitem{Donoho2006}
D.~L. Donoho,
\newblock ``Compressed sensing,''
\newblock {\em IEEE Trans. Inf. Theory}, vol. 52, no. 4, pp. 1289--1306, April
  2006.

\bibitem{Bickel.etal2009}
P.~T. Bickel, R.~Ya'acov, and T.~B. Alexandre,
\newblock ``Simultaneous analysis of lasso and {D}antzig selector,''
\newblock {\em Ann. Statist.}, vol. 37, no. 4, pp. 1705--1732, 2009.

\bibitem{Wainwright2019}
M.~J. Wainwright,
\newblock {\em High-Dimensional Statistics: A Non-Asymptotic Viewpoint},
\newblock Cambridge Series in Statistical and Probabilistic Mathematics.
  Cambridge University Press, 2019.

\bibitem{Bora.etal2017}
A.~Bora, A.~Jalal, E.~Price, and A.~G. Dimakis,
\newblock ``Compressed sensing using generative priors,''
\newblock in {\em Proc. 34th Int. Conf. Machine Learning ({ICML})}, Sydney,
  Australia, Aug. 2017, pp. 537--546.

\bibitem{Shah.Hegde2018}
V.~Shah and C.~Hegde,
\newblock ``Solving linear inverse problems using {GAN} priors: {A}n algorithm
  with provable guarantees,''
\newblock in {\em Proc. IEEE Int. Conf. Acoustics, Speech and Signal Process.},
  Calgary, AB, Canada, Apr. 2018, pp. 4609--4613.

\bibitem{Hegde2018}
C.~Hegde,
\newblock ``Algorithmic aspects of inverse problems using generative models,''
\newblock in {\em Proc. Allerton Conf. Communication, Control, and Computing},
  Monticellu, IL, USA, Oct. 2018, pp. 166--172.

\bibitem{Latorre.etal2019}
F.~Latorre, A.~Eftekhari, and V.~Cevher,
\newblock ``Fast and provable {ADMM} for learning with generative priors,''
\newblock in {\em Advances in Neural Information Processing Systems 33},
  Vancouver, BC, USA, December 8-14, 2019, pp. 12027--12039.

\bibitem{Hyder.etal2019}
R.~Hyder, V.~Shah, C.~Hegde, and M.~S. Asif,
\newblock ``Alternating phase projected gradient descent with generative priors
  for solving compressive phase retrieval,''
\newblock in {\em Proc. IEEE Int. Conf. Acoustics, Speech and Signal Process.},
  Brighton, UK, May 2019, pp. 7705--7709.

\bibitem{Venkatakrishnan.etal2013}
S.~V. Venkatakrishnan, C.~A. Bouman, and B.~Wohlberg,
\newblock ``Plug-and-play priors for model based reconstruction,''
\newblock in {\em Proc. IEEE Global Conf. Signal Process. and Inf. Process.
  ({GlobalSIP})}, Austin, TX, USA, December 3-5, 2013, pp. 945--948.

\bibitem{Sreehari.etal2016}
S.~Sreehari, S.~V. Venkatakrishnan, B.~Wohlberg, G.~T. Buzzard, L.~F. Drummy,
  J.~P. Simmons, and C.~A. Bouman,
\newblock ``Plug-and-play priors for bright field electron tomography and
  sparse interpolation,''
\newblock {\em IEEE Trans. Comp. Imag.}, vol. 2, no. 4, pp. 408--423, December
  2016.

\bibitem{Romano.etal2017}
Y.~Romano, M.~Elad, and P.~Milanfar,
\newblock ``The little engine that could: {R}egularization by denoising
  ({RED}),''
\newblock {\em SIAM J. Imaging Sci.}, vol. 10, no. 4, pp. 1804--1844, 2017.

\bibitem{Zhang.etal2017a}
K.~Zhang, W.~Zuo, S.~Gu, and L.~Zhang,
\newblock ``Learning deep {CNN} denoiser prior for image restoration,''
\newblock in {\em Proc. {IEEE} Conf. Computer Vision and Pattern Recognition
  ({CVPR})}, Honolulu, HI, USA, July 2017, pp. 2808--2817.

\bibitem{Metzler.etal2018}
C.~A. Metzler, P.~Schniter, A.~Veeraraghavan, and R.~G. Baraniuk,
\newblock ``pr{D}eep: {R}obust phase retrieval with a flexible deep network,''
\newblock in {\em Proc. 35th Int. Conf. Machine Learning ({ICML})}, Stockholm,
  Sweden, June 2018, pp. 3501--3510.

\bibitem{Dong.etal2019}
W.~Dong, P.~Wang, W.~Yin, G.~Shi, F.~Wu, and X.~Lu,
\newblock ``Denoising prior driven deep neural network for image restoration,''
\newblock {\em IEEE Trans. Patt. Anal. and Machine Intell.}, vol. 41, no. 10,
  pp. 2305--2318, Oct. 2019.

\bibitem{Zhang.etal2019}
K.~Zhang, W.~Zuo, and L.~Zhang,
\newblock ``Deep plug-and-play super-resolution for arbitrary blur kernels,''
\newblock in {\em Proc. {IEEE} Conf. Computer Vision and Pattern Recognition
  ({CVPR})}, Long Beach, CA, USA, June 16-20, 2019, pp. 1671--1681.

\bibitem{Ahmad.etal2020}
R.~{Ahmad}, C.~A. {Bouman}, G.~T. {Buzzard}, S.~{Chan}, S.~{Liu}, E.~T.
  {Reehorst}, and P.~{Schniter},
\newblock ``Plug-and-play methods for magnetic resonance imaging: Using
  denoisers for image recovery,''
\newblock {\em IEEE Signal Processing Magazine}, vol. 37, no. 1, pp. 105--116,
  2020.

\bibitem{Wei.etal2020}
K.~Wei, A.~Aviles-Rivero, J.~Liang, Y.~Fu, C.-B. Sch\"onlieb, and H.~Huang,
\newblock ``Tuning-free plug-and-play proximal algorithm for inverse imaging
  problems,''
\newblock in {\em Proc. 37th Int. Conf. Machine Learning ({ICML})}, 2020.

\bibitem{Chan.etal2016}
S.~H. Chan, X.~Wang, and O.~A. Elgendy,
\newblock ``Plug-and-play {ADMM} for image restoration: Fixed-point convergence
  and applications,''
\newblock {\em IEEE Trans. Comp. Imag.}, vol. 3, no. 1, pp. 84--98, March 2017.

\bibitem{Meinhardt.etal2017}
T.~Meinhardt, M.~Moeller, C.~Hazirbas, and D.~Cremers,
\newblock ``Learning proximal operators: {U}sing denoising networks for
  regularizing inverse imaging problems,''
\newblock in {\em Proc. IEEE Int. Conf. Comp. Vis. (ICCV)}, Venice, Italy, Oct.
  2017, pp. 1799--1808.

\bibitem{Buzzard.etal2017}
G.~T. Buzzard, S.~H. Chan, S.~Sreehari, and C.~A. Bouman,
\newblock ``Plug-and-play unplugged: {O}ptimization free reconstruction using
  consensus equilibrium,''
\newblock {\em SIAM J. Imaging Sci.}, vol. 11, no. 3, pp. 2001--2020, 2018.

\bibitem{Reehorst.Schniter2019}
E.~T. Reehorst and P.~Schniter,
\newblock ``Regularization by denoising: {C}larifications and new
  interpretations,''
\newblock {\em IEEE Trans. Comput. Imag.}, vol. 5, no. 1, pp. 52--67, Mar.
  2019.

\bibitem{Ryu.etal2019}
E.~K. Ryu, J.~Liu, S.~Wnag, X.~Chen, Z.~Wang, and W.~Yin,
\newblock ``Plug-and-play methods provably converge with properly trained
  denoisers,''
\newblock in {\em Proc. 36th Int. Conf. Machine Learning (ICML)}, Long Beach,
  CA, USA, June 2019, pp. 5546--5557.

\bibitem{Sun.etal2018a}
Y.~Sun, B.~Wohlberg, and U.~S. Kamilov,
\newblock ``An online plug-and-play algorithm for regularized image
  reconstruction,''
\newblock {\em IEEE Trans. Comput. Imag.}, vol. 5, no. 3, pp. 395--408, Sept.
  2019.

\bibitem{Tirer.Giryes2019}
T.~Tirer and R.~Giryes,
\newblock ``Image restoration by iterative denoising and backward
  projections,''
\newblock {\em IEEE Trans. Image Process.}, vol. 28, no. 3, pp. 1220--1234,
  2019.

\bibitem{Teodoro.etal2019}
A.~M. Teodoro, J.~M. Bioucas-Dias, and {M. A. T.} Figueiredo,
\newblock ``A convergent image fusion algorithm using scene-adapted
  {G}aussian-mixture-based denoising,''
\newblock {\em IEEE Trans. Image Process.}, vol. 28, no. 1, pp. 451--463, Jan.
  2019.

\bibitem{Xu.etal2020}
X.~Xu, Y.~Sun, J.~Liu, B.~Wohlberg, and U.~S. Kamilov,
\newblock ``Provable convergence of plug-and-play priors with {MMSE}
  denoisers,''
\newblock {\em IEEE Signal Process. Lett.}, vol. 27, pp. 1280--1284, 2020.

\bibitem{Sun.etal2021}
Y.~Sun, Z.~Wu, B.~Wohlberg, and U.~S. Kamilov,
\newblock ``Scalable plug-and-play {ADMM} with convergence guarantees,''
\newblock {\em IEEE Trans. Comput. Imag.}, vol. 7, pp. 849--863, July 2021.

\bibitem{Sun.etal2019b}
Y.~Sun, J.~Liu, and U.~S. Kamilov,
\newblock ``Block coordinate regularization by denoising,''
\newblock in {\em Proc. Advances in Neural Information Processing Systems 33},
  Vancouver, BC, Canada, Dec. 2019, pp. 382--392.

\bibitem{Cohen.etal2020}
R.~Cohen, M.~Elad, and P.~Milanfar,
\newblock ``Regularization by denoising via fixed-point projection (red-pro),''
\newblock {\em SIAM Journal on Imaging Sciences}, vol. 14, no. 3, pp.
  1374--1406, 2021.

\bibitem{Rudin.etal1992}
L.~I. Rudin, S.~Osher, and E.~Fatemi,
\newblock ``Nonlinear total variation based noise removal algorithms,''
\newblock {\em Physica D}, vol. 60, no. 1--4, pp. 259--268, November 1992.

\bibitem{Bioucas-Dias.Figueiredo2007}
J.~M. Bioucas-Dias and M.~A.~T. Figueiredo,
\newblock ``A new {T}w{IST}: {T}wo-step iterative shrinkage/thresholding
  algorithms for image restoration,''
\newblock {\em IEEE Trans. Image Process.}, vol. 16, no. 12, pp. 2992--3004,
  December 2007.

\bibitem{Beck.Teboulle2009a}
A.~Beck and M.~Teboulle,
\newblock ``Fast gradient-based algorithm for constrained total variation image
  denoising and deblurring problems,''
\newblock {\em IEEE Trans. Image Process.}, vol. 18, no. 11, pp. 2419--2434,
  November 2009.

\bibitem{Karras.etal2019}
T.~Karras, S.~Laine, and T.~Aila,
\newblock ``A style-based generator architecture for generative adversarial
  networks,''
\newblock in {\em Proc. {IEEE} Conf. Computer Vision and Pattern Recognition
  ({CVPR})}, Long Beach, CA, USA, June 2019, pp. 4396--4405.

\bibitem{Karras.etal2020}
T.~Karras, S.~Laine, M.~Aittala, J.~Hellsten, J.~Lehtinen, and T.~Aila,
\newblock ``Analyzing and improving the image quality of {StyleGAN},''
\newblock in {\em Proc. {IEEE} Conf. Computer Vision and Pattern Recognition
  ({CVPR})}, Seattle, WA, USA, June 2020, pp. 8107--8116.

\bibitem{Menon.etal2020}
S.~Menon, A.~Damian, S.~Hu, N.~Ravi, and C.~Rudin,
\newblock ``{PULSE}: Self-supervised photo upsampling via latent space
  exploration of generative models,''
\newblock in {\em Proc. {IEEE} Conf. Computer Vision and Pattern Recognition
  ({CVPR})}, Seattle, WA, USA, June 2020, pp. 2434--2442.

\bibitem{Daras.etal2021}
G.~Daras, J.~Dean, A.~Jalal, and A.~G. Dimakis,
\newblock ``Intermediate layer optimizationfor inverse problems using deep
  generative models,''
\newblock 2021,
\newblock arXiv:2102.07364.

\bibitem{Candes.Wakin2008}
E.~Cand{\`e}s and M.~B. Wakin,
\newblock ``An introduction to compressive sensing,''
\newblock {\em IEEE Signal Process. Mag.}, vol. 25, no. 2, pp. 21--30, March
  2008.

\bibitem{Parikh.Boyd2014}
N.~Parikh and S.~Boyd,
\newblock ``Proximal algorithms,''
\newblock {\em Foundations and Trends in Optimization}, vol. 1, no. 3, pp.
  123--231, 2014.

\bibitem{Dabov.etal2007}
K.~Dabov, A.~Foi, V.~Katkovnik, and K.~Egiazarian,
\newblock ``Image denoising by sparse {3-D} transform-domain collaborative
  filtering,''
\newblock {\em IEEE Trans. Image Process.}, vol. 16, no. 16, pp. 2080--2095,
  August 2007.

\bibitem{Zhang.etal2017}
K.~Zhang, W.~Zuo, Y.~Chen, D.~Meng, and L.~Zhang,
\newblock ``Beyond a {G}aussian denoiser: {R}esidual learning of deep {CNN} for
  image denoising,''
\newblock {\em IEEE Trans. Image Process.}, vol. 26, no. 7, pp. 3142--3155,
  July 2017.

\bibitem{Figueiredo.Nowak2003}
M.~A.~T. Figueiredo and R.~D. Nowak,
\newblock ``An {EM} algorithm for wavelet-based image restoration,''
\newblock {\em IEEE Trans. Image Process.}, vol. 12, no. 8, pp. 906--916,
  August 2003.

\bibitem{Daubechies.etal2004}
I.~Daubechies, M.~Defrise, and C.~De Mol,
\newblock ``An iterative thresholding algorithm for linear inverse problems
  with a sparsity constraint,''
\newblock {\em Commun. Pure Appl. Math.}, vol. 57, no. 11, pp. 1413--1457,
  November 2004.

\bibitem{Bect.etal2004}
J.~Bect, L.~Blanc-Feraud, G.~Aubert, and A.~Chambolle,
\newblock ``A $\ell_1$-unified variational framework for image restoration,''
\newblock in {\em Proc. {ECCV}}, Springer, Ed., New York, 2004, vol. 3024, pp.
  1--13.

\bibitem{Beck.Teboulle2009}
A.~Beck and M.~Teboulle,
\newblock ``A fast iterative shrinkage-thresholding algorithm for linear
  inverse problems,''
\newblock {\em SIAM J. Imaging Sciences}, vol. 2, no. 1, pp. 183--202, 2009.

\bibitem{Kamilov.etal2017}
U.~S. Kamilov, H.~Mansour, and B.~Wohlberg,
\newblock ``A plug-and-play priors approach for solving nonlinear imaging
  inverse problems,''
\newblock {\em IEEE Signal Process. Lett.}, vol. 24, no. 12, pp. 1872--1876,
  December 2017.

\bibitem{Mataev.etal2019}
G.~Mataev, M.~Elad, and P.~Milanfar,
\newblock ``{DeepRED}: {D}eep image prior powered by {RED},''
\newblock in {\em Proc. {IEEE} Int. Conf. Comp. Vis. Workshops ({ICCVW})},
  Seoul, South Korea, Oct 27-Nov 2, 2019, pp. 1--10.

\bibitem{Liu.etal2020}
J.~Liu, Y.~Sun, C.~Eldeniz, W.~Gan, H.~An, and U.~S. Kamilov,
\newblock ``{RARE}: {I}mage reconstruction using deep priors learned without
  ground truth,''
\newblock {\em IEEE J. Sel. Topics Signal Process.}, vol. 14, no. 6, pp.
  1088--1099, 2020.

\bibitem{Xie.etal2021}
M.~Xie, J.~Liu, Y.~Sun, W.~Gan, B.~Wohlberg, and U.~S. Kamilov,
\newblock ``Joint reconstruction and calibration using regularization by
  denoising with application to computed tomography,''
\newblock in {\em Proc. {IEEE} Int. Conf. Comp. Vis. Workshops ({ICCVW})},
  October 2021, pp. 4028--4037.

\bibitem{McCann.etal2017}
M.~T. McCann, K.~H. Jin, and M.~Unser,
\newblock ``Convolutional neural networks for inverse problems in imaging: A
  review,''
\newblock {\em IEEE Signal Process. Mag.}, vol. 34, no. 6, pp. 85--95, 2017.

\bibitem{Lucas.etal2018}
A.~Lucas, M.~Iliadis, R.~Molina, and A.~K. Katsaggelos,
\newblock ``Using deep neural networks for inverse problems in imaging:
  {B}eyond analytical methods,''
\newblock {\em IEEE Signal Process. Mag.}, vol. 35, no. 1, pp. 20--36, Jan.
  2018.

\bibitem{Knoll.etal2020}
F.~Knoll, K.~Hammernik, C.~Zhang, S.~Moeller, T.~Pock, D.~K. Sodickson, and
  M.~Akcakaya,
\newblock ``Deep-learning methods for parallel magnetic resonance imaging
  reconstruction: {A} survey of the current approaches, trends, and issues,''
\newblock {\em IEEE Signal Process. Mag.}, vol. 37, no. 1, pp. 128--140, Jan.
  2020.

\bibitem{Ongie.etal2020}
G.~Ongie, A.~Jalal, C.~A. Metzler, R.~G. Baraniuk, A.~G. Dimakis, and
  R.~Willett,
\newblock ``Deep learning techniques for inverse problems in imaging,''
\newblock {\em IEEE J. Sel. Areas Inf. Theory}, vol. 1, no. 1, pp. 39--56, May
  2020.

\bibitem{Tan.etal2015}
J.~Tan, Y.~Ma, and D.~Baron,
\newblock ``Compressive imaging via approximate message passing with image
  denoising,''
\newblock {\em IEEE Trans. Signal Process.}, vol. 63, no. 8, pp. 2085--2092,
  Apr. 2015.

\bibitem{Metzler.etal2016}
C.~A. Metzler, A.~Maleki, and R.~G. Baraniuk,
\newblock ``From denoising to compressed sensing,''
\newblock {\em IEEE Trans. Inf. Theory}, vol. 62, no. 9, pp. 5117--5144,
  September 2016.

\bibitem{Metzler.etal2016a}
C.~A. Metzler, A.~Maleki, and R.~Baraniuk,
\newblock ``{BM3D}-{PRGAMP}: {C}ompressive phase retrieval based on {BM3D}
  denoising,''
\newblock in {\em Proc. {IEEE} Int. Conf. Image Proc.}, Phoenix, AZ, USA,
  September 25-28, 2016, pp. 2504--2508.

\bibitem{Fletcher.etal2018}
A.~Fletcher, S.~Rangan, S.~Sarkar, and P.~Schniter,
\newblock ``Plug-in estimation in high-dimensional linear inverse problems: {A}
  rigorous analysis,''
\newblock in {\em Proc. Advances in Neural Information Processing Systems 32},
  Montr\'eal, Canada, Dec 3-8, 2018, pp. 7451--7460.

\bibitem{Gregor.LeCun2010}
K.~Gregor and Y.~LeCun,
\newblock ``Learning fast approximation of sparse coding,''
\newblock in {\em Proc. 27th Int. Conf. Machine Learning (ICML)}, Haifa,
  Israel, June 21-24, 2010, pp. 399--406.

\bibitem{Schmidt.Roth2014}
U.~Schmidt and S.~Roth,
\newblock ``Shrinkage fields for effective image restoration,''
\newblock in {\em Proc. {IEEE} Conf. Computer Vision and Pattern Recognition
  ({CVPR})}, Columbus, OH, USA, June 23-28, 2014, pp. 2774--2781.

\bibitem{Chen.etal2015}
Y.~Chen, W.~Yu, and T.~Pock,
\newblock ``On learning optimized reaction diffuction processes for effective
  image restoration,''
\newblock in {\em Proc. {IEEE} Conf. Computer Vision and Pattern Recognition
  ({CVPR})}, Boston, MA, USA, June 8-10, 2015, pp. 5261--5269.

\bibitem{Yang.etal2016}
Y.~Yang, J.~Sun, H.~Li, and Z.~Xu,
\newblock ``Deep {ADMM}-{N}et for compressive sensing {MRI},''
\newblock in {\em Proc. Advances in Neural Information Processing Systems 29},
  2016, pp. 10--18.

\bibitem{zhang2018ista}
J.~{Zhang} and B.~{Ghanem},
\newblock ``{ISTA-Net}: {I}nterpretable optimization-inspired deep network for
  image compressive sensing,''
\newblock in {\em Proc. {IEEE} Conf. Computer Vision and Pattern Recognition
  ({CVPR})}, 2018, pp. 1828--1837.

\bibitem{Aggarwal.etal2019}
H.~K. {Aggarwal}, M.~P. {Mani}, and M.~{Jacob},
\newblock ``{MoDL}: {M}odel-based deep learning architecture for inverse
  problems,''
\newblock {\em IEEE Trans. Med. Imag.}, vol. 38, no. 2, pp. 394--405, Feb.
  2019.

\bibitem{Liu.etal2021}
J.~Liu, Y.~Sun, W.~Gan, B.~Wohlberg, and U.~S. Kamilov,
\newblock ``{SGD-Net}: {E}fficient model-based feep learning with theoretical
  guarantees,''
\newblock {\em IEEE Trans. Comput. Imaging}, vol. 7, pp. 598--610, 2021.

\bibitem{Ulyanov.etal2018}
D.~Ulyanov, A.~Vedaldi, and V.~Lempitsky,
\newblock ``Deep image prior,''
\newblock in {\em Proc. {IEEE} Conf. Computer Vision and Pattern Recognition
  ({CVPR})}, Salt Lake City, UT, USA, June 18-22, 2018, pp. 9446--9454.

\bibitem{heckel2018deep}
R.~Heckel and P.~Hand,
\newblock ``Deep decoder: {C}oncise image representations from untrained
  non-convolutional networks,''
\newblock in {\em Int. Conf. on Learning Representations (ICLR)}, 2018.

\bibitem{Terris.etal2020}
M.~Terris, A.~Repetti, J.-C. Pesquet, and Y.~Wiaux,
\newblock ``Building firmly nonexpansive convolutional neural networks,''
\newblock in {\em Proc. IEEE Int. Conf. Acoustics, Speech and Signal Process.},
  Barcelona, Spain, May 2020, pp. 8658--8662.

\bibitem{Miyato.etal2018}
T.~Miyato, T.~Kataoka, M.~Koyama, and Y.~Yoshida,
\newblock ``Spectral normalization for generative adversarial networks,''
\newblock in {\em Int. Conf. on Learning Representations ({ICLR})}, Vancouver,
  Canada, Apr. 2018.

\bibitem{Fazlyab.etal2019}
M.~Fazlyab, A.~Robey, Hassani. H., M.~Marari, and G.~Pappas,
\newblock ``Efficient and accurate estimation of {L}ipschitz constants for deep
  neural networks,''
\newblock in {\em Proc. Advances in Neural Information Processing Systems 33},
  Vancouver, BC, Canada, Dec. 2019, pp. 11427--11438.

\bibitem{Elad2010}
M.~Elad,
\newblock {\em Sparse and Redundant Representations},
\newblock Springer, 2010.

\bibitem{Foucart.Rauhut2013}
S.~Foucart and H.~Rauhut,
\newblock {\em A Methematical Introduction to Compressive Sensing},
\newblock Birkhauser, 2013.

\bibitem{Vershynin2018}
R.~Vershynin,
\newblock {\em High-Dimensional Probability: {A}n Introduction with
  Applications in Data Science},
\newblock Cambridge Series in Statistical and Probabilistic Mathematics.
  Cambridge University Press, 2018.

\bibitem{Bauschke.Combettes2017}
H.~H. Bauschke and P.~L. Combettes,
\newblock {\em Convex Analysis and Monotone Operator Theory in Hilbert Spaces},
\newblock Springer, 2 edition, 2017.

\bibitem{Gilton.etal2021}
D.~Gilton, G.~Ongie, and R.~Willett,
\newblock ``Deep equilibrium architectures for inverse problems in imaging,''
\newblock 2021,
\newblock arXiv:2102.07944.

\bibitem{Xu.etal2021}
X.~Xu, J.~Liu, Y.~Sun, B.~Wohlberg, and U.S. Kamilov,
\newblock ``Boosting the performance of plug-and-play priors via denoiser
  scaling,''
\newblock in {\em 2020 54th Asilomar Conference on Signals, Systems, and
  Computers}. IEEE, 2020, pp. 1305--1312.

\bibitem{Mousavi.etal2015}
A.~Mousavi, A.~B. Patel, and R.~G. Baraniuk,
\newblock ``A deep learning approach to structured signal recovery,''
\newblock in {\em Proc. Allerton Conf. Communication, Control, and Computing},
  Allerton Park, IL, USA, September 30-October 2, 2015, pp. 1336--1343.

\bibitem{Kulkarni.etal2016}
K.~Kulkarni, S.~Lohit, P.~Turaga, R.~Kerviche, and A.~Ashok,
\newblock ``{ReconNet}: {N}on-iterative reconstruction of images from
  compressively sensed measurements,''
\newblock in {\em Proc. {IEEE} Conf. Computer Vision and Pattern Recognition
  ({CVPR})}, Las Vegas, NV, USA, June 2016, pp. 449--458.

\bibitem{Martin.etal2001}
D.~Martin, C.~Fowlkes, D.~Tal, and J.~Malik,
\newblock ``A database of human segmented natural images and its application to
  evaluating segmentation algorithms and measuring ecological statistics,''
\newblock in {\em Proc. {IEEE} Int. Conf. Comp. Vis. ({ICCV})}, Vancouver,
  Canada, July 7-14, 2001, pp. 416--423.

\bibitem{Agustsson.etal2017}
E.~Agustsson and R.~Timofte,
\newblock ``Ntire 2017 challenge on single image super-resolution: Dataset and
  study,''
\newblock in {\em Proc. {IEEE} Conf. Computer Vision and Pattern Recognition
  ({CVPR}) Workshops}, July 2017, vol.~3, pp. 126--135.

\bibitem{Lustig.etal2007}
M.~Lustig, D.~L. Donoho, and J.~M. Pauly,
\newblock ``Sparse {MRI}: The application of compressed sensing for rapid {MR}
  imaging,''
\newblock {\em Magn. Reson. Med.}, vol. 58, no. 6, pp. 1182--1195, December
  2007.

\bibitem{Lustig.etal2008}
M.~Lustig, D.~L. Donoho, J.~M. Santos, and J.~M. Pauly,
\newblock ``Compressed sensing {MRI},''
\newblock {\em IEEE Signal Process. Mag.}, vol. 25, no. 2, pp. 72--82, 2008.

\bibitem{karras.etal2018}
T.~Karras, T.~Aila, S.~Laine, and J.~Lehtinen,
\newblock ``Progressive growing of {GAN}s for improved quality, stability, and
  variation,''
\newblock in {\em Int. Conf. on Learning Representations (ICLR)}, 2018.

\bibitem{Nesterov2004}
Y.~Nesterov,
\newblock {\em Introductory Lectures on Convex Optimization: A Basic Course},
\newblock Kluwer Academic Publishers, 2004.

\bibitem{Eslahi.Foi2018}
N.~Eslahi and A.~Foi,
\newblock ``Anisotropic spatiotemporal regularization in compressive video
  recovery by adaptively modeling the residual errors as correlated noise,''
\newblock in {\em IEEE Image, Video, and Multidimensional Signal Processing
  Workshop}, Zagorochoria, Greece, June 2018, pp. 1--5.

\bibitem{Bauschke.Combettes2017}
H.~H. Bauschke and P.~L. Combettes,
\newblock {\em Convex Analysis and Monotone Operator Theory in Hilbert Spaces},
\newblock Springer, 2 edition, 2017.

\bibitem{Ryu.Boyd2016}
E.~K. Ryu and S.~Boyd,
\newblock ``A primer on monotone operator methods,''
\newblock {\em Appl. Comput. Math.}, vol. 15, no. 1, pp. 3--43, 2016.

\bibitem{Rockafellar1970}
R.~T. Rockafellar,
\newblock {\em Convex Analysis},
\newblock Princeton Univ. Press, Princeton, NJ, 1970.

\bibitem{Boyd.Vandenberghe2004}
S.~Boyd and L.~Vandenberghe,
\newblock {\em Convex Optimization},
\newblock Cambridge Univ. Press, 2004.

\bibitem{Jain.Kar2017}
P.~Jain and P.~Kar,
\newblock ``Non-convex optimization for machine learning,''
\newblock {\em Foundations and Trends in Machine Learning}, vol. 10, no. 3-4,
  pp. 142--363, 2017.

\end{thebibliography}
\end{document}